\pdfoutput=1
\documentclass[10pt,onecolumn]{article}
\usepackage{arxiv_style}
\newcommand*{\tran}{^\top}
\newcommand*{\dd}{\mathrm{d}}
\newcommand{\vx}{\bm{\mathrm{x}}}
\newcommand{\vU}{\bm{\mathrm{U}}}
\newcommand{\vM}{\bm{\mathrm{M}}}
\newcommand{\vP}{\bm{\mathrm{P}}}
\newcommand{\vK}{\bm{\mathrm{K}}}

\newcommand{\vXi}{\bm{\Xi}}
\newcommand{\va}{\bm{\mathrm{a}}}
\newcommand{\vv}{\bm{\mathrm{v}}}
\newcommand{\vy}{\bm{\mathrm{y}}}
\newcommand{\vA}{\bm{\mathrm{A}}}
\newcommand{\vW}{\bm{\mathrm{W}}}
\newcommand{\vw}{\bm{\mathrm{w}}}
\newcommand{\vs}{\bm{\mathrm{s}}}
\newcommand{\vB}{\bm{\mathrm{B}}}
\newcommand{\vV}{\bm{\mathrm{V}}}
\newcommand{\vX}{\bm{\mathrm{X}}}
\newcommand{\vY}{\bm{\mathrm{Y}}}
\newcommand{\vG}{\bm{\mathrm{G}}}
\newcommand{\vz}{\bm{\mathrm{z}}}
\newcommand{\vI}{\bm{\mathrm{I}}}
\newcommand{\vxi}{\bm{\xi}}
\newcommand{\vphi}{\bm{\phi}}
\newcommand{\vSigma}{\boldsymbol{\Sigma}}
\newcommand{\vtheta}{\bm{\theta}}
\newcommand{\vpsi}{\bm{\psi}}
\newcommand{\vOmega}{\bm{\Omega}}
\newcommand{\vLambda}{\bm{\Lambda}}
\newcommand{\valpha}{\bm{\alpha}}
\newcommand{\vvarphi}{\bm{\varphi}}
\newcommand{\vTheta}{\bm{\Theta}}
\newcommand{\veta}{\bm{\eta}}
\newcommand{\vf}{\bm{\mathrm{f}}}
\newcommand{\vS}{\bm{\mathrm{S}}}
\newcommand{\vu}{\bm{\mathrm{u}}}
\newcommand{\vJ}{\bm{\mathrm{J}}}
\newcommand{\vQ}{\bm{\mathrm{Q}}}
\newcommand{\vR}{\bm{\mathrm{R}}}
\newcommand{\vH}{\bm{\mathrm{H}}}

\newcommand{\vC}{\bm{\mathrm{C}}}
\newcommand{\vD}{\bm{\mathrm{D}}}

\newcommand{\Ltrain}{\mathcal{L}_{\mathrm{train}}}
\newcommand{\Ltest}{\mathcal{L}_{\mathrm{test}}}

\DeclareMathOperator{\Tr}{Tr}

\title{First Learn, Then Memorize:\\ The Spectral Bias of Diffusion Models}

\author[1]{Rapha\"el Urfin%
\thanks{Corresponding author: \href{mailto:raphael.urfin@phys.ens.fr}{\texttt{raphael.urfin@phys.ens.fr}}}%
}
\author[2]{Tony Bonnaire}
\author[1]{Giulio Biroli}
\author[3]{Marc M\'ezard}
\affil[1]{Laboratoire de Physique de l'\'Ecole normale sup\'erieure, ENS, Universit\'e PSL, CNRS, Sorbonne Universit\'e, Universit\'e Paris Cit\'e, F-75005 Paris, France}
\affil[2]{Universit\'e Paris-Saclay, CNRS, Institut d'Astrophysique Spatiale, 91405 Orsay, France}
\affil[3]{Department of Computing Sciences, Bocconi University, Milano, Italy}
\date{\vspace{-7ex}}

\begin{document}
\pagenumbering{arabic}
\maketitle

\selectlanguage{american}

\begin{abstract}

Diffusion models trained on a finite dataset first learn to generate novel, high-quality samples and only much later collapse onto their training set.
We identify the mechanism behind this separation of timescales and the object that probes it. The training dynamics of the score function are governed---exactly, and at any width---by the Gram matrix of the Neural Tangent Kernel (NTK) evaluated on the noisy training data, so the timescales of generalization and of memorization must be encoded in its spectrum. We show that they are, and that the structure responsible has no analogue in standard kernel settings. The use of multiple noise realizations per sample ($m$ noised copies at a fixed noise level) in the score-matching loss is what restructures the Gram matrix spectrum into two distinct parts. The first, of large eigenvalues, carries the global features of the target distribution and is present already for $m=1$. The second, which the repeated noising creates,
consists of the smallest eigenvalues and is supported on eigenvectors aligned with the sample-specific noise directions; it sets a memorization timescale parametrically larger in the training set size $n$. We establish this picture on two fronts. Analytically, we solve the spectrum in the lazy high-dimensional limit for both linear ($n \asymp d$) and polynomial ($n \asymp d^k$) sample complexities, and prove through a bias--variance decomposition that the first bulk minimizes the approximation error while the second drives the error associated with memorization. Empirically, we show the same two-bulk structure in Convolutional NTKs on CelebA and in finite-width U-Nets trained well beyond the lazy regime, and we make the link causal: truncating the Gram matrix at rank $r$ tunes the generalization--memorization transition, and an $L_2$ penalty targeting the second bulk suppresses memorization in feature-learning U-Nets. \looseness=-1
\end{abstract}

\textit{\small\textbf{Keywords: }%
 {Diffusion Models} $|$ {Lazy Regime} $|$ {Memorization} }%
\vspace{1cm}
\section{Introduction}
\label{sect:intro}

Generative diffusion models \citep{sohl-dickstein_15, ho2020,song2019} have established a new state of the art in high-dimensional data generation, achieving unprecedented success across modalities ranging from image \citep{rombach2022latent} and video generation \citep{sora2024} to applications in the natural sciences \citep{Price_2025, Biferale_2024}. Rooted in the principles of out-of-equilibrium statistical mechanics, these models operate via two stochastic processes: a \emph{forward process} that progressively degrades the data into white noise, and a reverse-time generative process, called the \emph{backward process}, which generates new samples by simulating a diffusion process driven by a drift vector field, called the score function \citep{hyvarinen_05, Vincent_2011}, defined precisely as the gradient of the logarithm of the noisy data marginals. Recently, this paradigm has been unified under the broader framework of stochastic interpolants \citep{albergo2023stochastic, lipman_2023, liu2022rectified}, where the noising process can be arbitrary and the generative dynamics can be recast as a deterministic flow. \looseness=-1

Despite their empirical success, we still lack a rigorous theoretical understanding of how the training dynamics finds a score that captures the intrinsic structure of the target distribution. In particular, it remains unclear how highly overparametrized models of the score generate novel, high-quality samples without simply memorizing the finite training set. One prominent line of explanation is implicit dynamical regularization \citep{bonnaire2025diffusionmodelsdontmemorize,Favero2025_bigger}. The learning dynamics is governed by a strict separation of timescales: a fast timescale, $\tau_{\mathrm{gen}}$, during which the model learns the underlying structural features required to generate high-quality samples, and a much longer timescale, $\tau_{\mathrm{mem}}$, at which the network memorizes the dataset. Here, we identify the mechanism behind this separation of timescales, and
the object that reveals it: the spectrum of the Gram matrix of the Neural Tangent Kernel (NTK). \looseness=-1

\paragraph*{Generative diffusion and our setting.}
Standard diffusion models transport a target distribution $P_0$ on $\mathbb{R}^d$ to Gaussian
white noise $\mathcal{N}(0,\vI_d)$ via an Ornstein--Uhlenbeck (OU) \emph{forward process}
$\dd\vx = -\vx\,\dd t + \sqrt{2}\,\dd\vW(t)$, where $\vW(t)$ is a standard Wiener process.
Exact time-reversal of this process corresponds to using a guiding force field which is the exact score function
$\vs_{\text{exact}}(\vx,t)=\nabla_{\vx}\log P_t(\vx)$ \citep{Anderson_1982,
haussmann_1986}, where $P_t$ is the marginal density
at time $t$. Following \citet{hyvarinen_05} and \citet{Vincent_2011}, generation is
performed by using a parametrized score function $\vs(\vx,t)$. Given a dataset
$\mathcal D=\{\vx^\nu\}_{\nu=1,\ldots,n}$ consisting of $n$ i.i.d.\ samples from $P_0$, one
learns this score function by minimizing the Denoising Score Matching (DSM) loss. At fixed
$t$,\footnote{Fixing $t$ is a standard simplification in theoretical works \citep{cui_2024,george_2025}. Each noise level defines a separate regression problem, and
the memorization transition we study is a small-$t$ phenomenon; Sect.~\ref{sect:experiments}
reports a sweep over $t$. For simplicity, we drop the time argument $t$ of the score. \looseness=-1} it reads
\begin{align}\label{eq:m_infty_loss}
  \mathcal{L}_{\mathrm{DSM}}(\vs) = \frac{1}{2nd}\sum_{\nu=1}^n\mathbb{E}_{\vxi}\!
    \left[\left\lVert\vs(e^{-t}\vx^\nu+\sqrt{\Delta_t}\,\vxi)+\frac{\vxi}{\sqrt{\Delta_t}}\right\rVert^2\right],
\end{align}
where $\Delta_t=1-e^{-2t}$ and the expectation is over $\vxi\sim\mathcal{N}(0,\vI_d)$. Its
global minimizer is the \emph{empirical score}, the score of the noised empirical
distribution \citep{pidstrigach2022scorebased,Biroli_2024, li_2024_good_score}, which inevitably collapses the
backward trajectories onto the training samples unless $n$ grows
exponentially with the intrinsic data dimension \citep{Biroli_2024,achilli2024}. \looseness=-1

The expectation in \eqref{eq:m_infty_loss} is never performed exactly. Training proceeds by
stochastic gradient descent over batches of the $n$ samples, drawing a fresh noise $\vxi$
each time a sample is visited. Each sample is thus seen with $m$ different noise realizations, $m$ being the number of epochs.
For the sake of our analysis, we replace the expectation by a finite sum. This will be useful to make comparisons with standard supervised learning, corresponding to $m=1$, and allows us to recover (\ref{eq:m_infty_loss}) taking $m\rightarrow \infty$.
We attach to each sample $\vx^\nu$ a set of $m$ noise realizations
$\{\vxi^{\nu\alpha}\}_{\alpha=1,\ldots,m}$%
, drawn once and then \emph{fixed} throughout
training, and consider the empirical loss
\begin{align}\label{eq:train_loss}
  \Ltrain(\vs) = \frac{1}{2dnm}\sum_{\nu=1}^n\sum_{\alpha=1}^m
    \left\lVert\vs(\vY^{\nu\alpha})+\frac{\vxi^{\nu\alpha}}{\sqrt{\Delta_t}}\right\rVert^2,
    \qquad \vY^{\nu\alpha}=e^{-t}\vx^\nu+\sqrt{\Delta_t}\,\vxi^{\nu\alpha}.
\end{align}
While this resembles a regression problem, it is a non-standard one: the score is evaluated
on $N=nm$ points $\vY^{\nu\alpha}$ that are \emph{correlated}, forming clusters of $m$
points around each clean sample $e^{-t}\vx^\nu$. The number of repetitions $m$ is the parameter that controls this cluster structure, and it
interpolates between two problems that are usually treated separately: at $m=1$ every training input carries its own independent noise, the clusters are
  single points, and \eqref{eq:train_loss} is an ordinary supervised kernel regression on
  $n$ examples with i.i.d.\ label noise; as $m\to\infty$, instead, the within-cluster average reproduces the exact noise expectation,
  $\Ltrain\to \mathcal{L}_{\mathrm{DSM}}$, and the minimizer is the empirical score.
Generalization is measured by the standard test loss
\begin{align}\label{eq:test_loss}
  \Ltest(\vs) = \frac{1}{2d}\,\mathbb{E}_{\vx,\vxi}\!
    \left[\left\lVert\vs(e^{-t}\vx+\sqrt{\Delta_t}\,\vxi)+\frac{\vxi}{\sqrt{\Delta_t}}\right\rVert^2\right],
\end{align}
where the expectation is taken over both fresh noises and fresh samples.
We further decompose the test loss over the realizations of the learned score as
$\Ltest=\tfrac12\left(C_t+\mathcal{B}^2+\mathcal{V}\right)$, with a bias term
$\mathcal{B}^2$ and a variance term $\mathcal{V}$ given by
\begin{align}
\label{eq:bias_variance}
    \mathcal{B}^2=\frac{1}{d}\,\mathbb{E}_{\vy}\!\left[\lVert \vs_{\mathrm{exact}}(\vy)-\langle \vs_{\mathcal{D},\vtheta_0}(\vy)\rangle\rVert^2\right], \quad
    \mathcal{V}=\frac{1}{d}\,\mathbb{E}_{\vy}\!\left[\left\langle \lVert \vs_{\mathcal{D},\vtheta_0}(\vy)-\langle \vs_{\mathcal{D},\vtheta_0}(\vy)\rangle\rVert^2\right\rangle\right],
\end{align}
where $\vs_{\mathcal{D},\vtheta_0}$ denotes the minimizer of the training loss $\Ltrain$ for a given dataset $\mathcal{D}$ and initialization $\vtheta_0$ of the parameters. Here, $\mathbb{E}_{\vy}$ is the average over test samples $\vy\sim P_t$, whereas $\langle\cdot\rangle$ denotes the average over the realizations of the training set, and of other sources of randomness such as $\vtheta_0$. The constant $C_t$ depends only on the data distribution and is
independent of the learned score (see Appendix~\ref{app:bv_decomposition}). \looseness=-1

In practice the score is a neural network $\vs_{\vtheta}$ and \eqref{eq:train_loss} is
minimized by variants of gradient descent \citep{robbins1951stochastic,kingma2015Adam}.
Here we focus on the gradient flow $\dd\vtheta/\dd\tau=-d^2\nabla_{\vtheta}\Ltrain(\vtheta)$
of the training loss \eqref{eq:train_loss};
the factor $d^2$ is chosen so that the high-dimensional
limit is well defined. We shall occasionally add to the loss \eqref{eq:train_loss} an $L_2$ penalty
$\tfrac{\gamma}{2nmd}\lVert\vtheta\rVert^2$ of strength $\gamma$; as shown in
Appendix~\ref{app:subsec:regularization} this amounts to shifting the Gram matrix
introduced below by $\gamma\vI_N$. The relevant scale
for that ridge is $nm/d$, and we will use the rescaled ridge
$\tilde\gamma=\gamma\,d/(nm)$.
The corresponding evolution of the function $\vs_{\vtheta}$ reads
\begin{align*}
    \dot{\vs}_{\vtheta}(\vy)
    &= -d^2\sum_{\nu,\alpha}K_\tau(\vy,\vY^{\nu\alpha})\nabla_{\vs}\Ltrain(\vs_{\vtheta}(\vY^{\nu\alpha}))=\frac{-d}{nm}\sum_{\nu,\alpha}K_\tau(\vy,\vY^{\nu\alpha})
      \left(\vs_{\vtheta}(\vY^{\nu\alpha})+\frac{\vxi^{\nu\alpha}}{\sqrt{\Delta_t}}\right),
\end{align*}
for any test point $\vy\in\mathbb{R}^d$, where
$K_\tau(\vx,\vy)=\nabla_{\vtheta}\vs_{\vtheta}(\vx)\cdot\nabla_{\vtheta}\vs_{\vtheta}(\vy)$
is the neural tangent kernel (NTK) of the model \citep{Jacot_2018,
chizat2020lazytrainingdifferentiableprogramming, Geiger_2020}. Taken at pairs of training
points, the NTK defines the $N\times N$ {\bf{Gram matrix}}
$\vG^{\nu\alpha,\mu\beta}=K_\tau(\vY^{\nu\alpha},\vY^{\mu\beta})$, which closes the dynamics
on the training set and hence dictates the whole trajectory.\footnote{In practice, diffusion models are trained with fresh noise at every step, i.e.\ online in the noise and offline in the clean data. The training dynamics are then governed by the NTK acting as an integral operator on the empirical noisy distribution $\hat P_t=\frac1n\sum_{\nu=1}^n\mathcal{N}(e^{-t}\vx^\nu,\Delta_t\vI_d)$ \citep{Jacot_2018}, a setting recovered by the large-$m$ limit of our results.\looseness=-1} It is therefore natural to
expect the distinct timescales of generalization and memorization to be encoded in its
spectrum \citep{yao2007early,canatar2021spectral,bonnaire2025diffusionmodelsdontmemorize,Favero2025_bigger}, which makes it possible to probe the mechanism underpinning memorization. \looseness=-1

{\bf {Main contributions.}}
The picture supported by our results is that the Gram matrix of a diffusion model
is a different object from the one encountered in standard supervised learning.
In the latter case each training point carries its own independent
input, whereas in denoising score matching each clean sample is presented with
$m>1$ independent noise realizations, so the Gram matrix is built on clusters of
correlated points. This repetition leads to new features. It
restructures the spectrum of the Gram matrix into two well-separated parts: a
\emph{generalization} part of large eigenvalues, already present in standard supervised learning, whose modes encode the global features of the target
distribution; and a new \emph{memorization} part, made of the smallest eigenvalues
and absent at $m=1$, whose modes encode the sample-specific noise directions.
Since the training dynamics learns spectral modes in order of decreasing eigenvalue, this
gap translates directly into the separation of timescales between
$\tau_{\mathrm{gen}}$ and $\tau_{\mathrm{mem}}$. We establish this picture
analytically and numerically.
\begin{itemize}
\item \textbf{Theoretical characterization of the spectral hierarchy.} We provide
a precise characterization of the spectral properties of the Gram matrix in the
high-dimensional limit $d\gg1$, in the lazy regime \citep{chizat2020lazytrainingdifferentiableprogramming}, covering both the linear ($n\asymp d$) and
polynomial ($n\asymp d^k$) sample complexity regimes (see Fig.~\ref{fig:spectrum}). In both, the generalization
part is associated with the timescales over which the model captures the global
features of the target distribution by learning successive approximations of the
score function, while the memorization part leads to a timescale that is
parametrically larger in the training set size $n$, effectively delaying the
onset of memorization.
\item \textbf{Bias--variance decomposition.} In the linear regime, we derive
closed-form equations for the bias and the variance of the score estimator in the
large-$d$ limit. We find that, contrary to the supervised learning case $m=1$, in
diffusion models where each sample is noised $m>1$ times, the behavior of the bias
changes: the contribution of the memorization part leads to a $\Theta(1)$ bias
instead of vanishing at large sample complexity.
\item \textbf{Validation on realistic settings.} We validate our predictions on
CelebA images using Convolutional NTKs in the lazy regime
and finite-width U-Net architectures. We
show that the two-part structure persists beyond the lazy regime. To assess the
role of the two parts of the spectrum, we show that the truncation rank $r$ of
the Gram matrix is a control knob for the generalization--memorization
transition, and that a targeted $L_2$ regularization of the memorization part
suppresses memorization in feature-learning U-Nets, establishing a direct link
between this spectral component and memorization.
\end{itemize}

{\bf Related work.}
State-of-the-art image models are known to
reproduce a non-negligible part of their training data
\citep{Carlini_2023, somepalli_2022, somepalli_2023}. Further works examined how this phenomenon is influenced by factors such as data distribution, model architecture, and training procedure \citep{gu2023memorization, yoon2023diffusion, bonnaire2025diffusionmodelsdontmemorize, Favero2025_bigger}, as well as which samples are memorized first \citep{merger2026_coverage}. The emergence of memorization in the reverse generative trajectories has been analyzed under the exact empirical score hypothesis by \citet{Biroli_2024, achilli2024, ventura2025}, who showed that avoiding collapse onto the training data requires the sample complexity to scale exponentially with the intrinsic data dimension \citep{Achilli_2025_manifold, george_2025_manifold}. To explain why diffusion models often avoid this collapse in practice, recent literature has uncovered several sources of regularization. It has been shown that architectural biases and limited network capacity constrain memorization \citep{Kamb2024, kadkhodaie_2024, baptista_2025, buchanan2025}, while a finite learning rate prevents the network from collapsing to the exact empirical score \citep{Wu2025}. Closely related to our setting, several works have highlighted the beneficial role of early stopping in preventing memorization \citep{li2025generalizationpropertiesdiffusionmodels, baptista_2025, bonnaire2025diffusionmodelsdontmemorize, Favero2025_bigger}, even though it may be insufficient \citep{garnierbrun2026}. The performance of neural networks in the lazy regime has been studied rigorously in the supervised learning setting \citep{Ghorbani_2021, mei2022Hypercontractivity, misiakiewicz2022spectrum}. These analyses rely on the spectrum of the Gram matrix associated with the NTK operator. In the polynomial regime $n\asymp d^k$, the non-linear Gram matrix is equivalent to a deterministic polynomial matrix whose spectrum can be analyzed rigorously \citep{elkaroui_2010, Lu_2025,pandit2025}. Recent works have begun to study how this picture changes in the diffusion setting. Closest to ours, \citet{latourellevigeant2026generalizationmemorizationoverfittingdiffusion} analyze denoising score matching in the lazy regime and also find a transition from generalization to memorization. They work, however, with the exact expectation over the noise and with the infinite-dimensional kernel operator rather than the Gram matrix. \citet{han2024neural} instead leverage these results to derive bounds on the training dynamics of two-layer neural networks on the score-matching task. An extended discussion can be found in
Appendix~\ref{app:sect:related_works}.\looseness=-1

\section{Analytical results}
\label{sect:Analytical}

\paragraph*{Notation and assumptions.}
We write $N=nm$ for the cardinality of the training set. $\vI_d$ is the identity in dimension $d$ and $\bm{1}_m$ the all-ones
vector in dimension $m$ (likewise $\vI_N$ and $\bm{1}_N$), and $\vB_m=\vI_n\otimes\bm{1}_m\bm{1}_m^\top$.
We set $\vSigma=\mathbb{E}_{P_0}[\vx\vx^\top]\in\mathbb{R}^{d\times d}$ for the
data covariance and $\vv_\lambda$ for its eigenvector associated with the eigenvalue
$\lambda$.
We assume that the data are of the form $\vx=\vSigma^{1/2}\vz$, where $\vz$ has independent zero-mean, unit-variance, sub-Gaussian entries, with a covariance $\vSigma$ such that $\lambda_{\mathrm{max}}(\vSigma) = O(1)$ and $\Tr(\vSigma)/d$ converges to a constant denoted by $\sigma^2$ as $d \to \infty$.
We focus on inner-product kernels of the form $K(\vx, \vy) = f(\vx^\top \vy / d)$, where $f$ is a smooth function. A canonical example is provided by the NTK of a multilayer neural network in the infinite-width limit, given in Appendix~\ref{app:appendix_NTK}. \looseness=-1

\paragraph*{Lazy regime.}
In the following we consider the infinite-width limit, with the initialization scaling for which the network
enters the \emph{lazy} regime \citep{Jacot_2018,
chizat2020lazytrainingdifferentiableprogramming, Geiger_2020}: the weights barely move,
the NTK freezes to a deterministic kernel $K_\tau=K$, and the dynamics above becomes
linear. Assuming that $\vs_{\vtheta(0)}=0$ at initialization, it integrates to (see Appendix~\ref{app:ntk:flow})
\begin{align}
\label{eq:estimation_score_DSM}
    \vs_{\vtheta(\tau)}(\vx) = K(\vx, \vY)^\top \vG^{-1}
    \left( \vI_N - e^{-\frac{d \tau}{nm} \vG} \right)
    \left( -\frac{\vxi}{\sqrt{\Delta_t}} \right),
\end{align}
where $K(\vx,\vY)\in\mathbb{R}^{N}$ collects the kernel evaluations between the test point $\vx$ and the training set. The filter $\vI_N-e^{-d\tau\vG/nm}$ activates spectral modes in decreasing eigenvalue order, so fixing a training time $\tau$ is equivalent to imposing a spectral cutoff $\lambda_c\sim nm/(d\tau)$. A bulk of eigenvalues at $\lambda$ is therefore learned on the timescale
$\tau_\lambda\sim nm/(d\lambda)$, and a gap in the spectrum becomes a separation of timescales. \looseness=-1

\subsection{Linear regime: structure of the Gram matrix}
\label{sect:linear}

\paragraph*{Linear equivalent.} Following \citet{elkaroui_2010}, we extend the linear equivalence of non-linear Gram matrices to correlated data points $\{\vY^{\nu\alpha}\}_{1\le\nu\le n}^{1\le\alpha\le m}$. As $n,d\to\infty$ with $n/d\to\psi_n$ and $m=O(1)$, the Gram matrix $\vG^{\nu\alpha,\mu\beta}=f(\vY^{\nu\alpha\top}\vY^{\mu\beta}/d)$ is spectrally equivalent to

\begin{align}\label{eq:Glin}
  \vG_{\mathrm{lin}} = \mu_I\vI_N + \mu_B\vB_m + \mu_0\bm{1}_N\bm{1}_N\tran
    + \mu_1\frac{\vY\tran\vY}{d},
\end{align}
with more details given in Appendix~\ref{app:thm:lin_gram_empirical}. Crucially, in comparison to the standard case $m=1$, the noisy repetition of clean data points induces a novel regularization term $\mu_B \vB_m$. This term accounts for the intra-sample correlations across different noise realizations and is the primary driver of the spectral separation between generalization and memorization. Figure~\ref{fig:spectrum} (\textit{middle}) shows, at $m=4$, the convergence of the spectrum of the non-linear Gram matrix (colored continuous lines) to the linear equivalent \eqref{eq:Glin} (black dashed line) as $d$ grows. \looseness=-1

\paragraph*{Spectral properties of the Gram matrix.} We now characterize the spectral properties of the linearized Gram matrix. The term $\mu_0 \bm{1}_N \bm{1}_N^\top$ induces a single outlier eigenvalue at $nm\mu_0$, which is not relevant for our results and will not be discussed further. The remainder of the spectrum is determined by the Stieltjes transform $q(z) = \frac{1}{N} \Tr(\vG - z\vI_N)^{-1}$. Using the replica method \citep{mezard1987spin}, we derive a set of self-consistent equations for $q(z)$ in Appendix~\ref{sect:proof_replica}. They allow us to characterize the spectrum of the Gram matrix in the high-dimensional regime. \looseness=-1

\begin{figure}[t]
    \centering
    \includegraphics[width=\linewidth]{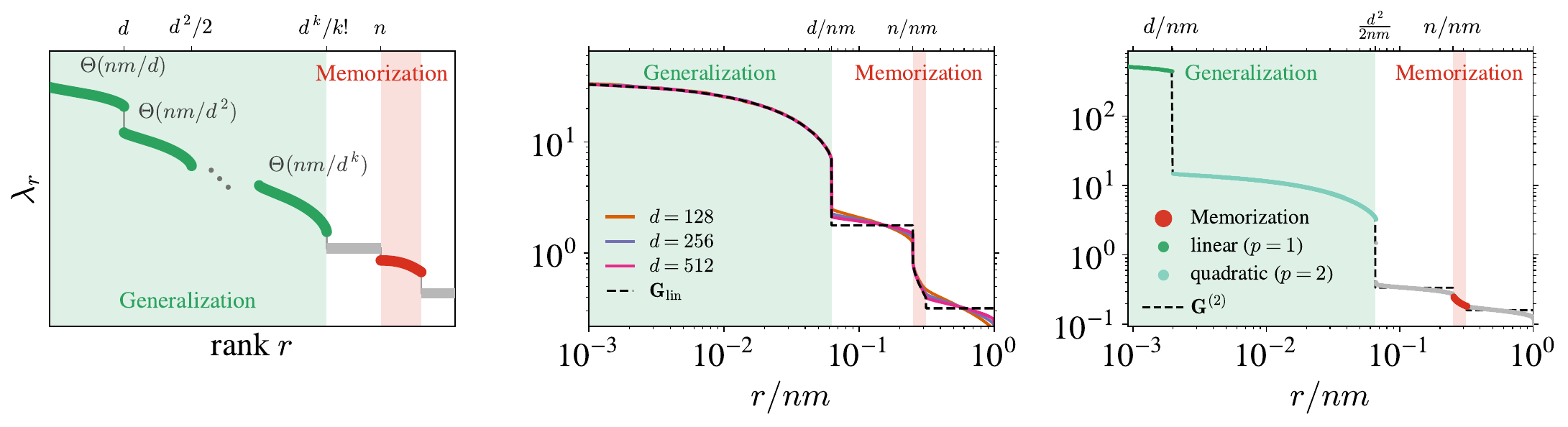}
    \caption{\textbf{Gram matrix spectrum.} Eigenvalues $\lambda_r$ in decreasing order against their rank. (\emph{Left}) Sketch of the spectrum predicted by Theorem~\ref{thm:poly} for the Gram matrix $\vG^{\nu\alpha,\mu\beta}=f(\vY^{\nu\alpha\top}\vY^{\mu\beta}/d)$. (\emph{Middle}) Spectrum of $\vG$ with
    isotropic $\vSigma=\sigma^2\vI_d$, $t=0.1$, $m=4$, $\psi_n=4$ and $f$ the NTK of a two-layer ReLU network (Appendix~\ref{app:kernels_used}) in the linear regime at several values of $d$; the dashed black curve is the linear equivalent $\vG_{\mathrm{lin}}$ of \eqref{eq:Glin} at $d=512$. (\emph{Right}) Quadratic regime $n=2d^2$ at $d=64$; the dashed black curve is the polynomial equivalent of Theorem~\ref{thm:poly}. Details in Appendix~\ref{appendix:polynomial_scaling}.\looseness=-1}
    \label{fig:spectrum}
\end{figure}

\begin{thm}[Spectrum of the Gram matrix]
\label{thm:Spectrum_gram_linear}
Let the empirical spectral density of $\vSigma$ converge to $\rho_{\vSigma}$, and take $d,n\to\infty$ at fixed $\psi_n=n/d$, then $\psi_n\gg1$. For each eigenvalue $\lambda$ of $\vSigma$, let $\lambda_t=e^{-2t}\lambda+\Delta_t$. The spectrum of $\vG-\mu_0\bm{1}_N\bm{1}_N\tran$ consists of two bulks of $d$ eigenvalues each:\looseness=-1
\begin{itemize}[nosep,leftmargin=*]
  \item \textbf{generalization bulk:} $\lambda_{\mathrm{gen}} \simeq \mu_1\psi_n m\lambda_t = \Theta(nm/d)$,
  \item \textbf{memorization bulk:} $\lambda_{\mathrm{mem}} \simeq \mu_I + \mu_B(m-1)\Delta_t/\lambda_t = \Theta(m)$,
\end{itemize}
and two atoms: $\mu_I$ with weight $(m-1-1/\psi_n)/m$ and $\bar\mu=\mu_I+m\mu_B$ with weight $(1-1/\psi_n)/m$.\looseness=-1
\end{thm}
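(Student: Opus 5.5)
The plan is to work with the linear equivalent \eqref{eq:Glin} and discard the outlier $\mu_0\bm{1}_N\bm{1}_N\tran$, which leaves the matrix $\vM=\mu_I\vI_N+\mu_B\vB_m+\mu_1\vY\tran\vY/d$. The key structural observation is that the subspace
\[
V=\mathrm{Range}(\vY\tran)+\vB_m\,\mathrm{Range}(\vY\tran),
\]
which generically has dimension $2d$ when $n>d$, is invariant under $\vM$. Indeed, $\vB_m^2=m\vB_m$ and $\vY\tran\vY$ maps into $\mathrm{Range}(\vY\tran)$. Since $\vM$ is symmetric, $V^\perp$ is invariant as well. On $V^\perp$ we have $\vY v=0$, so $\vM$ reduces to $\mu_I\vI+\mu_B\vB_m$, whose eigenvalues are $\mu_I$ (on the within-cluster zero-mean vectors) and $\mu_I+m\mu_B=\bar\mu$ (on the cluster-constant vectors).

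The first step is to count the atoms. The cluster-constant space $\mathrm{Range}(\vB_m)$ has dimension $n$, and $V\cap\mathrm{Range}(\vB_m)=\vB_m\mathrm{Range}(\vY\tran)$ generically has dimension $d$. So $\bar\mu$ has multiplicity $n-d$ and $\mu_I$ has multiplicity $n(m-1)-d$. Dividing by $N=nm$ gives exactly the weights $(1-1/\psi_n)/m$ and $(m-1-1/\psi_n)/m$. This step is purely linear-algebraic and holds at every finite $\psi_n>1$.

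The second step is to find the $2d$ remaining eigenvalues by parametrizing $v=\vY\tran\va+\vB_m\vY\tran\vb$ and writing the action of $\vM$ on $(\va,\vb)$. This requires only two $d\times d$ matrices:
\[
\vA_1=\frac{\vY\vY\tran}{d},\qquad \vA_2=\frac{\vY\vB_m\vY\tran}{d},
\]
together with $\vY\vB_m^2\vY\tran=m\,d\,\vA_2$. For $\psi_n\gg1$, concentration of sample covariances gives
\[
\vA_1\simeq\psi_n m\,\vSigma_t,\quad \vSigma_t=e^{-2t}\vSigma+\Delta_t\vI_d,\qquad
\vA_2\simeq\psi_n\bigl(m^2e^{-2t}\vSigma+m\Delta_t\vI_d\bigr).
\]
The second approximation holds because $\sum_\alpha\vY^{\nu\alpha}=me^{-t}\vx^\nu+\sqrt{\Delta_t}\sum_\alpha\vxi^{\nu\alpha}$. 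Both limits are diagonal in the eigenbasis $\{\vv_\lambda\}$ of $\vSigma$, so the $2d\times2d$ problem splits into $d$ independent $2\times2$ problems, one per eigenvalue $\lambda$.

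The third step is to solve each $2\times2$ problem to leading order in $\psi_n$. The large root is $\mu_1\psi_n m\lambda_t$, which gives the generalization bulk. The small root follows from a Schur complement, i.e.\ by projecting $\mu_B\vB_m$ onto the orthogonal complement of the large direction $\vY\tran\vv_\lambda$ inside the plane. Its value is
\[
\mu_I+\mu_B m\Bigl(1-\frac{me^{-2t}\lambda+\Delta_t}{m\lambda_t}\Bigr)=\mu_I+\mu_B\frac{(m-1)\Delta_t}{\lambda_t},
\]
which is the memorization bulk. This value is $\Theta(1)$ in $\psi_n$ and bounded by $\bar\mu=\Theta(m)$. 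Pushing $\rho_{\vSigma}$ forward through these two maps gives the two bulks of $d$ eigenvalues each.

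I expect the main obstacle to be justifying the replacement of $\vA_1,\vA_2$ by their deterministic limits at the level needed for the small eigenvalues. The memorization eigenvalue comes from a cancellation between two $\Theta(\psi_n)$ quantities, so $O(\sqrt{\psi_n})$ fluctuations of the sample covariances could a priori matter. To control this, I would first obtain the exact finite-$\psi_n$ spectrum of $\vM|_V$ from the replica self-consistent equations for $q(z)$ derived in Appendix~\ref{sect:proof_replica}, which handle the Marchenko--Pastur broadening. I would then expand these equations for $\psi_n\to\infty$ and check that the relative broadening of each bulk vanishes, recovering the point locations above. Finally, the linear-equivalence result of Appendix~\ref{app:thm:lin_gram_empirical} transfers the conclusion from $\vG_{\mathrm{lin}}$ to $\vG$.
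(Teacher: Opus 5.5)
Your proposal is correct and follows essentially the paper's linear-algebra derivation: the plane spanned by $\vY\tran\vv_\lambda$ and $\vB_m\vY\tran\vv_\lambda$ is the paper's plane $(\bm{\Phi}^\lambda,\vP^\lambda)$ carrying the block $\bm{\mathcal{M}}^\lambda$, your $V^\perp$ is exactly the atom eigenspace $\{\vvarphi\otimes\bm{1}_m:\vvarphi\in\mathrm{Ker}(\bar{\vY})\}\oplus\bigl(\mathrm{Ker}(\vY)\cap\mathrm{Ker}(\vB_m)\bigr)$ of Proposition~\ref{prop:eigenvectors_gram_linear}, and your fallback is the paper's replica derivation, with the small bonus that your invariant-subspace count gives the atom weights exactly at every finite $\psi_n>1$ rather than only to leading order in $1/\psi_n$. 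The obstacle you anticipate is milder than you fear: $m\vA_1-\vA_2=\frac{m\Delta_t}{d}\sum_{\nu,\alpha}\vxi_\perp^{\nu\alpha}(\vxi_\perp^{\nu\alpha})\tran\simeq\psi_n m(m-1)\Delta_t\,\vI_d$ is itself a rescaled Wishart matrix built from the transverse noise, not a difference of nearly equal random matrices, and up to $O(m/\psi_n)$ the memorization eigenvalues are $\mu_I$ plus $\mu_B$ times the eigenvalues of $\vA_1^{-1}(m\vA_1-\vA_2)$, so sample-covariance fluctuations move them only by a relative $O(\psi_n^{-1/2})$ and a Schur-complement bound with Weyl's inequality suffices without the replica equations.
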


While the generalization bulk is present in standard kernels ($m=1$), the multi-noise setting ($m>1$) triggers the emergence of the memorization bulk, cf.\ Fig.~\ref{fig:spectrum}. To understand why these components correspond to distinct learning regimes, we must first examine their associated eigenvectors. \looseness=-1

\begin{proposition}[Eigenvectors of the Gram matrix]
\label{prop:eigenvectors_gram_linear}

For $\psi_n\gg1$ and every eigenvalue $\lambda$ of $\vSigma$ with associated eigenvector $\vv_\lambda$, the eigenvectors of $\vG$ associated with the two bulks of Theorem~\ref{thm:Spectrum_gram_linear} are asymptotically\looseness=-1
\begin{align}\label{eq:eigenvectors}
    \vu_1^{\nu\alpha}\propto \vv_\lambda^\top\vY^{\nu\alpha},
    \qquad
    \vu_2^{\nu\alpha}\propto \vv_\lambda^\top\big[(me^{-2t}\lambda+\Delta_t)\sqrt{\Delta_t}\,\vxi_\perp^{\nu\alpha}
      -(m-1)\Delta_t\,\bar{\vY}^{\nu}\big],
\end{align}
with $\bar{\vY}^\nu=\frac1m\sum_\beta\vY^{\nu\beta}$ and
$\vxi_\perp^{\nu\alpha}=\vxi^{\nu\alpha}-\frac1m\sum_\beta\vxi^{\nu\beta}$. The eigenvectors
of the two atoms are the $\vvarphi\otimes\bm{1}_m/\sqrt m$ with $\vvarphi\in\mathrm{Ker}(\bar{\vY})$
and the elements of $\mathrm{Ker}(\vY)\cap\mathrm{Ker}(\vB_m)$; they do not contribute to the
estimated score.\looseness=-1
\end{proposition}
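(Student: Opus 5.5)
The plan is to work throughout with the linear equivalent $\vG_{\mathrm{lin}}$ of \eqref{eq:Glin}, dropping the rank-one term $\mu_0\bm{1}_N\bm{1}_N\tran$. I will identify a $2d$-dimensional invariant subspace of $\vG_{\mathrm{lin}}$ that carries both bulks of Theorem~\ref{thm:Spectrum_gram_linear}. On the orthogonal complement of this subspace I will show that $\vG_{\mathrm{lin}}$ acts as a multiple of the identity, which gives the two atoms.

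\textbf{Orthogonal splitting of $\mathbb{R}^N$.} Split $\mathbb{R}^N=\mathcal{C}\oplus\mathcal{C}^\perp$, where $\mathcal{C}=\{\vvarphi\otimes\bm{1}_m\}$ is the space of block-constant vectors. Then $\vB_m$ equals $m$ on $\mathcal{C}$ and $0$ on $\mathcal{C}^\perp$. Correspondingly, write
\[
\vY^{\nu\alpha}=\bar{\vY}^\nu+\sqrt{\Delta_t}\,\vxi_\perp^{\nu\alpha}.
\]
The vectors $(\va\tran\bar{\vY}^\nu)_{\nu\alpha}$ lie in $\mathcal{C}$. The vectors $(\vb\tran\sqrt{\Delta_t}\vxi_\perp^{\nu\alpha})_{\nu\alpha}$ lie in $\mathcal{C}^\perp$, because $\sum_\alpha\vxi_\perp^{\nu\alpha}=0$.

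\textbf{The atoms.} The range of $\vY\tran\vY$ is spanned by vectors of the form $\vw\tran\vY^{\nu\alpha}$, and each such vector decomposes into one vector of each of the two types above. Consider first a vector $\vvarphi\otimes\bm{1}_m$ with $\vvarphi\in\mathrm{Ker}(\bar{\vY})$. Since $\vY(\vvarphi\otimes\bm{1}_m)=m\bar{\vY}\vvarphi=0$, it is annihilated by $\vY\tran\vY$, so it is an eigenvector with eigenvalue $\mu_I+m\mu_B$. Next, any vector in $\mathrm{Ker}(\vY)\cap\mathrm{Ker}(\vB_m)$ is an eigenvector with eigenvalue $\mu_I$. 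Both families are orthogonal to the range of $\vY\tran$. Hence, by the linear formula \eqref{eq:estimation_score_DSM}, they are orthogonal to $K(\vx,\vY)$ at leading order and do not contribute to the score.

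\textbf{The $2\times2$ reduction.} The invariant subspace is spanned by the ansatz
\[
u^{\nu\alpha}=\va\tran\bar{\vY}^\nu+\vb\tran\sqrt{\Delta_t}\,\vxi_\perp^{\nu\alpha}.
\]
I will compute $\vG_{\mathrm{lin}}u$ using the exact identity $\sum_{\nu,\alpha}\bar{\vY}^\nu\vxi_\perp^{\nu\alpha\top}=0$. This gives
\[
\vY u=m\sum_\nu\bar{\vY}^\nu\bar{\vY}^{\nu\top}\va+\Delta_t\sum_{\nu\alpha}\vxi_\perp^{\nu\alpha}\vxi_\perp^{\nu\alpha\top}\vb .
\]
For $\psi_n\gg1$ the two sample matrices concentrate around deterministic limits:
\[
\frac{m}{d}\sum_\nu\bar{\vY}^\nu\bar{\vY}^{\nu\top}\approx\psi_n\bigl(me^{-2t}\vSigma+\Delta_t\bigr),\qquad \frac{1}{d}\sum_{\nu\alpha}\vxi_\perp^{\nu\alpha}\vxi_\perp^{\nu\alpha\top}\approx\psi_n(m-1)\vI_d .
\]
Taking $\va=a\vv_\lambda$ and $\vb=b\vv_\lambda$ then yields a closed $2\times2$ eigenproblem, with $c=me^{-2t}\lambda+\Delta_t$:
\begin{align*}
\Lambda a&=(\mu_I+m\mu_B)a+\mu_1\psi_n\bigl[c\,a+(m-1)\Delta_t b\bigr],\\
\Lambda b&=\mu_I b+\mu_1\psi_n\bigl[c\,a+(m-1)\Delta_t b\bigr].
\end{align*}
The leading $O(\psi_n)$ part of this system is rank one. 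Its nonzero eigenvalue is $\mu_1\psi_n\bigl(c+(m-1)\Delta_t\bigr)=\mu_1\psi_n m\lambda_t$, with eigenvector $a=b$; this gives $\vu_1=\vv_\lambda\tran\vY$. Its kernel is $c\,a+(m-1)\Delta_t b=0$, i.e.\ $(a,b)\propto(-(m-1)\Delta_t,c)$, which reproduces $\vu_2$ in \eqref{eq:eigenvectors}. First-order perturbation theory in the $O(1)$ part, using the left null vector $(1,-1)$ of the leading part, gives
\[
\Lambda=\mu_I+\mu_B\,\frac{m(m-1)\Delta_t}{c+(m-1)\Delta_t}=\mu_I+\mu_B\,\frac{(m-1)\Delta_t}{\lambda_t},
\]
which matches the memorization bulk of Theorem~\ref{thm:Spectrum_gram_linear}.

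\textbf{Main obstacle.} The memorization eigenvalue is $O(1)$, while the leading block is $O(\psi_n)$. Fluctuations of the sample covariances are of relative size $O(1/\sqrt{\psi_n})$, so they could a priori shift the small eigenvalue by $O(\sqrt{\psi_n})$ and mix different directions $\vv_\lambda$. My first approach is to avoid replacing the sample matrices by their means. Instead, I would keep them exact and write the memorization subspace exactly as the kernel of the $O(\psi_n)$ operator
\[
(\va,\vb)\mapsto \vY u\in\mathbb{R}^d,
\]
which is a $d$-dimensional space of vectors $(\va,\vb)$. Restricted to this kernel, $\vG_{\mathrm{lin}}$ acts only through $\mu_I\vI+\mu_B\vB_m$, which is $O(1)$, so Schur-complement or Davis--Kahan arguments apply with a spectral gap of order $\psi_n$. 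Only at the end would I use concentration, to show that the kernel vectors are asymptotically aligned with the population directions $\vv_\lambda$ as $\psi_n\to\infty$.
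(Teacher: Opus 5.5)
Your argument is essentially the paper's. For each $\lambda$, your ansatz $u=a\,\vv_\lambda\tran\bar{\vY}^\nu+b\sqrt{\Delta_t}\,\vv_\lambda\tran\vxi_\perp^{\nu\alpha}$ spans the same plane as the paper's $(\bm{\Phi}^\lambda,\vP^\lambda)$. Up to the shift $\mu_I$, your non-symmetric $2\times2$ system is a diagonal similarity transform of the paper's $\bm{\mathcal{M}}^\lambda$ (rescale $a$ by $\sqrt{me^{-2t}\lambda+\Delta_t}$ and $b$ by $\sqrt{(m-1)\Delta_t}$). Your rank-one-plus-left-null-vector expansion therefore reproduces the same leading eigenvectors and $\lambda_{\mathrm{mem}}$, and you identify the atoms as the same kernels.

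Two small comments. First, using the exact identity $\sum_{\nu\alpha}\bar{\vY}^\nu\vxi_\perp^{\nu\alpha\top}=0$ makes the $2d$-dimensional subspace exactly invariant, with the atoms as its exact complement and a Schur complement onto the kernel of $u\mapsto\vY u$. This is tidier and more rigorous than the paper's appeal to asymptotic orthonormality of $\{\vU^\lambda,\vX^\lambda,\vP^\lambda\}$. Second, $\mu_0\bm{1}_N$ is not negligible entrywise in $K(\vy,\vY)$, so dropping it is not harmless for the non-contribution claim. You should keep the rank-one term and restrict the $\bar\mu$ atom to $\bm{1}_n\tran\vvarphi=0$, as the paper does.
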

Theorem~\ref{thm:Spectrum_gram_linear} and Proposition~\ref{prop:eigenvectors_gram_linear} reveal a spectral hierarchy.
 The eigenvectors of the two bulks have a transparent interpretation. The generalization modes $\vu_1$ are the projections of the training points on the principal directions $\vv_\lambda$ of the data covariance, so learning them fits the component of the score carried by the population covariance, exactly as a standard kernel does at $m=1$. The memorization modes $\vu_2$ contain $\vxi_\perp$, the noise components that distinguish the different realizations of the same data point, capturing the irregularity of the empirical score close to each data point. These two bulks are separated by a wide spectral gap: the \emph{generalization} bulk scales as $\Theta(nm/d)$, while the \emph{memorization} bulk remains $\Theta(m)$.
Because of the filtering of Eq.~\eqref{eq:estimation_score_DSM}, this gap enforces a strict separation of timescales: the generalization modes are learned first whereas the memorization ones are learned much later, on a timescale that is parametrically larger in $n$.\looseness=-1

\subsection{Linear regime: bias--variance decomposition}
\label{sect:biasvariance}

\begin{figure}[t]
  \centering
  \includegraphics[width=0.9\linewidth]{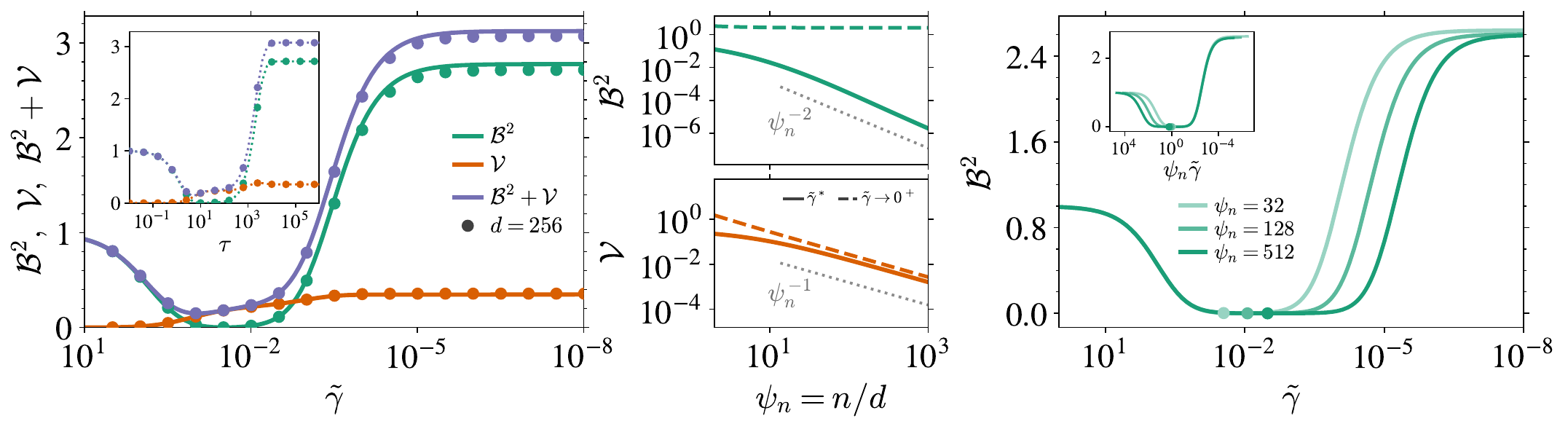}
  \caption{
   \textbf{Bias--variance decomposition.} $\mathcal{B}^2$, $\mathcal{V}$ and $\mathcal{B}^2+\mathcal{V}$ for the kernel $f(u)=\mathbb{E}[\tanh(z_1)\tanh(z_2)]$, with $(z_1,z_2)$ unit-variance Gaussians of correlation $u$ (Appendix~\ref{app:kernels_used}), at $\sigma^2=1$, $m=4$ and $t=0.1$. (\emph{Left}) Vs.\ the rescaled ridge $\tilde\gamma$ at $\psi_n=8$, axis reversed to show the correspondence with training time; the \textit{inset} shows the same vs.\ the training time $\tau$ of \eqref{eq:estimation_score_DSM}. Solid lines: Theorem~\ref{thm:bias_variance}; markers: experiments at $d=256$; in the inset, markers and dotted line are the finite-$d$ gradient flow. (\emph{Middle}) $\mathcal B^2$ (top) and $\mathcal V$ (bottom) vs.\ $\psi_n$, for the optimal ridge $\tilde\gamma^*=\arg\min_{\tilde\gamma}\Ltest$ (solid) and the ridgeless limit $\tilde\gamma\to0^+$ (dashed). (\emph{Right}) $\mathcal B^2$ vs.\ $\tilde\gamma$ for several $\psi_n$, dots marking $\tilde\gamma^*$; the \textit{inset} shows the same vs.\ $\psi_n\tilde\gamma$. Details in Appendix~\ref{app:subsec:krr_bv}. \looseness=-1
  }
\label{fig:bias_variance_m_1_m_4}
  \label{fig:bv_time}
  \label{fig:bv_psin}
\end{figure}

To get a sharper understanding of the phenomenon we now turn to the bias--variance decomposition of the estimator of the score. In the high-dimensional limit, we derive closed-form equations
for the bias and the
variance of the kernel ridge predictor,
\begin{align}\label{eq:krr_predictor}
    \vs_\gamma(\vy) = K(\vy,\vY)^\top\big(\vG+\gamma\,\vI_N\big)^{-1}
    \left(-\frac{\vxi}{\sqrt{\Delta_t}}\right),
\end{align}
where $\vY$ and $\vxi$ are the noisy data and noises used during training. \looseness=-1
\begin{thm}[Bias and variance of the kernel ridge predictor]
\label{thm:bias_variance}
In the limit $n,d\to\infty$ at fixed $\psi_n=n/d$, with $m=O(1)$ and $\vSigma=\sigma^2\vI_d$, the bias and the variance \eqref{eq:bias_variance}
of the kernel ridge predictor \eqref{eq:krr_predictor} concentrate, and are given by a
closed system of algebraic equations stated in
Appendix~\ref{app:bv_fixed_point}.\looseness=-1
\end{thm}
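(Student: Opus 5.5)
The plan is to reduce the bias and variance to traces of resolvents of the linearized Gram matrix and then evaluate those traces with the same replica computation that gave the Stieltjes transform behind Theorem~\ref{thm:Spectrum_gram_linear}.

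\textbf{Step 1: linearize.} By the linear-equivalence statement \eqref{eq:Glin}, I replace $\vG$ by $\vG_{\mathrm{lin}}$ in the predictor \eqref{eq:krr_predictor}. I also replace the test--train kernel vector $K(\vy,\vY)$ by its linear counterpart $\mu_0\bm{1}_N+\mu_1\vY\tran\vy/d$. The $\mu_I$ and $\mu_B$ terms do not appear in this vector, because a fresh test point is not repeated in the training set. Both substitutions need to be justified in normalized trace or resolvent sense, since the bias and variance are quadratic forms in $(\vG+\gamma\vI_N)^{-1}$, not just spectral averages. With $\vSigma=\sigma^2\vI_d$ the exact score is linear, $\vs_{\mathrm{exact}}(\vy)=-\vy/\lambda_t$ with $\lambda_t=e^{-2t}\sigma^2+\Delta_t$. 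Writing $\vR=(\vG_{\mathrm{lin}}+\gamma\vI_N)^{-1}$ and $\vA=-\vY\vR\vxi\tran/\sqrt{\Delta_t}$, the predictor therefore becomes linear, $\vs_\gamma(\vy)\approx\mu_1\vA\tran\vy/d$ up to the rank-one $\mu_0$ direction. I will handle that direction by Sherman--Morrison; its contribution is negligible in the limit.

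\textbf{Step 2: express bias and variance through $\vA$.} Averaging over $\vy\sim P_t$ gives
\[
\mathcal{B}^2=\frac{\lambda_t}{d}\,\Bigl\lVert \frac{\mu_1}{d}\langle\vA\rangle+\frac{\vI_d}{\lambda_t}\Bigr\rVert_F^2,
\qquad
\mathcal{V}=\frac{\lambda_t\mu_1^2}{d^3}\Bigl(\langle\lVert\vA\rVert_F^2\rangle-\lVert\langle\vA\rangle\rVert_F^2\right).
\]
By rotational invariance, $\langle\vA\rangle$ is proportional to $\vI_d$, so it suffices to compute $\frac1d\Tr\langle\vA\rangle$ and $\frac1{d^2}\langle\lVert\vA\rVert_F^2\rangle$. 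I decompose $\vY=e^{-t}\vX\otimes\bm{1}_m\tran+\sqrt{\Delta_t}\,\vxi$ and split $\vxi$ into its cluster mean $\bar\vxi$ and its within-cluster fluctuation $\vxi_\perp$, as in Proposition~\ref{prop:eigenvectors_gram_linear}. This writes $\Tr\vA$ and $\lVert\vA\rVert_F^2$ as traces of products of $\vR$ with the block projectors $\vB_m/m$ and $\vI_N-\vB_m/m$, together with these Gaussian matrices.

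\textbf{Step 3: evaluate the traces.} I compute the first-order quantities $\frac1d\Tr\,\vxi\vR\vY\tran$ and the analogous ones with the block structure from a generating function $\log\det(\vG_{\mathrm{lin}}+\gamma\vI_N+\text{sources})$. This is the replica computation of Appendix~\ref{sect:proof_replica}, extended by linear source terms. The second moments, i.e.\ products of two resolvents, come from differentiating with respect to an extra parameter. The cleanest choice is to add $\epsilon\,\vY\tran\vY/d$ or $\epsilon\,\vB_m$ to the Gram matrix and take $\partial_\epsilon$ at $\epsilon=0$; equivalently one can differentiate with respect to $\gamma$. The result is a small closed system of scalar fixed-point equations. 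Its order parameters are the Stieltjes-type quantities restricted to the cluster-mean sector and to the fluctuation sector, plus their $\epsilon$-derivatives. These equations are the ones stated in Appendix~\ref{app:bv_fixed_point}. Concentration follows from standard Lipschitz or Gaussian-Poincaré arguments for resolvent functionals when $\gamma>0$.

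\textbf{Main obstacle.} The hard part is the second-order (variance) term. It requires deterministic equivalents for products $\vR\,\vM\,\vR$ with the correlated, non-i.i.d.\ column structure induced by the $m$ repetitions. The label vector $\vxi$ is itself built into $\vY$, so the labels are not independent of the design. I would first condition on the cluster means $\bar\vY$ and integrate $\vxi_\perp$ exactly; it is Gaussian and independent of $\bar\vY$. This reduces the problem to a two-block resolvent whose deterministic equivalent can be obtained by the replica or cavity method. A secondary difficulty is to control the error from the linearization step at the level of quadratic forms.
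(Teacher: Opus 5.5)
Your proposal is correct and shares its reduction with the paper; the difference is the tool you use to evaluate the traces. The shared steps are:
\begin{itemize}
\item linearize the Gram matrix and the test kernel, and write the predictor as a $d\times d$ linear map of the test point;
\item use $O(d)$-invariance to get $\langle\vA\rangle\propto\vI_d$;
\item split into the cluster-mean and transverse channels $\vB_m/m$ and $\vI_N-\vB_m/m$;
\item obtain the second moment by differentiating the resolvent in the coefficient of $\vY^\top\vY/d$, so that only first-order traces remain. The paper uses exactly $\vR_\varepsilon=\bigl(\bm{\mathrm{E}}+(1-\varepsilon)\frac{\mu_1}{d}\vY^\top\vY\bigr)^{-1}$ with $\bm{\mathrm{E}}=(\mu_I+\gamma)\vI_N+\mu_B\vB_m$.
\end{itemize}

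For those first-order traces, the paper looks at rows rather than columns. The $d$ rows of $\vY$ are i.i.d.\ $\mathcal N(0,\vC)$ vectors in $\mathbb R^N$, with $\vC=e^{-2t}\vB_m+\Delta_t\vI_N$, and $\vC$ commutes with $\bm{\mathrm{E}}$. Hence $\vR$ is the resolvent of a deterministic matrix plus a sample covariance of $d$ i.i.d.\ vectors, with the Silverstein-type equivalent $\bigl(\bm{\mathrm{E}}+\mu_1\vC/(1+\delta)\bigr)^{-1}$. The label--design coupling is removed by the row-wise Gaussian regression $\vxi=\sqrt{\Delta_t}\,\vY\vC^{-1}+\vH$, with $\vH$ independent of $\vY$. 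In the transverse channel $\vxi_\perp=\vY_\perp/\sqrt{\Delta_t}$ holds exactly, so only the mean channel carries independent label noise. A leave-one-out identity then closes the computation.

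Your replica-with-sources route, conditioning on $\bar\vY$, should reproduce the same fixed point and matches the paper's replica treatment of the Stieltjes transform. It is, however, non-rigorous and heavier. The paper's route rests on established deterministic equivalents, and it shows that the obstacle you flag is not one. The columns are correlated, but the rows are i.i.d. Also, no deterministic equivalent for $\vR\vM\vR$ is needed once your own $\varepsilon$-derivative is used.

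Two small corrections:
\begin{itemize}
\item Of your three differentiation options, only the $\vY^\top\vY/d$ one works alone. Since $\frac{\mu_1}{d}\vR\vY^\top\vY\vR=\vR-\vR\bm{\mathrm{E}}\vR$ requires both $\vR^2$ and $\vR\vB_m\vR$, differentiating in $\gamma$ or in the $\vB_m$ coefficient only works if you combine both derivatives.
\item The linear exact score $-\vy/\lambda_t$ and the Gaussian decomposition of $\vxi$ require Gaussian data, not just $\vSigma=\sigma^2\vI_d$. The paper's proof makes the same assumption.
\end{itemize}
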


We use the ridge regularization as a proxy for the gradient flow dynamics in $\tau$,
studying the evolution of the equations of Theorem~\ref{thm:bias_variance} as a function
of $\tilde\gamma$ through the equivalence $\tilde\gamma \Leftrightarrow 1/\tau$ \citep{ali2019_earlystopping}. As shown in the left panel of Fig.~\ref{fig:bias_variance_m_1_m_4}, both ridge and gradient flow
show qualitatively the same evolution; we therefore study the closed-form equations on the kernel ridge
estimator as a function of $\tilde\gamma$ and infer the training-time behavior from it. Solving these equations numerically gives the scalings of the bias and the variance as a function of $\psi_n$ and $t$ that can be read off
Fig.~\ref{fig:bias_variance_m_1_m_4} (\textit{middle}) and Fig.~\ref{fig:bv_tscaling}.

\begin{result}[Scalings of the bias and the variance]
\label{res:bv_scalings}
Let $m>1$ and $1/(m\psi_n)\ll t\ll1$.
\begin{itemize}[leftmargin=1.2em,itemsep=1pt,topsep=-2pt]
  \item \emph{Ridgeless}, $\tilde\gamma\to0^+$:
  $\mathcal B^2=\Theta(t^{-2})$, independently of $\psi_n$, and
  $\mathcal V=\Theta(\psi_n^{-1}t^{-2})$.
  \item \emph{Optimal ridge}, $\tilde{\gamma}^*$:
  $\mathcal B^2=\Theta(\psi_n^{-2}t^{-2})$ and
  $\mathcal V=\Theta(\psi_n^{-1}t^{-1})$.
\end{itemize}
\end{result}

At the end of training, i.e.\ $\tilde{\gamma}\to 0^+$, the test loss is dominated by a persistent bias\footnote{There is a fundamental difference between the statistics of the score estimator on a test set and those of the generated distribution by integrating the backward process. At large $m$, the latter is the empirical one: it averages to the population distribution, hence has zero bias, but depends strongly on the training set, hence has large
variance.\looseness=-1} that does not vanish with the sample complexity and that diverges as the diffusion time $t\to0$. On the other hand, when the model is optimally early-stopped (equivalently, at the optimal ridge $\tilde\gamma^*$), the bias vanishes with the sample complexity at fixed $t$, like the variance, as illustrated in Fig.~\ref{fig:bias_variance_m_1_m_4} (\textit{middle}). \looseness=-1
In the right panel of Fig.~\ref{fig:bias_variance_m_1_m_4}, the bias starts to
rise at $\tilde\gamma\propto1/\psi_n$ (\textit{inset}), i.e.\ when the ridge
$\gamma$ falls to the $O(1)$ eigenvalues of the memorization bulk. Below this
value the ridge is too small to damp these modes: the regularization no longer acts on the memorization bulk, which is the source of the persistent bias.
\looseness=-1

\subsection{Polynomial scaling}
\label{sect:poly}

These results can be extended beyond the linear regime under a few additional assumptions (see Appendix~\ref{appendix:polynomial_scaling}). In the polynomial
regime $d^k\ll n\ll d^{k+1}$, the analogue of the linear equivalent \eqref{eq:Glin} is a polynomial equivalent, with the spectrum below, sketched in Fig.~\ref{fig:spectrum} (\textit{left}) and shown at finite $d$ in Fig.~\ref{fig:spectrum} (\textit{right}).\looseness=-1
\begin{thm}[Spectrum of the Gram matrix, polynomial regime]
\label{thm:poly}
In the limit $d,n\to\infty$ with $d^k\ll n\ll d^{k+1}$ and $m=O(1)$, the spectrum of $\vG$
consists, besides two atoms (see Appendix~\ref{appendix:polynomial_scaling}), of:\looseness=-1
\begin{itemize}[leftmargin=1.2em,itemsep=1pt,topsep=-2pt]
  \item for each degree $p\in\{1,\dots,k\}$, a group of $\Theta(d^p/p!)$ eigenvalues
  $\Theta(nm/d^p)$
  (\textbf{generalization} sub-bulks);
  \item a bulk of $\Theta(d^k/k!)$ eigenvalues
  $\Theta(m)$, independent of $n$ and absent at $m=1$ (\textbf{memorization} bulk).
\end{itemize}
\end{thm}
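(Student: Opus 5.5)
We propose to reduce $\vG$ to a deterministic \emph{polynomial equivalent}, in the spirit of \citet{elkaroui_2010,Lu_2025,pandit2025}, and then read off its spectrum block by block. The first step expands the kernel in Gegenbauer (or Hermite) polynomials adapted to the norm of the inputs. Write $f(\vY^{\nu\alpha\top}\vY^{\mu\beta}/d)$ and split into diagonal, intra-cluster ($\nu=\mu$, $\alpha\neq\beta$) and inter-cluster ($\nu\neq\mu$) entries. The inter-cluster overlaps are $O(d^{-1/2})$, so Taylor expanding $f$ around $0$ gives
\begin{align*}
\vG\approx \sum_{p=0}^{k} \frac{f^{(p)}(0)}{p!}\,\frac{(\vY\tran\vY)^{\circ p}}{d^p}\Big|_{\mathrm{off}}+\mathrm{corrections}.
\end{align*}
Each degree-$p$ term is then replaced by a Gram matrix of degree-$p$ Hermite/Gegenbauer features $\phi_p(\vY^{\nu\alpha})\in\mathbb{R}^{\Theta(d^p/p!)}$. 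The intra-cluster overlaps are $e^{-2t}\lVert\vx^\nu\rVert^2/d+O(d^{-1/2})\to e^{-2t}\sigma^2$, which is $\Theta(1)$. For these entries we keep the full value $f(e^{-2t}\sigma^2)$ rather than a truncated series, and this produces a term $\mu_B\vB_m$ exactly as in \eqref{eq:Glin}. The diagonal produces $\mu_I\vI_N$. All degrees $p>k$ are lumped into a multiple of the identity, using the standard argument that $\lVert (\vY\tran\vY/d)^{\circ p}-\mathrm{diag}\rVert_{\mathrm{op}}\to0$ when $n\ll d^{p}$. That argument has to be redone with clustered, correlated rows. Our plan is to condition on the clean data and apply a moment or matrix-Bernstein bound, using the fact that the $\vxi^{\nu\alpha}$ are independent across $\alpha$.

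The second step analyses the resulting equivalent
\begin{align*}
\vG_{\mathrm{poly}}=\mu_I\vI_N+\mu_B\vB_m+\sum_{p=0}^k \mu_p\,\Phi_p\tran\Phi_p/d^p .
\end{align*}
Since $n\gg d^k\ge d^p$, each $\Phi_p\tran\Phi_p$ has rank $\Theta(d^p/p!)$, which is much smaller than $N$. Its nonzero eigenvalues equal those of $\Phi_p\Phi_p\tran/d^p$, which concentrates around $N\,\mathbb{E}[\phi_p\phi_p\tran]/d^p=\Theta(nm/d^p)$. The different degrees are nearly orthogonal in the Hermite basis. Degree-$p$ eigenvalues are separated from degree-$(p+1)$ ones by a factor $d$, so Weyl-type and Davis--Kahan perturbation arguments isolate the $k$ generalization sub-bulks.

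The third step, which we regard as the main obstacle, is the memorization bulk. Generalising Proposition~\ref{prop:eigenvectors_gram_linear}, we restrict $\vG_{\mathrm{poly}}$ to the orthogonal complement of the top span. Within each cluster, feature vectors decompose into a cluster mean $\bar\phi_p^\nu$ and fluctuations $\phi_{p,\perp}^{\nu\alpha}$ driven by $\vxi_\perp$. The top-degree block $\Phi_k$ cannot be fully ``absorbed'' into the large eigenvalues: since $n\gg d^k$, the $d^k$ directions are saturated by the cluster means. The within-cluster fluctuations along the same $\Theta(d^k)$ feature directions then yield, through the interplay between $\mu_B\vB_m$ (which acts only on cluster means) and the projection, a Schur complement of order $\mu_I+\mu_B(m-1)c_k$ for a constant $c_k$ depending on $\Delta_t$ and $e^{-2t}\sigma^2$. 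These eigenvalues are $\Theta(m)$ and independent of $n$. At $m=1$ there is no $\vxi_\perp$, so this block is empty. Lower degrees $p<k$ contribute only $O(d^{k-1})$ such modes, which is negligible at the $\Theta(d^k/k!)$ count.

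We would make this rigorous by computing the resolvent restricted to the $\vB_m$-invariant decomposition $\mathbb{R}^N=\mathrm{Im}(\vB_m)\oplus\mathrm{Ker}(\vB_m)$, reusing the self-consistent equations of the linear case with $\vY$ replaced by $\Phi_k$. The remaining eigenvalues fall into the two atoms $\mu_I$ and $\mu_I+m\mu_B$, corresponding respectively to $\mathrm{Ker}(\vB_m)$ and $\mathrm{Im}(\vB_m)$ minus the feature spans. The delicate points are controlling the error of the polynomial equivalent in operator norm at the $\Theta(1)$ scale of the memorization bulk, and the clustered correlations; we would handle both by conditioning on the clean data.
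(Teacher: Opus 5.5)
Your proposal follows essentially the same route as the paper. The paper also builds a polynomial equivalent consisting of $\mu_I^{>k}\vI_N+\mu_B^{>k}\vB_m$ plus the degree-$p\le k$ feature Gram matrices, reads off low-rank spikes of size $\Theta(nm/d^p)$ for each degree, and for degree $k$ performs a per-multi-index $2\times2$ reduction on $\{\frac1m\vB_m\psi_k^{\boldsymbol j}(\vY),(\vI_N-\frac1m\vB_m)\psi_k^{\boldsymbol j}(\vY)\}$; the small eigenvalue of that reduction, $\mu_I^{>k}+\mu_B^{>k}(m-1)(1-c_{X,k}^{\boldsymbol j}/c_k^{\boldsymbol j})$, is precisely the Schur complement you describe.

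The one difference is that the paper assumes the kernel is diagonal in the $L^2(P_t)$-orthogonal polynomials rather than deriving this from a Taylor expansion. If you derive it, two points need care:
\begin{itemize}
\item The coefficients of $\vI_N$ and $\vB_m$ must be the degree-$>k$ tails. Using the linear-regime $\mu_I,\mu_B$ would double-count the diagonal and intra-cluster contributions of degrees $2,\dots,k$.
\item The operator-norm bound for $p>k$ holds only for the degree-$p$ orthogonal projections, not for raw Hadamard powers $(\vY\tran\vY/d)^{\circ p}$. Raw powers carry lower-degree components: at $k=1$, the $p=2$ term alone contains a rank-one mean with eigenvalue $\Theta(N/d)\to\infty$.
\end{itemize}
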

See Appendix~\ref{appendix:polynomial_scaling} for the precise statement. Since a mode of
eigenvalue $\lambda$ is learned on the timescale $\tau_\lambda\sim nm/(d\lambda)$, the
generalization sub-bulks are learned degree by degree, on timescales $\tau_p\sim d^{p-1}$:
at $m=1$ this is the classical hierarchical picture of kernel regression \citep{Ghorbani_2021}, in which
low-order moments of the data are learned first and higher-order statistics later
\citep{ricci2025feature, bardone2026}. Data repetition ($m>1$) preserves this hierarchy but
adds the memorization bulk at $\Theta(m)$, learned on $\tau_{\mathrm{mem}}\sim n/d$. As in
the linear regime, memorization is delayed relative to every degree the model can learn, by
a factor that grows with the sample complexity.
Note that, strictly speaking, full memorization---the ability of the learned
score to reproduce the training samples---requires $n$ to grow faster than any
power of $d$, i.e.\ $n\gg d^k$ for every fixed $k$. At any finite $k$ only an
approximation of the empirical score can be learned, and memorization and
overfitting are correspondingly partial \citep{misiakiewicz2022spectrum,latourellevigeant2026generalizationmemorizationoverfittingdiffusion}.\looseness=-1

\section{Experiments}
\label{sect:Numerical}
\label{sect:experiments}

\begin{figure}
    \centering
    \includegraphics[width=0.403\linewidth]{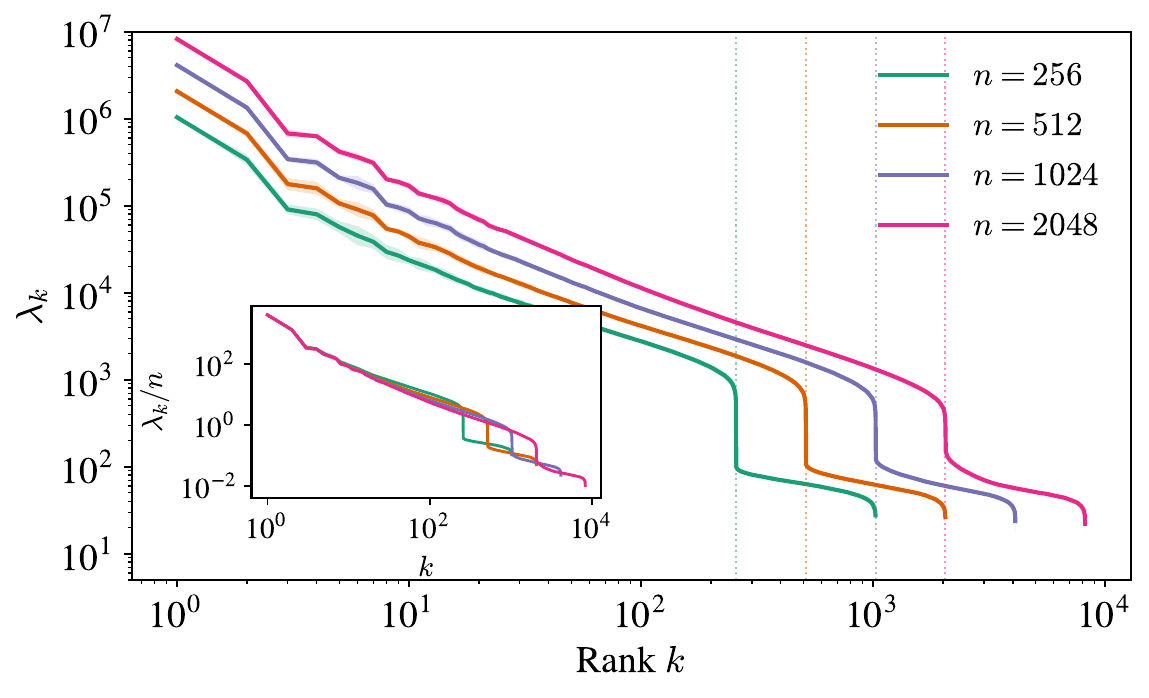}
    \includegraphics[width=0.392\linewidth]{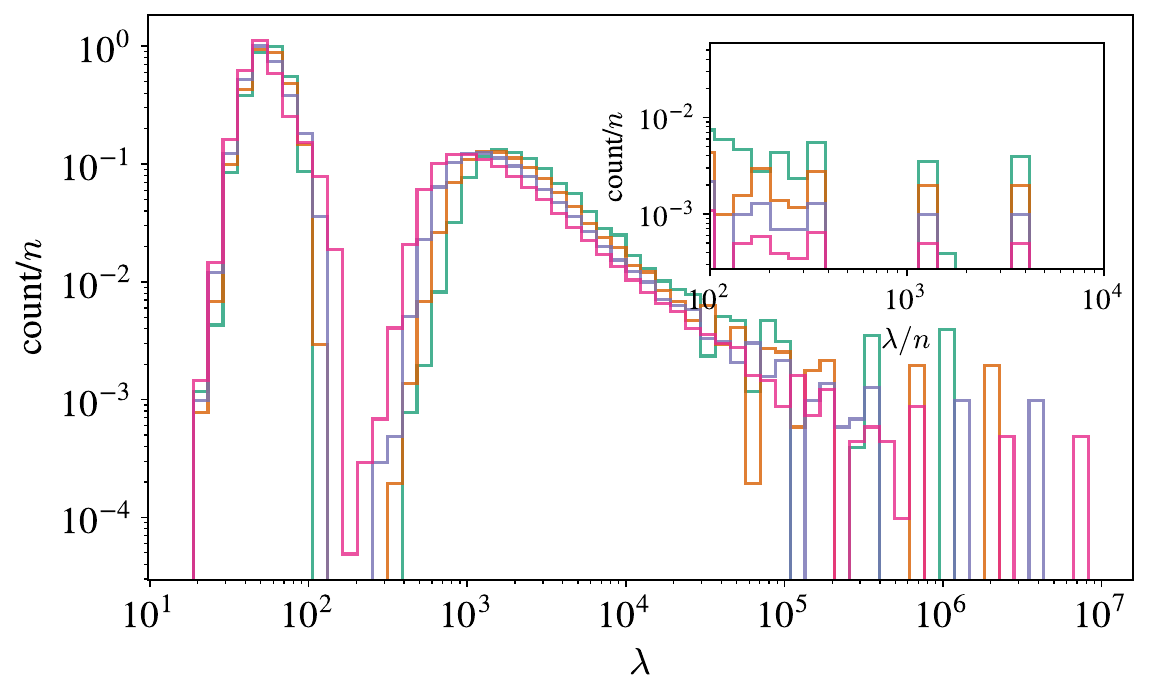}
    \caption{\textbf{Structure of the generalization and memorization bulks in the CNTK spectrum on CelebA.} \emph{(Left)} Ordered eigenvalues for several $n$ at $m=4$. The dotted lines indicate $r=n$. Inset: Same, rescaled by $n$.
    \emph{(Right)} Eigenvalue histogram with counts rescaled by $n$. Inset: collapse of the top eigenvalues under rescaling by $n$. $s=0.05$, averaged over 10 training sets.\looseness=-1}
    \label{fig:cntk_scaling}
\end{figure}

\paragraph*{Data, noising process, and model.} We work with the CelebA face dataset \citep{Liu2015_CelebA}, which we convert to grayscale, downsample to $32\times32$ pixels, and standardize. For computational convenience, our experiments adopt a flow-matching parametrization \citep{liu2022rectified, lipman_2023, albergo2023stochastic} in which the noising process is the linear interpolant\footnote{This differs from the Ornstein--Uhlenbeck interpolant of Sect.~\ref{sect:Analytical} only by a time reparametrization and a scaling of the input and the target, so that the analytical results remain unchanged (see Appendix~\ref{app:OU_vs_RF}). \looseness=-1}
$\vY^{\nu\alpha}_s = (1-s)\,\vx^{\nu} + s\,\vxi^{\nu\alpha}$, with $s\in [0,1]$, between a clean data point $\vx^\nu \sim P_0$ and an independent Gaussian noise $\vxi^{\nu\alpha}\sim\mathcal{N}(0,\vI_{d})$. The regression targets are the conditional velocities $\vxi^{\nu\alpha}-\vx^\nu$. Throughout the section, we use $n$ clean images and $m$ independent noise realizations per data point such that the empirical Gram matrix $\vG^{\nu\alpha,\mu\beta} = K(\vY^{\nu\alpha}_s,\vY^{\mu\beta}_s)$ has size $N\times N$ with $N=nm$. We take the velocity model to be a depth-$D$ vanilla convolutional neural network (CNN) with $3 \times 3$ filters and ReLU activations, and a final readout layer. Our choice is motivated by the existence of a closed-form Convolutional Neural Tangent Kernel (CNTK) in the infinite-width (lazy) limit \citep{Arora2019}. This allows us to replace the network by its kernel predictor, whose Gram matrix is computed in closed form rather than from parameter gradients (see Appendix~\ref{app:ntk:cnn}).
We then test whether these spectral properties persist in the feature-learning regime by training a finite-width U-Net in Sect.~\ref{sect:UNET}. \looseness=-1

\paragraph*{Gram-matrix eigendecomposition and truncated kernel regression.} At every flow time $s$, we approximate the top-$r$ eigenpairs of the Gram matrix $\vG$ via the randomized method of \citet{Halko2011} in which, for memory efficiency, the full Gram matrix is never materialized. Keeping only the top $r$ modes of $\vG$ defines the spectrally truncated estimator $\hat{\vv}_r$ of the velocity field $\mathbb E[\vxi-\vx_0\mid\vY_s=\vx]$, and approximately corresponds to early stopping of training in the lazy regime at $\tau\sim nm/(d\lambda_r)$, cf.~Eq.~\eqref{eq:estimation_score_DSM}. More details can be found in Appendix~\ref{app:subsec:cntk}.\looseness=-1

\paragraph*{Metrics.} To quantify the ability of the approximated velocity field to memorize, we adopt the first-to-second nearest-neighbor ratio criterion used in several previous studies \citep{yoon2023diffusion, gu2023memorization, bonnaire2025diffusionmodelsdontmemorize} and compute the fraction $f_{\mathrm{mem}}\in[0,1]$ of samples $\tilde{\vx}_0$ generated by the backward dynamics that are memorized.
We initialize the trajectories at $\tilde{\vx}_1 \in \{\vxi^{\mu\alpha}\}$; our experiment can therefore be viewed as a long-time version of the U-turn protocol of \citet{sclocchi_2024, Behjoo_2025}. As a measure of the quality of the generated images, we also report the Fréchet Inception Distance \citep[FID,][]{heusel2017gans} between 2,048 samples generated from fresh noise and 2,048 held-out test images. See Appendix~\ref{app:subsec:general} for more details. \looseness=-1

\subsection{Lazy regime: Gram matrix for real data}

\paragraph*{Structure of the Gram matrix spectrum in CelebA.}

Figure~\ref{fig:cntk_scaling} displays the spectrum of the CNTK Gram matrix computed from noised images of CelebA at fixed $s=0.05$. It exhibits the two-bulk structure predicted by Theorem~\ref{thm:Spectrum_gram_linear}, including a component absent in the standard $m=1$ case (see Appendix~\ref{app:subsec:cntk} for the scaling with $m$). The first bulk of large eigenvalues behaves like that of a classical Gram matrix: at fixed $m$, the top eigenvalues scale linearly with $n$ and collapse once rescaled (see insets).
This generalization bulk contains $n$ eigenvalues rather than the $d$ predicted for sub-Gaussian data in the proportional regime, a known feature of image statistics \citep{canatar2021spectral}. This likely reflects the strong anisotropy and low intrinsic dimension of natural images, whose covariance spectrum decays as a power law. For such
data, the bulks of Theorem~\ref{thm:poly} are not well separated and can merge \citep{wortsman2025kernelridgeregressionpowerlaw}. The key qualitative phenomenon nonetheless persists: the spectrum is partitioned into a first bulk scaling with $n$, and a second which is independent of it. We show below that they correspond to generalization and memorization, respectively.\looseness=-1

\begin{figure}[t]
    \centering
    \includegraphics[width=0.3525\linewidth]{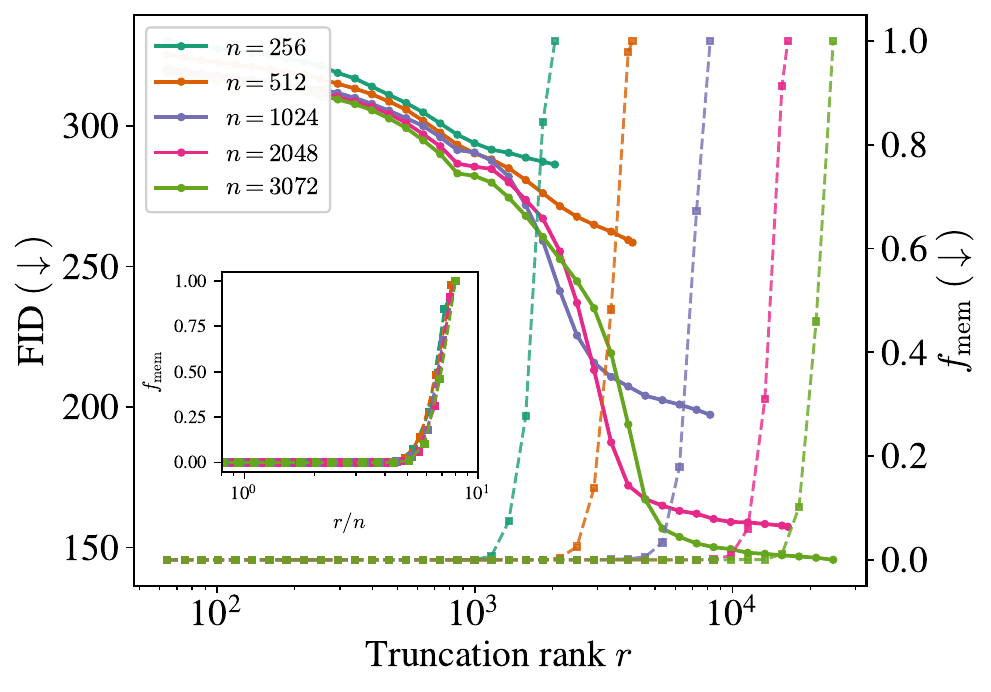}
    \includegraphics[width=0.3525\linewidth]{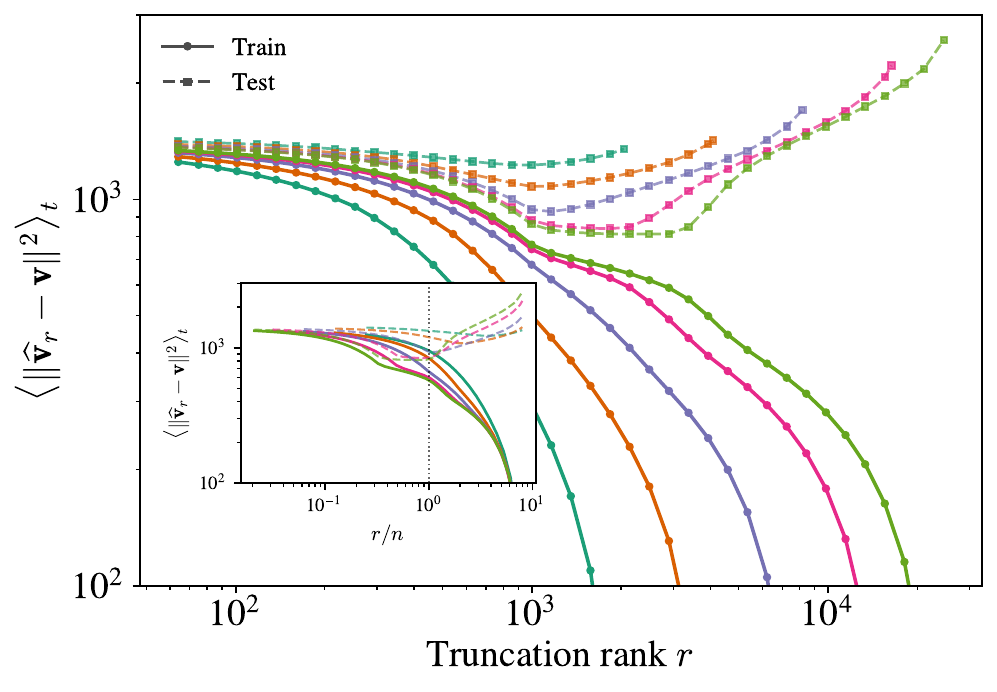}
    \includegraphics[width=0.21185\linewidth]{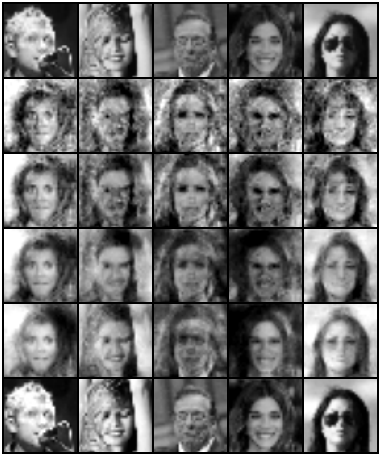}
    \caption{\textbf{Spectrally truncated kernel regression and generalization--memorization transition.}
    \emph{(Left)} FID (solid) and $f_\mathrm{mem}$ (dashed) vs. truncation rank $r$ for several $n$. Inset: $f_\mathrm{mem}$ vs. $r/n$.
    \emph{(Middle)} Time-averaged $L_2$ training (solid) and test (dashed) losses; the inset shows the same curves against $r/n$.
    \emph{(Right)} Top row: training images. Rows~2--6: generated samples ($n=3072$) starting from a fixed training noise associated with the top-row clean image at
    $r\in\{2048,\,4096,\,8192,\,16384,\,N\!=\!nm\}$.\looseness=-1
   }
   \label{fig:cntk_truncation}
\end{figure}

\paragraph*{Generalization--memorization transition in truncated kernel regression.}

Figure~\ref{fig:cntk_truncation} varies the truncation rank $r$ of the CNTK Gram matrix on CelebA for $n\in\{256,512,1024,2048,3072\}$ and $m=8$. At small $r$, the FID (left panel) of samples generated with the truncated velocity field $\hat{\vv}_r$ decreases steadily while $f_\mathrm{mem}$ (in inset) stays near zero. For small $n$, however, $f_\mathrm{mem}$ rises before the FID reaches its minimum: the estimator starts memorizing training points before it produces plausible images.
As $n$ grows, the FID drops at roughly the same $r$ but reaches lower minima, saturating beyond $n=2048$. In this regime, the rank-$r$ approximated velocity generates CelebA-like samples without copying any training point over a growing range of $r$. The threshold marking the onset of generalization is therefore essentially independent of $n$, whereas the memorization threshold scales linearly with $n$, as shown by the inset, where rescaling $r$ by $n$ makes all curves collapse. The train and test losses (middle panel) exhibit the same pattern: the training error decreases monotonically with $r$, while the test error develops a clean minimum around $r\approx n$ for $n$ sufficiently large, in agreement with Sect.~\ref{sect:Analytical}.
The right panel illustrates this transition at fixed $n=3072$: integrating the trajectories backward yields blurry reconstructions at small $r$, which sharpen as $r$ grows. \looseness=-1
Across an entire intermediate regime, the model produces \emph{new} samples, that differ from their paired training images (top row) in gender, accessories and hairstyles. At larger $r$, these features are progressively morphed into those of the associated training image, until eventually, at $r=N=nm$ (last row), every column reproduces its training image.
These three diagnostics agree on a transition from generalization to memorization as $r$ increases, confirming the role of the successive bulks of eigenvalues in the CNTK Gram matrix in separating the two phases. This behavior is in agreement with the previous findings of \citet{bonnaire2025diffusionmodelsdontmemorize}: as the two bulks of eigenvalues are learned on different timescales, they create a dynamical separation between a generalization and a memorization phase, whose timescale ratio is proportional to $n$. \looseness=-1

\begin{figure}
    \centering
    \includegraphics[width=0.35\linewidth]{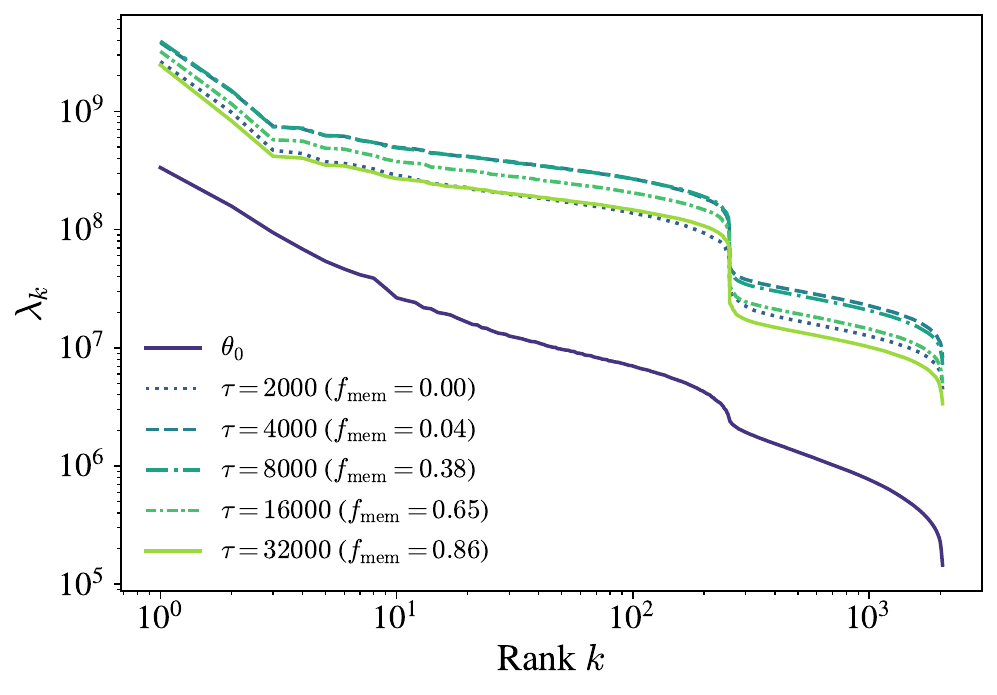}
    \includegraphics[width=0.35\linewidth]{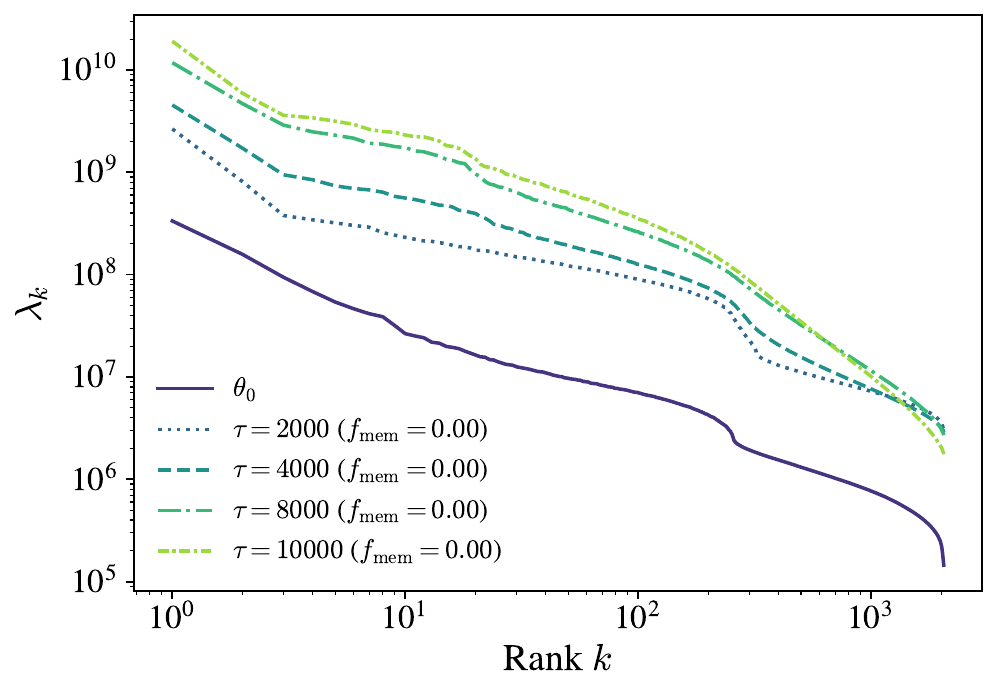}
    \caption{\textbf{Training-time-dependent empirical Gram eigenspectrum of a U-Net.} Eigenvalues of the empirical Gram matrix computed for several training times $\tau$ at $s=0.23$ for \emph{(Left)} the unregularized dynamics, and \emph{(Right)} the $L_2$-regularized dynamics.\looseness=-1
    }
    \label{fig:time_evolution_NTK}
\end{figure}

\subsection{Feature-learning regime and training-time-dependent NTK}
\label{sect:UNET}

We now investigate whether the spectral properties persist beyond the lazy regime, when the kernel evolves throughout training.
To this end, we train a finite-width U-Net in the feature-learning
regime on $n=256$ CelebA images with $m=8$ and track the eigenvalues of its empirical NTK Gram matrix, shown in the left panel of Fig.~\ref{fig:time_evolution_NTK}.
At initialization, the spectrum displays the same two-bulk structure as the CNTK, with a gap at $r\approx n$. While eigenvalue amplitudes shift non-monotonically during training, the bulk separation persists and the spectral gap even sharpens for $\tau >0$. Strikingly, even though training spans the entire generalization--memorization transition (with $f_\mathrm{mem}$ going from 0 to 0.86), the shape of the spectrum (two bulks with a gap at $n$) is preserved. This suggests that our analysis and the crucial role of the spectral bias we exhibit extend beyond the lazy regime. To further connect the second bulk to memorization, we perform an $L_2$-regularized training with a regularization parameter set to the $n$-th eigenvalue of the Gram matrix, tracked dynamically during training at each flow time $s$ (right panel of Fig.~\ref{fig:time_evolution_NTK}). This procedure therefore effectively damps the modes with eigenvalues below $\lambda_n$ and has two striking consequences: $f_\mathrm{mem}$ remains exactly zero over $10{,}000$ training steps (compared to $\approx0.5$ with no regularization) without sacrificing the test loss (see Appendix~\ref{app:subsec:unet}), and the second bulk gets progressively smoothed out during training. This is consistent with memorization being indeed controlled by this second component and provides a spectral interpretation of the findings of \citet{baptista_2025}, according to which regularization suppresses memorization in diffusion models. \looseness=-1

\section{Conclusion and limitations}
\label{sect:conclusion}
We analyze the spectrum of the Gram matrix that governs the lazy training dynamics of diffusion models and identify two well-separated bulks: a \emph{generalization} bulk, already present in standard kernel regression, and a \emph{memorization} bulk created by the repetition of each training sample with several noise realizations---specific to diffusion models. The two bulks are separated by a spectral gap that grows with the training set size $n$, leading to a separation of training timescales. We establish this structure analytically for both linear and polynomial sample complexities and, in the linear regime, derive closed-form equations for the bias and the variance: the generalization bulk yields a bias and a variance that vanish with the sample complexity, whereas the memorization bulk contributes a $\Theta(1)$ bias that has no counterpart in the supervised setting. Experiments on CelebA with the convolutional NTK exhibit the same spectral hierarchy, and a spectral truncation controls the generalization--memorization transition. The two-bulk structure further persists in a finite-width U-Net trained in the feature-learning regime, suggesting that this spectral mechanism is a general property of diffusion models. \looseness=-1

\paragraph*{Limitations.}
Our analytical results rely on several assumptions: the lazy regime, a fixed diffusion time $t$, and a finite number $m$ of noise realizations frozen throughout training.
While the experiments strongly suggest that the mechanism extends well beyond these hypotheses, a rigorous treatment of the feature-learning regime remains open. Finally, the gap between the predicted and observed numbers of
generalization eigenvalues calls for more realistic data models.\looseness=-1

\section*{Acknowledgment}

During the writing of this work, we became aware that Hugo Latourelle-Vigeant, Sinho Chewi, Aram-Alexandre Pooladian, John Sous and Theodor Misiakiewicz \citet{latourellevigeant2026generalizationmemorizationoverfittingdiffusion} had also been investigating the memorization–generalization transition
in the lazy regime. 
Their work is close to ours but quite complementary. Both works agree on the resulting theoretical picture. RU thanks Arie Wortsman for insightful discussions on kernel regression and anisotropic data. GB acknowledges support from the French government under the management of ANR: PEPR-IA (project MAGICALL ANR-25-PEIA-0004) and PR[AI]RIE-PSAI (ANR-23-IACL-
0008).
This work was performed using HPC resources from GENCI-IDRIS (Grants 2026-AD011016319R1 \& 2026-A0201016159).

\bibliography{bibliography}

\newpage

\begin{center}
    {\LARGE First Learn, Then Memorize: The Spectral Bias of Diffusion Models}\\[3ex]
    {\large Rapha\"el Urfin, Tony Bonnaire, Giulio Biroli, Marc M\'ezard}
\end{center}

\appendix

\providecommand{\vTheta}{\bm{\Theta}}

\providecommand{\veta}{\bm{\eta}}
\providecommand{\vPhi}{\bm{\Phi}}

\providecommand{\vvarphi}{\bm{\varphi}}
\providecommand{\vP}{\bm{\mathrm{P}}}
\providecommand{\vb}{\bm{\mathrm{b}}}
\providecommand{\vu}{\bm{\mathrm{u}}}
\providecommand{\ve}{\bm{\mathrm{e}}}
\providecommand{\vS}{\bm{\mathrm{S}}}
\providecommand{\vt}{\bm{\mathrm{t}}}
\providecommand{\vM}{\bm{\mathrm{M}}}
\providecommand{\vQ}{\bm{\mathrm{Q}}}
\providecommand{\vR}{\bm{\mathrm{R}}}
\providecommand{\vD}{\bm{\mathrm{D}}}
\providecommand{\vE}{\bm{\mathrm{E}}}
\providecommand{\vK}{\bm{\mathrm{K}}}
\providecommand{\vJ}{\bm{\mathrm{J}}}
\providecommand{\vC}{\bm{\mathrm{C}}}
\providecommand{\vO}{\bm{\mathrm{O}}}
\providecommand{\vH}{\bm{\mathrm{H}}}

\phantomsection
\addcontentsline{toc}{part}{Appendix}
{\centering\LARGE\bfseries Appendix\par}
\vspace{1.2em}

This appendix provides detailed derivations and additional results supporting the main text.
Sect.~\ref{app:appendix_NTK} reviews the Neural Tangent Kernel framework and gives
explicit expressions of the kernel for common architectures.
Sect.~\ref{app:OU_vs_RF} shows the equivalence between the Ornstein--Uhlenbeck process used
in Sect.~\ref{sect:Analytical} and the rectified-flow interpolant used in
Sect.~\ref{sect:Numerical}.
Sect.~\ref{app:sect:related_works} discusses related work in more depth and the
differences with this work.
Sect.~\ref{app:proof_analytical} then contains the proofs of the analytical results of
Sect.~\ref{sect:Analytical}.
Finally, Sect.~\ref{app:sect:numerical} gives the numerical details of
Sect.~\ref{sect:Numerical}, for both the closed-form CNTK and the finite-width U-Net.

\section{Neural Tangent Kernel}
\label{app:appendix_NTK}
In this section we present in more detail the lazy regime and the Neural Tangent Kernel.
\subsection{Introduction}
Consider a parametric family of functions $\vs_{\vtheta}:\mathbb{R}^d \to
\mathbb{R}^{d'}$ with $\vtheta \in \mathbb{R}^P$ and a training set $\{\vx^\nu, \vy^\nu\}_\nu$ with $\vx^\nu\in\mathbb{R}^d$ and $\vy^\nu\in\mathbb{R}^{d'}$. Trained under
gradient flow on the square loss $\mathcal{L}(\vtheta) = \frac{1}{2}\sum_\nu
\lVert\vs_{\vtheta}(\vx^\nu) - \vy^\nu\rVert^2$, the parameters evolve
as $\dot{\vtheta} = - \nabla_{\vtheta}\mathcal{L}(\vtheta)$, which
induces dynamics in function space
\begin{align}
    \frac{\dd}{\dd \tau}\,\vs_{\vtheta(\tau)}(\vx)
    \;=\; -\sum_{\nu}
    K\left(\vtheta(\tau);\,\vx,\vx^\nu\right)
    \cdot
    \big(\vs_{\vtheta(\tau)}(\vx^\nu) - \vy^\nu\big),
\end{align}
where the matrix-valued \emph{Neural Tangent Kernel} is
\begin{align}
\label{eq:ntk_def}
    K\left(\vtheta;\,\vx,\vx'\right)_{k,k'}
    \;=\; \sum_{i=1}^{P}
    \partial_{\vtheta_i}\vs_{\vtheta}(\vx)_{k}\,
    \partial_{\vtheta_i}\vs_{\vtheta}(\vx')_{k'},
    \qquad k,k'\in\{1,\dots,d'\}.
\end{align}
A priori the kernel \eqref{eq:ntk_def} depends on the current parameters
$\vtheta(\tau)$ and is therefore a stochastic, time-dependent operator.
The crucial observation of \citet{Jacot_2018} is that in the
infinite-width limit, with the appropriate variance scaling at
initialization, the kernel converges to
a \emph{deterministic} limit $K^\star(\vx,\vx')$, and the deviation of
the parameters from their initial value vanishes:
$\lVert \vtheta(\tau) - \vtheta(0)\rVert \to 0$. In this lazy regime,
the dynamics on $\vs_{\vtheta}$ becomes a linear kernel flow with the
constant kernel $K^\star$.

\subsection{Lazy regime and linearization}
\label{app:ntk:lazy}

The lazy regime can equivalently be stated as a first-order
linearization around the initialization \citep{chizat2020lazytrainingdifferentiableprogramming}:
\begin{align}
    \vs_{\vtheta}(\vx)\;\simeq\;
    \vs_{\vtheta(0)}(\vx) \;+\;
    \nabla_{\vtheta}\,\vs_{\vtheta(0)}(\vx)\cdot
    \big(\vtheta - \vtheta(0)\big).
\end{align}
The features $\nabla_{\vtheta}\,\vs_{\vtheta(0)}(\vx) \in
\mathbb{R}^{d'\times P}$ are random at initialization and \emph{frozen}
throughout training. Training the network is equivalent to ridgeless
linear regression on these random features, and the associated kernel is
exactly the NTK at initialization \eqref{eq:ntk_def} evaluated at
$\vtheta(0)$.
\subsection{Vector-valued outputs and decomposable kernels}
\label{app:ntk:vector}

For a score-matching task one needs a \emph{vector}-valued output
($d' = d$). For the standard NTK parametrizations of fully-connected
and convolutional networks, the matrix-valued NTK has the convenient
decomposable form
\begin{align}
\label{eq:ntk_decomposable}
    K(\vx,\vx')_{k,k'} \;=\; K_{\mathrm{scal}}(\vx,\vx')\,\delta_{k,k'},
\end{align}
i.e.\ a scalar kernel $K_{\mathrm{scal}}(\vx,\vx')$ tensored with the
identity in output space. This factorization holds whenever the output
layer is fully connected with i.i.d.\ initialization and no
output-coupling structure, and it reduces the spectral analysis of the
matrix-valued kernel to that of the scalar kernel: the operator $K$
acts diagonally across output coordinates and its eigenvalues are
$d$-fold copies of the eigenvalues of $K_{\mathrm{scal}}$. In what
follows, we therefore present only the scalar kernels.

\subsection{Kernel gradient flow}
\label{app:ntk:flow}

We derive here the closed-form solution \eqref{eq:estimation_score_DSM} of the main
text. The argument is the standard lazy-regime computation of \citet{Jacot_2018}, adapted
to the DSM loss \eqref{eq:app_train_loss}; its only difference is that the regression target is
the frozen noise $-\vxi/\sqrt{\Delta_t}$ rather than a label.

Recall the training loss \eqref{eq:train_loss} at fixed noise level,
\begin{align}\label{eq:app_train_loss}
  \Ltrain(\vtheta) = \frac{1}{2nmd}\sum_{\nu=1}^{n}\sum_{\alpha=1}^{m}
  \left\lVert \vs_{\vtheta}(\vY^{\nu\alpha}) + \frac{\vxi^{\nu\alpha}}{\sqrt{\Delta_t}}\right\rVert^2 ,
  \qquad \vY^{\nu\alpha} = e^{-t}\vx^\nu + \sqrt{\Delta_t}\,\vxi^{\nu\alpha},
\end{align}
and the gradient flow $\dd\vtheta/\dd\tau = -d^2\,\nabla_{\vtheta}\Ltrain(\vtheta)$.
Differentiating,
\begin{align*}
  \nabla_{\vtheta}\Ltrain(\vtheta)
  = \frac{1}{nmd}\sum_{\nu,\alpha} \nabla_{\vtheta}\vs_{\vtheta}(\vY^{\nu\alpha})^\top
    \left(\vs_{\vtheta}(\vY^{\nu\alpha}) + \frac{\vxi^{\nu\alpha}}{\sqrt{\Delta_t}}\right),
\end{align*}
so that, by the chain rule, the induced evolution of the function at an arbitrary point
$\vy\in\mathbb{R}^d$ is
\begin{align}
\label{eq:app_flow_function}
  \dot{\vs}_{\vtheta}(\vy)
  = \nabla_{\vtheta}\vs_{\vtheta}(\vy)\cdot\frac{\dd\vtheta}{\dd\tau}
  = -\frac{d}{nm}\sum_{\nu,\alpha} K_\tau(\vy,\vY^{\nu\alpha})
    \left(\vs_{\vtheta}(\vY^{\nu\alpha}) + \frac{\vxi^{\nu\alpha}}{\sqrt{\Delta_t}}\right),
\end{align}
with $K_\tau(\vx,\vy)=\nabla_{\vtheta}\vs_{\vtheta}(\vx)\cdot\nabla_{\vtheta}\vs_{\vtheta}(\vy)$
the NTK at training time $\tau$. The factor $d^2$ of the learning rate combines with the
normalization $1/(nmd)$ of the loss into the rate $d/(nm)$ appearing in
\eqref{eq:estimation_score_DSM}; with this choice the generalization eigenvalues
$\Theta(nm/d)$ are learned on times $\tau=O(1)$.

In the infinite-width (lazy) limit, $K_\tau=K$ is constant. By the decomposable structure \eqref{eq:ntk_decomposable} the $d$ output
components decouple and it suffices to treat each of them separately; we therefore drop the
output index and regard all quantities below as vectors in $\mathbb{R}^{N}$, $N=nm$, indexed
by the pair $(\nu\alpha)$.

Let $\vf(\tau)\in\mathbb{R}^{N}$ collect the values of the score on the training points,
$\vf_{\nu\alpha}(\tau)=\vs_{\vtheta(\tau)}(\vY^{\nu\alpha})$, let
$\vXi_{\nu\alpha}=\vxi^{\nu\alpha}/\sqrt{\Delta_t}$, and let $\vG\in\mathbb{R}^{N\times N}$,
$\vG^{\mu\beta,\nu\alpha}=K(\vY^{\mu\beta},\vY^{\nu\alpha})$, be the Gram matrix. Taking
$\vy=\vY^{\mu\beta}$ in \eqref{eq:app_flow_function} gives a closed system,
\begin{align}
  \dot{\vf} = -\frac{d}{nm}\,\vG\,(\vf + \vXi).
\end{align}
With the initialization $\vs_{\vtheta(0)}=0$, integrating gives
\begin{align}
\label{eq:app_residual}
  \vf(\tau) = -\left(\vI_N - e^{-\frac{d\tau}{nm}\vG}\right)\vXi.
\end{align}
The training loss along the flow is therefore
$\Ltrain(\vtheta(\tau)) = \tfrac{1}{2nmd}\Tr\!\big(e^{-\frac{2d\tau}{nm}\vG}\,\vXi\vXi^\top\big)$,
summed over output components.

For a general test point $\vy$, integrating from $0$ to $\tau$ with
$\vs_{\vtheta(0)}(\vy)=0$ gives
\begin{align}
\label{eq:app_filter}
  \vs_{\vtheta(\tau)}(\vy)
  &= -\frac{d}{nm}\,K(\vy,\vY)^\top\int_0^\tau e^{-\frac{d\sigma}{nm}\vG}\,\dd\sigma\;\vXi\\
  &= -K(\vy,\vY)^\top \vG^{-1}\left(\vI_N - e^{-\frac{d\tau}{nm}\vG}\right)\vXi\\
  &= -K(\vy,\vY)^\top \varphi_\tau(\vG)\,\vXi ,
\end{align}
where $\varphi_\tau(\lambda)=\big(1-e^{-\frac{d\tau}{nm}\lambda}\big)/\lambda$ is the
gradient-flow spectral filter and
$K(\vy,\vY)\in\mathbb{R}^{N}$ has entries $K(\vy,\vY^{\nu\alpha})$. Recalling
$\vXi=\vxi/\sqrt{\Delta_t}$, this is exactly \eqref{eq:estimation_score_DSM}.

Diagonalizing $\vG=\sum_{\lambda}\lambda\,\vu_\lambda\vu_\lambda^\top$, with $\vu_\lambda$ the unit eigenvectors of $\vG$, turns
\eqref{eq:app_filter} into a filter acting mode by mode. Since
$\varphi_\tau(\lambda)\simeq 1/\lambda$ for $\lambda \gg nm/(d\tau)$ and
$\varphi_\tau(\lambda)\simeq d\tau/(nm)$ for $\lambda \ll nm/(d\tau)$, at training time
$\tau$ the modes above
\begin{align*}
  \lambda_c \;\sim\; \frac{nm}{d\tau}
\end{align*}
have been learned --- their coefficient has saturated at the ridgeless value $1/\lambda$ ---
whereas those below are still essentially untouched. This is the spectral cutoff of
Sect.~\ref{sect:Analytical}: fixing a training time is equivalent, up to the width of the crossover,
to a hard spectral cutoff at $\lambda_c$, and a mode of eigenvalue $\lambda$ is learned on
the timescale $\tau_\lambda \sim nm/(d\lambda)$. Letting $\tau\to\infty$ recovers the
ridgeless kernel interpolator
$\vs_\infty(\vy) = -K(\vy,\vY)^\top\vG^{+}\vxi/\sqrt{\Delta_t}$, with $\vG^{+}$ the
Moore--Penrose pseudo-inverse; adding an $L_2$ penalty instead replaces $\vG$ by
$\vG+\gamma\vI_N$, as discussed in Sect.~\ref{app:subsec:regularization} below.

\subsubsection{The role of regularization}
\label{app:subsec:regularization}
Adding an $L_2$ penalty $\frac{\gamma}{2nmd}\lVert \vtheta\rVert^2$ to the loss \eqref{eq:app_train_loss} amounts to shifting the Gram matrix by $\gamma \vI_N$ in \eqref{eq:estimation_score_DSM},
\begin{align}
\label{eq:app_ridge_flow}
    \vs_{\vtheta(\tau)}(\vy)=K(\vy,\vY)^\top\left(\vG+\gamma \vI_N\right)^{-1}\left(\vI_N-e^{-\frac{d(\vG+\gamma \vI_N)\tau}{nm}}\right)\left(-\frac{\vxi}{\sqrt{\Delta_t}}\right).
\end{align}
\emph{Derivation.} In the lazy regime the model is linear in its parameters around the
initialization $\vtheta(0)$ that we assume to be equal to 0 for simplicity, at which $\vs_{\vtheta(0)}=0$. As in
Sect.~\ref{app:ntk:flow} we treat one output component at a time and write
$\vs_{\vtheta}(\vy)=\vPhi(\vy)\vtheta$, with $\vPhi(\vy)=\nabla_{\vtheta}\vs_{\vtheta}(\vy)$ the
(constant) feature map, $\vPhi_{\vY}\in\mathbb R^{N\times|\vtheta|}$ its rows at the training
points, so that $K(\vy,\vY)=\vPhi_{\vY}\vPhi(\vy)^\top$ and $\vG=\vPhi_{\vY}\vPhi_{\vY}^\top$.
With the penalty, the gradient flow $\dd\vtheta/\dd\tau=-d^2\nabla_{\vtheta}\big(\Ltrain+\frac{\gamma}{2nmd}\lVert\vtheta\rVert^2\big)$ reads
\begin{align}
  \frac{\dd\vtheta}{\dd\tau}=-\frac{d}{nm}\Big[\vPhi_{\vY}^\top\big(\vPhi_{\vY}\vtheta+\vXi\big)+\gamma\,\vtheta\Big],
  \qquad \vXi=\frac{\vxi}{\sqrt{\Delta_t}} .
\end{align}
Since $\vtheta(0)=0$ and the right-hand side lies in the row space of $\vPhi_{\vY}$ whenever
$\vtheta$ does, $\vtheta(\tau)=\vPhi_{\vY}^\top\va(\tau)$ for some $\va(\tau)\in\mathbb R^N$.
Substituting and using $\vPhi_{\vY}\vPhi_{\vY}^\top=\vG$ gives
$\vPhi_{\vY}^\top\dot\va=-\frac{d}{nm}\vPhi_{\vY}^\top\big[(\vG+\gamma\vI_N)\va+\vXi\big]$,
which is solved by
\begin{align}
  \dot\va=-\frac{d}{nm}\big[(\vG+\gamma\vI_N)\va+\vXi\big],\qquad
  \va(\tau)=-(\vG+\gamma\vI_N)^{-1}\Big(\vI_N-e^{-\frac{d\tau}{nm}(\vG+\gamma\vI_N)}\Big)\vXi .
\end{align}
Finally $\vs_{\vtheta(\tau)}(\vy)=\vPhi(\vy)\vPhi_{\vY}^\top\va(\tau)=K(\vy,\vY)^\top\va(\tau)$,
which is \eqref{eq:app_ridge_flow}. At $\tau\to\infty$ it reduces to kernel ridge regression
with ridge $\gamma$, $K(\vy,\vY)^\top(\vG+\gamma\vI_N)^{-1}(-\vXi)$, i.e.\ \eqref{eq:krr_predictor}.

A regularization strength $m\ll \gamma \ll \psi_nm$ suppresses the
low eigenvalues and thus prevents overfitting, as early stopping does: the window sits
between the memorization bulk, at $\Theta(m)$, and the generalization bulk, at
$\Theta(\psi_n m)$.

Throughout the bias--variance analysis it is convenient to measure the ridge in units of
$m\psi_n=N/d$, the scale of the generalization eigenvalues, and we write
\begin{align}\label{eq:app_gamma_tilde}
    \tilde\gamma \;=\; \frac{\gamma}{m\psi_n}
\end{align}
for the resulting \emph{rescaled} ridge.

\subsection{Two-layer fully-connected network}
\label{app:ntk:2layer}

Consider the two-layer network
\begin{align}
    f_{\vtheta}(\vx) \;=\; \frac{1}{\sqrt{p}}\,\va^{\top}\,
    \sigma\left(\frac{\vW \vx}{\sqrt{d}}\right),
    \qquad \vW \in \mathbb{R}^{p\times d},\quad \va\in\mathbb{R}^p,
\end{align}
with i.i.d.\ Gaussian initialization $\vW_{ij},\va_{i}\sim\mathcal{N}(0,1)$
and a pointwise activation $\sigma$. In the limit $p\to\infty$, the
NTK splits into two contributions, coming from the gradients with respect
to the second-layer and first-layer weights respectively, and converges
to the deterministic limit
\begin{align}
\label{eq:ntk_2layer}
    K^\star(\vx,\vx')
    =
    \underbrace{\mathbb{E}_{\vw\sim\mathcal{N}(0,\vI_d)}\left[
       \sigma\left(\tfrac{\vw^\top\vx}{\sqrt d}\right)
       \sigma\left(\tfrac{\vw^\top\vx'}{\sqrt d}\right)\right]}_{K_{\sigma}(\vx,\vx')}
    +
    \underbrace{\frac{\vx^\top\vx'}{d}\;
    \mathbb{E}_{\vw\sim\mathcal{N}(0,\vI_d)}\left[
       \sigma'\left(\tfrac{\vw^\top\vx}{\sqrt d}\right)
       \sigma'\left(\tfrac{\vw^\top\vx'}{\sqrt d}\right)\right]}_{K_{\sigma'}(\vx,\vx')}.
\end{align}
The first term $K_{\sigma}$ is the so-called \emph{conjugate} (or
NNGP) kernel; the second is the \emph{tangent} kernel of the first
layer.

For inputs of comparable norm
$\lVert\vx\rVert^2,\lVert\vx'\rVert^2 \asymp d$, both expectations
\eqref{eq:ntk_2layer} depend only on the normalized inner products
$\vx^\top\vx'/d$, $\lVert\vx\rVert^2/d$, $\lVert\vx'\rVert^2/d$. When
$\lVert\vx\rVert^2 = \lVert\vx'\rVert^2 = d$ exactly, $K^\star$ is a pure dot-product kernel,
\begin{align}
    K^\star(\vx,\vx') \;=\; f\left(\tfrac{\vx^\top\vx'}{d}\right),
\end{align}
for some scalar function $f$ depending on $\sigma$.

For
$\sigma(u) = \max(u,0)$ (ReLU), an explicit closed-form computation of
the Gaussian integrals in \eqref{eq:ntk_2layer} yields
\begin{align}
\label{eq:ntk_relu_2layer}
    K^\star(\vx,\vx')
    \;=\;
    \frac{\lVert\vx\rVert\,\lVert\vx'\rVert}{d}\;
    f_{\mathrm{ReLU}}\left(\tfrac{\vx^\top\vx'}{\lVert\vx\rVert\,\lVert\vx'\rVert}\right),
    \qquad
    f_{\mathrm{ReLU}}(u) \;=\; \frac{\sqrt{1-u^2}+2u\,(\pi - \arccos u)}{2\pi},
\end{align}
where the two terms come from $\mathbb E[\sigma(z_1)\sigma(z_2)]=\frac{\sqrt{1-u^2}+u(\pi-\arccos u)}{2\pi}$
and $\mathbb E[\sigma'(z_1)\sigma'(z_2)]=\frac{\pi-\arccos u}{2\pi}$ for standard Gaussians
$(z_1,z_2)$ of correlation $u$ \citep{cho2009}. The
function $f_{\mathrm{ReLU}}$ is smooth on $(-1,1)$, with
$f_{\mathrm{ReLU}}(0) = 1/(2\pi)$ and $f'_{\mathrm{ReLU}}(0) = 1/2$. With the He-normalized
activation $\sqrt2\max(u,0)$ used in our experiments (Sect.~\ref{app:kernels_used}) the kernel
is multiplied by $2$.

\subsection{Multilayer fully-connected network}
\label{app:ntk:multilayer}

For a depth-$L$ fully-connected network with widths
$d = n_0, n_1, \dots, n_L = d'$ and activations $\sigma$,
\begin{align}
    \valpha^{(0)}(\vx) \;=\; \vx, \qquad
    \tilde\valpha^{(\ell+1)}(\vx) \;=\; \frac{1}{\sqrt{n_\ell}}\,
    \vW^{(\ell)}\,\valpha^{(\ell)}(\vx) + \vb^{(\ell)}, \qquad
    \valpha^{(\ell)} \;=\; \sigma\big(\tilde\valpha^{(\ell)}\big),
\end{align}
with i.i.d.\ weights $\mathcal N(0,1)$, biases $\mathcal N(0,\beta^2)$, and output
$\tilde\valpha^{(L)}(\vx)$, the conjugate and tangent kernels are computed by a coupled recursion
on $\ell = 1,\dots,L$ \citep{Jacot_2018}. Writing $\Sigma^{(\ell)}(\vx,\vx')$ for the covariance of
the pre-activations at the two inputs, define the $2\times 2$ covariance matrix at layer $\ell$,
\begin{align}
    \bm{\Sigma}^{(\ell)}(\vx,\vx')
    \;=\;
    \begin{pmatrix} \Sigma^{(\ell)}(\vx,\vx) & \Sigma^{(\ell)}(\vx,\vx') \\
    \Sigma^{(\ell)}(\vx',\vx) & \Sigma^{(\ell)}(\vx',\vx')
    \end{pmatrix},
\end{align}
which collects the covariances of the pre-activations $\tilde\valpha^{(\ell)}$. The recursion is
\begin{align}
    \Sigma^{(1)}(\vx,\vx') &\;=\; \frac{\vx^\top\vx'}{d}+\beta^2, \\
    \Sigma^{(\ell+1)}(\vx,\vx')
    &\;=\; \mathbb{E}_{(u,v)\sim\mathcal{N}(0,\bm{\Sigma}^{(\ell)}(\vx,\vx'))}
    \big[\sigma(u)\,\sigma(v)\big] + \beta^2, \\
    \dot\Sigma^{(\ell+1)}(\vx,\vx')
    &\;=\; \mathbb{E}_{(u,v)\sim\mathcal{N}(0,\bm{\Sigma}^{(\ell)}(\vx,\vx'))}
    \big[\sigma'(u)\,\sigma'(v)\big],
\end{align}
for $\ell=1,\dots,L-1$, where $\beta^2$ is the bias variance. The conjugate kernel is
$K_{\mathrm{conj}}^\star = \Sigma^{(L)}$, and the NTK follows from
$\Theta^{(1)}=\Sigma^{(1)}$ and $\Theta^{(\ell+1)}=\Theta^{(\ell)}\dot\Sigma^{(\ell+1)}+\Sigma^{(\ell+1)}$,
i.e.\ from the unrolled product
\begin{align}
\label{eq:ntk_multilayer}
    K^\star(\vx,\vx')
    \;=\;
    \sum_{\ell=1}^{L}\Sigma^{(\ell)}(\vx,\vx')
    \;\prod_{\ell'=\ell+1}^{L}\dot\Sigma^{(\ell')}(\vx,\vx').
\end{align}
At $L=2$ and $\beta=0$ this is $\Sigma^{(1)}\dot\Sigma^{(2)}+\Sigma^{(2)}$, i.e.\
\eqref{eq:ntk_2layer}.
For a homogeneous activation such as ReLU, both $\Sigma^{(\ell)}$ and
$\dot\Sigma^{(\ell)}$ admit closed forms in terms of arc-cosine kernels, leading to a fully explicit (yet recursive) formula
for $K^\star$.

\subsection{Convolutional neural networks}
\label{app:ntk:cnn}
Let the inputs lie in $\mathbb{R}^{C_0\times P}$ with
$P$ pixels and $C_0$ channels. A depth-$L$ CNN is built from
convolutional layers
\begin{align}
    \tilde\alpha^{(\ell+1)}_{c,p}(\vx)
    \;=\; \frac{1}{\sqrt{C_\ell\,|\mathcal Q|}}
    \sum_{c'=1}^{C_\ell}\sum_{q\in\mathcal Q}
    W^{(\ell)}_{c,c',q}\,\alpha^{(\ell)}_{c',p+q}(\vx),
    \qquad
    \valpha^{(\ell)} = \sigma(\tilde\valpha^{(\ell)}),
\end{align}
with i.i.d.\ weights $W^{(\ell)}_{c,c',q}\sim\mathcal N(0,1)$, patch
support $\mathcal Q\subset\mathbb Z^k$ ($k=1,2$ in 1-D/2-D), and
periodic boundary conditions. The readout aggregates the $C_L\times P$
last-layer activations into the $d'$-dimensional output.

Define the layer-$\ell$
pre-activation covariance tensor
$\vSigma^{(\ell)}(\vx,\vx')\in\mathbb R^{P\times P}$, $\ell\ge1$, by
$\vSigma^{(\ell)}_{p,p'}(\vx,\vx')=\frac{1}{C_\ell}
\sum_c\mathbb E[\tilde\alpha^{(\ell)}_{c,p}(\vx)\,\tilde\alpha^{(\ell)}_{c,p'}(\vx')]$, and let
$\vSigma^{(0)}$ be the input overlap below. In the infinite-channel limit ($C_\ell\to\infty$
jointly), the recursion is
\begin{align}
\label{eq:cntk:Sigma_init}
    \vSigma^{(0)}_{p,p'}(\vx,\vx')
    &\;=\; \frac{1}{C_0}\sum_{c=1}^{C_0} x_{c,p}\,x'_{c,p'},
    \qquad
    \vSigma^{(1)}_{p,p'}(\vx,\vx')
    \;=\; \frac{1}{|\mathcal Q|}\sum_{q\in\mathcal Q}
    \vSigma^{(0)}_{p+q,\,p'+q}(\vx,\vx'), \\
\label{eq:cntk:Sigma_rec}
    \vSigma^{(\ell+1)}_{p,p'}(\vx,\vx')
    &\;=\; \frac{1}{|\mathcal Q|}\sum_{q\in\mathcal Q}
    \mathcal T\big[\vSigma^{(\ell)}\big]_{p+q,\,p'+q}(\vx,\vx'),\qquad \ell\ge1,
\end{align}
the input $\valpha^{(0)}=\vx$ entering the first layer without activation, and
where $\mathcal T$ and $\dot{\mathcal T}$ are defined as $\mathcal T[\vSigma]_{p,p'}=
\mathbb E_{(u,v)\sim\mathcal N(0,\,\vSigma_{\{p,p'\}})}[\sigma(u)\sigma(v)]$
with $\vSigma_{\{p,p'\}}=\big(\begin{smallmatrix}\vSigma_{p,p}&\vSigma_{p,p'}\\ \vSigma_{p',p}&\vSigma_{p',p'}\end{smallmatrix}\big)$,
and similarly for $\dot{\mathcal T}$ with $\sigma'$ in place of $\sigma$.

The convolutional NTK \citep{Arora2019} of the pre-activations is the tensor
$\vTheta^{(\ell)}\in\mathbb R^{P\times P}$ defined by $\vTheta^{(1)}=\vSigma^{(1)}$ and
\begin{align}
\label{eq:cntk:theta}
    \vTheta^{(\ell+1)}_{p,p'}(\vx,\vx')
    \;=\;
    \frac{1}{|\mathcal Q|}\sum_{q\in\mathcal Q}
    \Big(\vTheta^{(\ell)}\odot\dot{\mathcal T}\big[\vSigma^{(\ell)}\big]\Big)_{p+q,\,p'+q}(\vx,\vx')
    \;+\;\vSigma^{(\ell+1)}_{p,p'}(\vx,\vx'),
\end{align}
with $\odot$ the entrywise product. Because the patch average acts on the product
$\vTheta^{(\ell)}\odot\dot{\mathcal T}[\vSigma^{(\ell)}]$, the recursion does not unroll into
a product of layer-wise factors as in \eqref{eq:ntk_multilayer}. The readout acts on the
last activations $\valpha^{(L)}=\sigma(\tilde\valpha^{(L)})$, whose NTK tensor is
$\bar\vTheta^{(L)}=\vTheta^{(L)}\odot\dot{\mathcal T}[\vSigma^{(L)}]+\mathcal T[\vSigma^{(L)}]$. The scalar
NTK is then obtained by applying the readout to $\bar\vTheta^{(L)}$. For the two main readout choices, the CNTKs are
\begin{align}
\label{eq:cntk:readout}
    K^\star_{\mathrm{flat}}(\vx,\vx')
    &\;=\; \frac{1}{P}\sum_{p=1}^{P}
    \bar\vTheta^{(L)}_{p,p}(\vx,\vx')
    \quad\text{(flatten + linear)}, \\
    K^\star_{\mathrm{GAP}}(\vx,\vx')
    &\;=\; \frac{1}{P^2}\sum_{p,p'=1}^{P}
    \bar\vTheta^{(L)}_{p,p'}(\vx,\vx')
    \quad\text{(Global Average Pooling)}.
\end{align}

\subsection{Kernels used in the numerical experiments}
\label{app:kernels_used}
We collect here the kernels used in the numerical experiments of the analytical section. In all the synthetic experiments the data
are isotropic Gaussian, $\vSigma=\vI_d$ ($\sigma^2=1$), so that $\Gamma_t=1$, and the kernel
constants $\mu_I,\mu_B,\mu_0,\mu_1$ of Theorem~\ref{app:thm:lin_gram_empirical} are computed
from the same $f$ at the same $t$.

\paragraph*{Two-layer ReLU NTK.} We use the NTK
of a two-layer network with the He-normalized activation $\sigma(z)=\sqrt2\max(0,z)$, both
layers trained. In terms of the arc-cosine kernels of \citet{cho2009},
\begin{align}
  f(u)=\kappa_1(u)+u\,\kappa_0(u),\qquad
  \kappa_1(u)=\frac{\sqrt{1-u^2}+u(\pi-\arccos u)}{\pi},\qquad
  \kappa_0(u)=1-\frac{\arccos u}{\pi},
\end{align}
where $\kappa_1(u)=\mathbb E[\sigma(z_1)\sigma(z_2)]$ and $\kappa_0(u)=\mathbb E[\sigma'(z_1)\sigma'(z_2)]$
for standard Gaussians $(z_1,z_2)$ of correlation $u$. Its Taylor coefficients at the origin
are $f(0)=1/\pi$, $f'(0)=1$ and $f''(0)=3/\pi$; the activation is not odd, so $\mu_0\neq0$
and the rank-one outlier at $\simeq nm\mu_0$ is present and removed as in
Theorem~\ref{thm:Spectrum_gram_linear}. Since the arc-cosine kernels are defined only for
$|u|\le1$, the noised points are projected onto the sphere of radius $\sqrt d$ in the kernel
argument, $f(\hat\vY^{\nu\alpha\top}\hat\vY^{\mu\beta}/d)$ with
$\hat\vy=\sqrt d\,\vy/\lVert\vy\rVert$. At $\sigma^2=1$ this changes the arguments by
$O(d^{-1/2})$ and makes $\vG$ an exact inner-product kernel matrix.

\paragraph*{Dual kernel of $\tanh$.} The bias--variance theory and
the kernel ridge experiments use
\begin{align}
  f(u)=\mathbb E\big[\tanh(z_1)\tanh(z_2)\big],\qquad
  f'(u)=\mathbb E\big[\tanh'(z_1)\tanh'(z_2)\big].
\end{align}
This is the infinite-width limit of the
random-features kernel of Sect.~\ref{app:kernel_rf}, rather than an NTK, and it is an
inner-product kernel of the form assumed throughout. The activation is odd, hence
$f(0)=f''(0)=0$ and $\mu_0=0$ exactly. In the finite-$d$ experiments the kernel arguments are
projected onto the sphere as above (Sect.~\ref{app:subsec:krr_bv}); scores are not rescaled.

\paragraph*{Convolutional NTK.} The CelebA
experiments use the closed-form infinite-width NTK of a depth-$D$ CNN with $3\times3$
filters, ReLU activations and a $1\times1$ readout, computed with the recursion of
Sect.~\ref{app:ntk:cnn}; implementation details are in Sect.~\ref{app:subsec:cntk}.

\paragraph*{Empirical NTK of a U-Net.} In the
feature-learning experiments the kernel is the finite-width NTK
$\nabla_{\vtheta}\vv_{\vtheta}(\vx)\cdot\nabla_{\vtheta}\vv_{\vtheta}(\vy)$ of the U-Net at
training time $\tau$, estimated by random projections as described in
Sect.~\ref{app:subsec:unet}.

\section{Equivalence between the Ornstein--Uhlenbeck and rectified-flow interpolants}
\label{app:OU_vs_RF}
\subsection{Time reparametrization}
The analytical part (Sect.~\ref{sect:Analytical}) is
formulated for the variance-preserving Ornstein--Uhlenbeck (OU) process
\begin{equation}
\label{eq:app:OU_interpolant}
    \vY^{\nu\alpha}_t \;=\; e^{-t}\,\vx^\nu \;+\; \sqrt{\Delta_t}\,\vxi^{\nu\alpha},
\end{equation}
with $\Delta_t \;=\; 1 - e^{-2t}$ and $t$ going from 0 to $\infty$,
while the experimental section (Sect.~\ref{sect:Numerical}) uses the rectified-flow (RF) interpolant
\begin{equation}
\label{eq:app:RF_interpolant}
    \vY^{\nu\alpha}_s \;=\; (1-s)\,\vx^\nu \;+\; s\,\vxi^{\nu\alpha},
\end{equation}
with the time $s$ going from 0 to 1.
Both processes interpolate between the data distribution $P_0$ ($t = 0$ or $s = 0$) and a standard Gaussian distribution $\mathcal{N}(0,\vI_d)$
($t \to \infty$ or $s = 1$), through different interpolants. Both are of the form
\begin{align}
    \vY^{\nu\alpha}=a \vx^\nu+b\vxi^{\nu\alpha}
\end{align}
with $a,b\in[0,1]$: $(a,b)=(e^{-t},\sqrt{\Delta_t})$ with $a^2+b^2=1$ for OU, and $(a,b)=(1-s,s)$
for RF, which is not variance-preserving. The two are related by a change of time and a
global rescaling of the inputs,
\begin{align}
\label{eq:app:time_map}
  \vY^{\nu\alpha}_s=c_s\,\vY^{\nu\alpha}_{t(s)},\qquad
  c_s=\sqrt{(1-s)^2+s^2},\qquad
  e^{-t(s)}=\frac{1-s}{c_s},\qquad \Delta_{t(s)}=\frac{s^2}{c_s^2},
\end{align}
with the same data and noises. For an inner-product kernel the rescaling only changes the
kernel, $f(\vY_s^\top\vY'_s/d)=f_s(\vY_{t(s)}^\top\vY'_{t(s)}/d)$ with $f_s(u)=f(c_s^2u)$, so that
the RF Gram matrix at time $s$ is the OU Gram matrix at time $t(s)$ built with the kernel $f_s$.
\subsection{Substitution rule}
All quantities appearing in the linear-equivalent theorem
(Theorem~\ref{app:thm:lin_gram_empirical}) and hence in the spectrum analysis
(Theorem~\ref{thm:Spectrum_gram_linear}) depend on the OU process only
through the scalar quantities $(e^{-t},\sqrt{\Delta_t})$ and the kernel $f$. By
\eqref{eq:app:time_map}, the results for RF at time $s$ are those for OU at time $t(s)$ with the
kernel $f_s$; equivalently, they are obtained from the OU formulas with the substitution
\begin{equation}
\label{eq:app:substitution}
    e^{-t} \;\longleftrightarrow\; 1 - s, \qquad
    \sqrt{\Delta_t} \;\longleftrightarrow\; s,
\end{equation}
provided the identity $e^{-2t}+\Delta_t=1$ is not used, i.e.\ keeping
$\Gamma=(1-s)^2\sigma^2+s^2$ wherever $\Gamma_t$ appears.

\subsection{Relation between score and velocity field}
The OU and RF parametrizations differ in which target the network learns:
the OU formulation trains a score model on $-\vxi/\sqrt{\Delta_t}$, while
the RF formulation trains a velocity model on $\vxi - \vx^\nu$. The two
targets are related by the algebraic identity
\begin{equation}
\label{eq:app:score_velocity}
    \vv^*(\vx_s, s) \;=\; -\,\frac{\vx_s + s\,\vs^*(\vx_s, s)}{1 - s}, \qquad
    \vs^*(\vx_s, s) \;=\; -\,\frac{(1-s)\,\vv^*(\vx_s, s) + \vx_s}{s},
\end{equation}
which follows from Tweedie's formula applied to either parametrization.
This is a one-to-one affine map between the
trained predictors, so the spectral analysis of $\vG$ controls the
training dynamics of both equivalently.

\section{Extended discussion of related work}
\label{app:sect:related_works}
This section discusses related work in more depth.

\subsection{Effect of the architecture and training dynamics}
\label{app:subsec:architecture_dynamics}

To move beyond the empirical score, a recent line of work studies the effect of parametrizing the score with a neural network. The simplest such parametrization is a linear one, and it has been analyzed at both ends of the sample-complexity spectrum. \citet{merger2026} work in the proportional regime $n\asymp d$ and characterize the overfitting of the empirical risk minimizer for data with a power-law covariance. \citet{Wang2025}, at the other end, work in the population limit and follow the training dynamics, again for power-law data; they exhibit a spectral bias, the modes associated with the largest eigenvalues being learned first. The same sequential picture survives beyond linear models: \citet{bardone2026}, for a two-layer network, show that the statistics of the data distribution are learned successively during training. Closer to the present setting, \citet{george_2025} and \citet{bonnaire2025diffusionmodelsdontmemorize} both take the score to be a random-features network \citep{Rahimi_2007}, an architecture whose infinite-width limit is a kernel method \textemdash{} see Appendix~\ref{app:kernel_rf} for the correspondence between random-features networks and the kernels used here. \citet{george_2025} compute the learning curves of the empirical risk minimizer, while \citet{bonnaire2025diffusionmodelsdontmemorize} characterize the training dynamics and its timescales. We discuss the latter work at greater length in the next subsection.

\subsection{Discussion of \citet{bonnaire2025diffusionmodelsdontmemorize}}

The phenomenon this paper analyzes was identified independently by
\citet{bonnaire2025diffusionmodelsdontmemorize} and \citet{Favero2025_bigger}: a diffusion model trained on a finite dataset passes through a
long window during which it generalizes, and only memorizes its training set much later,
so that early stopping suffices to avoid memorization. Both papers establish
that the width of that window grows with the number of samples $n$.

\citet{bonnaire2025diffusionmodelsdontmemorize} characterize the two timescales
$\tau_{\mathrm{gen}}$ and $\tau_{\mathrm{mem}}$ within a specific analytically tractable
architecture, a random-features network \citep{Rahimi_2007}.
In that model the two timescales are read off the curvature of the loss at initialization.
Since the loss is quadratic in the trainable weights, the Hessian at the origin governs the
whole gradient flow: a direction of curvature $\lambda$ relaxes on a timescale $1/\lambda$. The authors find that its
spectrum splits into fast directions, learned first, and slow directions,
learned much later, and the gap between the two groups of eigenvalues is what opens the
window between $\tau_{\mathrm{gen}}$ and $\tau_{\mathrm{mem}}$. The connection with the
present analysis is direct: for a quadratic loss the Hessian in parameter space and the Gram
matrix in sample space are $\vPhi\tran\vPhi$ and $\vPhi\vPhi\tran$ for the same
feature matrix $\vPhi$, and therefore have the same non-zero spectrum up to normalization.
\citet{Favero2025_bigger} reach the same phenomenology from a kernel argument, stated for an
arbitrary isotropic kernel.

The question addressed here is whether the same phenomenology can be established directly on
the spectrum of the Gram matrix $\vG^{\nu\alpha,\mu\beta}=K(\vY^{\nu\alpha},\vY^{\mu\beta})$.
This object is defined for any architecture, since the time-dependent neural tangent kernel
exists whether or not training is lazy; what the lazy regime adds is that it is constant
along training, which is what makes a closed-form analysis possible, while outside it the
same matrix can still be measured, as we do in Sect.~\ref{sect:Numerical}. This is indeed the
case: the separation of timescales is read off that spectrum
(Theorem~\ref{thm:Spectrum_gram_linear}), through the two bulks located at
$\Theta(nm/d)$ and $\Theta(m)$, and hence the ratio
$\tau_{\mathrm{mem}}/\tau_{\mathrm{gen}}\sim\psi_n$.

\subsection{Concurrent work: \texorpdfstring{\citet{latourellevigeant2026generalizationmemorizationoverfittingdiffusion}}{Latourelle-Vigeant et al.}}
\label{app:subsec:concurrent}
The closest work to this paper is the concurrent study of
\citet{latourellevigeant2026generalizationmemorizationoverfittingdiffusion}. They also study the training dynamics of neural networks trained in the lazy regime on the score-matching task, in the linear regime $n\asymp d$. The main difference is that we fix the number of times each data point is noised to $1\le m<\infty$ and study the associated Gram matrix, while they take $m=\infty$ and focus on the infinite-dimensional kernel operator $\mathcal{K}$ on a vector-valued RKHS
\begin{align}
    (\mathcal{K}f)(\vx)=\int \dd\vy\; p_t^{\mathrm{emp}}(\vy)\, K(\vx,\vy)\, f(\vy),
\end{align}
where $p_t^{\mathrm{emp}}(\vy)$ is the noisy empirical distribution. They derive closed-form equations for the whole gradient-flow trajectory. Moreover, they characterize the generated distribution, which allows them to distinguish overfitting of the empirical score from memorization. Another difference is that they do not compute the bias--variance decomposition, which is an important part of our work.

\paragraph*{Three-timescale picture.} Instead of our two-timescale picture they describe three timescales: on $\tau=O(d)$ the model generalizes, as seen from a decrease in the test loss, then on timescales $\tau=O(n)$ the test loss starts to increase again. On $\tau=d^{1+\Theta(1)}$ the model starts to overfit without yet memorizing, and finally it memorizes on timescales of order $\tau=d^{\omega(1)}$.

\paragraph*{Linear versus polynomial regime.} Their analysis is confined to the linear scaling $n\asymp d$. Our
Sect.~\ref{sect:poly} and Fig.~\ref{fig:spectrum} (\textit{right}) show that the picture survives at $n\asymp d^k$:
generalization splits into $k$ sub-bulks indexed by polynomial degree, while memorization
remains a single $\Theta(m)$, $n$-independent component.

\paragraph*{Empirical evidence.} Their numerical experiments use synthetic
Gaussian data throughout, whereas our experiments of Sect.~\ref{sect:Numerical} use a CNTK on
CelebA and a finite-width U-Net outside the lazy regime.

\subsection{Discussion of \citet{han2024neural}}

\citet{han2024neural,han2026neural} study the training of a two-layer fully-connected ReLU
network by gradient descent to learn the score, and establish a minimax-optimal
generalization bound for the resulting estimator. Their starting point is that the statistical literature on diffusion
models is algorithm-agnostic: it assumes the empirical risk is minimized exactly, and says
nothing about whether gradient descent on a non-convex network actually reaches such a
minimizer. They close that gap by showing that the evolution of the trained network can be
approximated by a sequence of localized kernel regression problems, and by deriving an
early-stopping rule at
which the minimax rate is attained. The extended version \citep{han2026neural} adds that
explicit stopping rule together with the resulting estimation bound, neither of which appears
in the ICLR~2024 paper \citep{han2024neural}, and an experiment generating financial tabular
data on a credit-default dataset.

Two differences in the setting are worth noting. Their network takes the diffusion time as
an input, so the score is learned jointly across noise levels, whereas we work at fixed $t$
and study the spectrum of the Gram matrix at that noise level. Their sampling procedure draws
$N$ i.i.d.\ triples $(t_j,\vx_{0,j},\vx_{t_j})$, so that each clean sample is drawn afresh
and used once, at one diffusion time, with one noise realization, which cannot lead to memorization. In our notation this is
$m=1$. We warn the reader that $m$ denotes the
network width in \citet{han2024neural}, whereas here it denotes the number of noise
realizations per clean sample.

\section{Proofs of the analytical results}
\label{app:proof_analytical}
\subsection{Assumptions}
We assume that the data are of the form $\vx=\vSigma^{1/2}\vz$, where $\vz$ has independent zero-mean, unit-variance, sub-Gaussian entries, with a covariance $\vSigma$ such that $\lambda_{\mathrm{max}}(\vSigma) = O(1)$ and $\Tr(\vSigma)/d$ converges to a constant denoted by $\sigma^2$ as $d \to \infty$.
We focus on inner-product kernels of the form $K(\vx, \vy) = f(\vx^\top \vy / d)$, where $f$ is a smooth function. A canonical example is provided by the NTK of a multilayer neural network in the infinite-width limit.

To derive our results for the polynomial scaling regime ($n\asymp d^k$), we further assume that the kernel admits a diagonal representation in the basis of orthogonal polynomials:
\begin{align}
\label{eq:diagonal_kernel_hypothesis}
    K(\vx, \vy) = \sum_{p \ge 0} \sum_{\lvert \boldsymbol{j} \rvert = p} \frac{\mu_p}{d^p \boldsymbol{j}!} \psi_p^{\boldsymbol{j}}(\vx) \psi_p^{\boldsymbol{j}}(\vy),
\end{align}
where $\{\psi_p^{\boldsymbol{j}}\}$ are the (monic, unnormalized) polynomials obtained by orthogonalizing the monomials in $L^2(P_t)$ (see Lemma~\ref{lemma:ortho_polynomials} for the formal definition). This assumption implies that these polynomials are the eigenfunctions of the kernel. For the uniform distribution on the sphere $\mathbb{S}^{d-1}(\sqrt{d})$, every dot-product kernel is diagonal in the spherical harmonics, with eigenvalues depending only on the degree \citep{Bietti_2019, Ghorbani_2021}; for the isotropic Gaussian $\mathcal{N}(0, \sigma^2 \vI_d)$ the degree-only form holds asymptotically in $d$, the mixing between degrees (e.g.\ $(\vx^\top\vy)^2$ contains degree-$0$ terms) being subleading.

\subsection{Notation}
\label{app:subsec:notations}
Indices $\nu=1,\dots,n$ label the clean samples, $\alpha=1,\dots,m$ the noise realizations and $i=1,\dots,d$ the coordinates. We write $N=nm$ for the cardinality of the training set. $\vI_d$ is the identity matrix in dimension $d$, $\bm{1}_m$ the all-ones vector in dimension $m$ (likewise for $N$) and $\vB_m=\vI_n\otimes\bm{1}_m\bm{1}_m^\top$ is $m$ times the projector on $U=\{\vv^{\nu\alpha}\in \mathbb{R}^{nm}\;: \;\forall\nu,\;\alpha,\;\beta,\;\vv^{\nu\alpha}=\vv^{\nu\beta}\}$. We write $d_1=O(d_2)$ when $d_1/d_2$ remains bounded as $d_1,d_2\rightarrow\infty$, $d_1=\Theta(d_2)$ when both $d_1/d_2$ and $d_2/d_1$ remain bounded, and $d_1\asymp d_2$ when $d_1/d_2$ converges to a positive constant. The data distribution at $t=0$ is denoted $P_0$ while the noisy distribution at time $t>0$ is denoted $P_t$. We introduce $\vSigma=\mathbb{E}_{P_0}[\vx\vx^\top]\in\mathbb{R}^{d\times d}$ the data covariance and denote $\vv_\lambda\in\mathbb{R}^{d}$ its eigenvector associated with the eigenvalue $\lambda$. We write $\Delta_t=1-e^{-2t}$ for the noise variance, $\vSigma_t=e^{-2t}\vSigma+\Delta_t\vI_d$ for the covariance at noise level $t$ and $\lambda_t=e^{-2t}\lambda+\Delta_t$ for the eigenvalue of $\vSigma_t$ associated with $\vv_\lambda$; $\sigma^2=\Tr(\vSigma)/d$ and $\Gamma_t=\Tr(\vSigma_t)/d=e^{-2t}\sigma^2+\Delta_t$. Finally $\psi_n=n/d$ is the sample complexity and $\tilde\gamma=\gamma/(m\psi_n)$ the ridge rescaled by $N/d$, the scale of the generalization eigenvalues.

\subsection{Proof of the linear equivalent of the Gram matrix}
We first state the linear-equivalence theorem referred to in the main text.
\begin{thm}[Linear Equivalent of the Gram Matrix with Repeated Data]
\label{app:thm:lin_gram_empirical}
Consider the Gram matrix $\vG \in \mathbb{R}^{N \times N}$ ($N=nm$) defined by $\vG^{\nu\alpha, \mu\beta} = f\left(\frac{\sum_{i=1}^d\vY_i^{\nu\alpha}\vY_i^{\mu\beta}}{d} \right)$, with training samples $\vY^{\nu\alpha} = e^{-t}\vx^\nu + \sqrt{\Delta_t}\vxi^{\nu\alpha}$. In the limit $n, d \to \infty$ with $n/d \to \psi_n$ and $m = O(1)$, the matrix $\vG$ is asymptotically equivalent in spectrum to:
\begin{equation}
    \vG_{\mathrm{lin}} = \mu_I \vI_N + \mu_B \vB_m + \mu_0 \bm{1}_N \bm{1}_N^\top + \mu_1 \frac{\vY^\top \vY}{d},
\end{equation}
where $\vB_m = \vI_n \otimes \bm{1}_m \bm{1}_m^\top$ and the coefficients are given by:
\begin{small}
\begin{align*}
    &\mu_I = f(\sigma^2 e^{-2t} + \Delta_t) - f(\sigma^2 e^{-2t}) - \Delta_t f'(0), \quad &&\mu_B = f(\sigma^2 e^{-2t}) - f(0) - \sigma^2 e^{-2t} f'(0), \\
    &\mu_1 = f'(0), \quad &&\mu_0 = f(0) + \tfrac{1}{2d^2} f''(0) \Tr(\vSigma_t^2).
\end{align*}
\end{small}
\vspace{-1em}
\end{thm}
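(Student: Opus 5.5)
We follow the strategy of \citet{elkaroui_2010}: Taylor-expand $f$ entrywise, treating three classes of entries separately. These are the diagonal entries ($\nu=\mu$, $\alpha=\beta$), the intra-cluster off-diagonal entries ($\nu=\mu$, $\alpha\neq\beta$), and the inter-cluster entries ($\nu\neq\mu$). Goal: show $\lVert\vG-\vG_{\mathrm{lin}}\rVert_{\mathrm{op}}\to0$ in probability, which implies spectral equivalence (Weyl's inequality). Up to a rank-one perturbation the same conclusion is obtained in the weaker sense of convergence of the empirical spectral distribution.

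\emph{Step 1: concentration of inner products.} By sub-Gaussianity and Hanson--Wright:
\begin{itemize}
\item $\lVert\vY^{\nu\alpha}\rVert^2/d=\Gamma_t+O(d^{-1/2})$.
\item For $\alpha\neq\beta$, $\vY^{\nu\alpha\top}\vY^{\nu\beta}/d=e^{-2t}\sigma^2+O(d^{-1/2})$, since the clean part $e^{-2t}\lVert\vx^\nu\rVert^2/d$ concentrates and the cross and noise--noise terms are $O(d^{-1/2})$.
\item For $\nu\neq\mu$, $\vY^{\nu\alpha\top}\vY^{\mu\beta}/d=O(d^{-1/2})$.
\end{itemize}
All bounds hold uniformly over the $N^2$ pairs up to $\log$ factors, by a union bound.

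\emph{Step 2: diagonal and block-diagonal parts.}
\begin{itemize}
\item Diagonal entries: $f(\Gamma_t)+o(1)$ uniformly, so the diagonal matrix is $f(\Gamma_t)\vI_N+o(1)$ in operator norm.
\item Intra-cluster entries: $f(e^{-2t}\sigma^2)+o(1)$, each a fluctuation of size $O(d^{-1/2}\log d)$. The corresponding matrix is block diagonal with $m\times m$ blocks and $m=O(1)$, so its operator norm is bounded by the maximal entrywise error times $m$. This gives $f(e^{-2t}\sigma^2)(\vB_m-\vI_N)+o(1)$.
\end{itemize}
Note that $\vB_m$ is precisely the indicator of the block-diagonal pattern.

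\emph{Step 3: the off-block part.} For $\nu\neq\mu$, expand $f(u)=f(0)+f'(0)u+\tfrac12 f''(0)u^2+O(|u|^3)$.
\begin{itemize}
\item Constant term: gives $f(0)\bm{1}_N\bm{1}_N\tran$ restricted off-block.
\item Linear term: gives $f'(0)\vY\tran\vY/d$ off-block.
\item Quadratic term: $\tfrac12 f''(0)(\vY^{\nu\alpha\top}\vY^{\mu\beta}/d)^2$ splits into its mean $\Tr(\vSigma_t^2)/d^2$, which contributes a rank-one correction folded into $\mu_0$, plus a centered matrix. As in El~Karoui, this centered matrix has operator norm $O(N/d\cdot d^{-1/2})\to0$ via a moment or $\epsilon$-net bound.
\item Cubic remainder: entries $O(d^{-3/2}\log^{3/2} d)$, so the Frobenius norm is $O(N d^{-3/2})=o(1)$.
\end{itemize}

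\emph{Step 4: reassembly.} We complete the off-block restrictions to full matrices and read off the coefficients.
\begin{itemize}
\item Adding back the diagonal blocks of $\mu_1\vY\tran\vY/d$ costs, on the diagonal, $f'(0)\Gamma_t=f'(0)(e^{-2t}\sigma^2+\Delta_t)$, and on intra-cluster entries $f'(0)e^{-2t}\sigma^2$.
\item Adding back the blocks of $f(0)\bm{1}\bm{1}\tran$ costs $f(0)\vB_m$.
\end{itemize}
Collecting coefficients of $\vI_N$ and $\vB_m$:
\begin{align*}
\mu_B&=f(e^{-2t}\sigma^2)-f(0)-e^{-2t}\sigma^2 f'(0),\\
\mu_I&=f(\Gamma_t)-f(e^{-2t}\sigma^2)-\Delta_t f'(0),
\end{align*}
matching the statement. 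The substitution of the block-diagonal parts of $\vY\tran\vY/d$ by their concentrated values is again controlled blockwise with $m=O(1)$.

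\emph{Main obstacle.} The hard step is the centered quadratic (and higher) term in Step 3. Unlike El~Karoui's i.i.d. setting, rows within a cluster share $\vx^\nu$, so the entries of the centered matrix are correlated within blocks. Our plan is to decompose $\vY^{\nu\alpha}=e^{-t}\vx^\nu+\sqrt{\Delta_t}\vxi^{\nu\alpha}$ and expand $(\vY^{\nu\alpha\top}\vY^{\mu\beta})^2$ into products of independent Gram-type matrices. Each such product is a Kronecker-structured matrix, of the form $\vA\otimes\bm{1}_m\bm{1}_m\tran$ for pure-data terms or an i.i.d.-noise Gram matrix otherwise. We then apply the known El~Karoui bound to each piece, losing at most a factor $m$.
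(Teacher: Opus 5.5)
Your proposal follows the paper's proof essentially step for step. It uses the same three index regimes, Taylor expansions around $\Gamma_t$, $\sigma^2e^{-2t}$ and $0$, entrywise control of the diagonal and $m\times m$ block-diagonal remainders using $m=O(1)$, a Frobenius bound for the cubic remainder, an appeal to El~Karoui for the centered quadratic term, and Weyl's inequality. Your reassembly in Step~4 gives the correct $\mu_I,\mu_B,\mu_0,\mu_1$. The block-diagonal piece $\tfrac{1}{2d^2}f''(0)\Tr(\vSigma_t^2)\vB_m$ that you silently drop when folding the quadratic mean into $\mu_0$ has norm $O(m/d)$, so that is harmless.

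The paper does not carry out the step you flag as the main obstacle either; it just asserts $\lVert\vQ^{(2)}-\mathbb{E}\vQ^{(2)}\rVert_{\mathrm{op}}=o(1)$ ``as in El~Karoui''. However, the reduction you sketch for it does not go through as stated.
\begin{itemize}
\item Squaring $\vY^{\nu\alpha\top}\vY^{\mu\beta}=e^{-2t}\vx^{\nu\top}\vx^{\mu}+e^{-t}\sqrt{\Delta_t}\,(\vx^{\nu\top}\vxi^{\mu\beta}+\vxi^{\nu\alpha\top}\vx^{\mu})+\Delta_t\,\vxi^{\nu\alpha\top}\vxi^{\mu\beta}$ gives ten terms. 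Only two have the forms you list: $(\vx^{\nu\top}\vx^{\mu})^2$, which is $\vA\otimes\bm{1}_m\bm{1}_m^\top$, and $(\vxi^{\nu\alpha\top}\vxi^{\mu\beta})^2$, which is El~Karoui's matrix for $N$ i.i.d.\ Gaussians.
\item The mixed terms, e.g.\ $(\vx^{\nu\top}\vxi^{\mu\beta})^2$, $(\vx^{\nu\top}\vx^{\mu})(\vxi^{\nu\alpha\top}\vxi^{\mu\beta})$ or $(\vx^{\nu\top}\vxi^{\mu\beta})(\vxi^{\nu\alpha\top}\vx^{\mu})$, are, after centering, cross-Gram matrices between two different tensorized families. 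They are neither Kronecker-structured nor Gram matrices of an i.i.d.\ sample, so El~Karoui's bound does not apply to them verbatim.
\item Crude bounds only give $O(1)$ here: Frobenius, because the entries are of order $1/d$ on an $N\times N$ matrix, and likewise Schur-product bounds.
\end{itemize}

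A reduction that respects the independence structure is to block by noise index rather than by data versus noise.
\begin{itemize}
\item After a permutation, the cross-cluster part of $\vQ^{(2)}$ is an $m\times m$ array of $n\times n$ blocks. The $(\alpha,\beta)$ block has entries $(\vY^{\nu\alpha\top}\vY^{\mu\beta}/d)^2$ for $\nu\neq\mu$, and the operator norm of the whole is at most $m$ times the largest block norm.
\item Each $\vY^{\nu\alpha}$ is a fixed linear image of the cluster vector $(\vz^\nu,\vxi^{\nu1},\dots,\vxi^{\nu m})$, which has independent entries and is i.i.d.\ across $\nu$. Every block is therefore an El~Karoui-type matrix for a single i.i.d.\ sample and a bounded bilinear form.
\item For $\alpha=\beta$ the block is literally El~Karoui's matrix for $\{\vY^{\nu\alpha}\}_{\nu\le n}$.
\item For $\alpha\neq\beta$ the bilinear form is non-symmetric, so his moment argument has to be redone. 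It is now applied to exactly the i.i.d.\ structure it needs.
\end{itemize}
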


\begin{proof}
Write $a=(\nu,\alpha)\in[n]\times[m]$ for the $N=nm$ training indices, and
\[
    q^{ab} \;=\; \frac{1}{d}\sum_{i=1}^d \vY_i^{a}\vY_i^{b},
    \qquad
    \vY^{\nu\alpha} \;=\; e^{-t}\,\vx^{\nu} + \sqrt{\Delta_t}\,\vxi^{\nu\alpha},
\]
with $\vx^\nu\stackrel{\text{i.i.d.}}{\sim}P_0$ and
$\vxi^{\nu\alpha}\stackrel{\text{i.i.d.}}{\sim}\mathcal N(0,\vI_d)$, so that
$\vG^{ab}=f(q^{ab})$. Three index regimes have to be distinguished, according to how
much randomness the pair $(a,b)$ shares: $a=b$; $a\neq b$ within one cluster
($\nu=\mu$, $\alpha\neq\beta$); and different clusters ($\nu\neq\mu$).

\paragraph*{Step 1: uniform control of the overlaps.}
\begin{lemma}\label{app:lem:overlaps}
Under the assumptions of Sect.~\ref{app:proof_analytical} there is a constant $C$,
depending only on the sub-Gaussian norm of $P_0$ and on $\lambda_{\max}(\vSigma)$, such
that with probability $1-o(1)$,
\begin{align}
    &\max_{a}\big|q^{aa}-\Gamma_t\big| \;\le\; C\sqrt{\tfrac{\log d}{d}},
    \qquad
    \max_{\nu,\,\alpha\neq\beta}\big|q^{\nu\alpha,\nu\beta}-\sigma^2e^{-2t}\big|
      \;\le\; C\sqrt{\tfrac{\log d}{d}},\\
    &\max_{\nu\neq\mu}\big|q^{\nu\alpha,\mu\beta}\big| \;\le\; C\sqrt{\tfrac{\log d}{d}}.
\end{align}
\end{lemma}
Each of the three families consists of quadratic forms in independent sub-Gaussian
vectors, so each entry obeys a Bernstein-type bound with sub-exponential tails at scale
$d^{-1/2}$; a union bound over the $O(d^2)$ pairs
costs the factor $\sqrt{\log d}$. The centering constants are the means:
$\mathbb E[q^{aa}]=\Tr(\vSigma_t)/d=\Gamma_t$, $\mathbb
E[q^{\nu\alpha,\nu\beta}]=e^{-2t}\Tr(\vSigma)/d=\sigma^2e^{-2t}$ for $\alpha\neq\beta$
(the two noises are independent, only $\vx^\nu$ is shared), and $\mathbb
E[q^{\nu\alpha,\mu\beta}]=0$ for $\nu\neq\mu$.

We assume throughout that $f$ is three times continuously differentiable on a
neighborhood of $[-\delta,\Gamma_t+\delta]$ for some $\delta>0$, which is where
Lemma~\ref{app:lem:overlaps} confines every overlap. All the expansions below are taken
on the event of Lemma~\ref{app:lem:overlaps}.

\paragraph*{Step 2: the three regimes.}
On the diagonal, $f(q^{aa})=f(\Gamma_t)+f'(\Gamma_t)(q^{aa}-\Gamma_t)+O(d^{-1}\log d)$,
so the diagonal of $\vG$ equals $f(\Gamma_t)\vI_N$ up to a \emph{diagonal} matrix
$\vD$. Within a cluster, likewise, $f(q^{\nu\alpha,\nu\beta})=f(\sigma^2e^{-2t})+
f'(\sigma^2e^{-2t})(q^{\nu\alpha,\nu\beta}-\sigma^2e^{-2t})+O(d^{-1}\log d)$, so the
same-cluster off-diagonal part equals $f(\sigma^2e^{-2t})(\vB_m-\vI_N)$ up to a
\emph{block-diagonal} matrix $\vE$ with $n$ blocks of size $m\times m$. Across clusters
the overlaps are small and we expand at the origin,
\[
    f(q^{ab}) \;=\; f(0)+f'(0)\,q^{ab}+\tfrac12 f''(0)\,(q^{ab})^2 + R^{ab},
    \qquad
    |R^{ab}| \;\le\; \tfrac16\!\!\sup_{|u|\le\delta}\!|f'''(u)|\,\cdot|q^{ab}|^3 .
\]
Collecting the exact (non-remainder) pieces in the basis
$\{\vI_N,\ \vB_m-\vI_N,\ \bm{1}_N\bm{1}_N\tran-\vB_m,\ \vY\tran\vY/d\}$ and using
$\mathbb E[(q^{ab})^2]=\Tr(\vSigma_t^2)/d^2$ for $\nu\neq\mu$ gives exactly
$\vG_{\mathrm{lin}}$ with the four constants $\mu_I,\mu_B,\mu_0,\mu_1$ of the statement.

\paragraph*{Step 3: operator-norm bookkeeping.}
It remains to bound the residuals. $\vD$ is diagonal and $\vE$ is block diagonal with
blocks of fixed size $m=O(1)$, so their operator norms are controlled entrywise,
\[
    \lVert\vD\rVert_{\mathrm{op}} \;=\; \max_a|\vD^{aa}| \;=\; O\big(\sqrt{\log d/d}\big),
    \qquad
    \lVert\vE\rVert_{\mathrm{op}} \;\le\; m\max_{a\neq b}|\vE^{ab}| \;=\;
    O\big(\sqrt{\log d/d}\big).
\]
For the third-order remainder $\vR$ the Frobenius norm suffices, because $N\asymp d$:
\[
    \lVert\vR\rVert_{\mathrm{op}} \;\le\; \lVert\vR\rVert_{F}
    \;\le\; N\max_{\nu\neq\mu}|R^{ab}|
    \;=\; O\!\big(d\cdot (\log d/d)^{3/2}\big)
    \;=\; O\big(d^{-1/2}\log^{3/2}d\big).
\]
The quadratic term is the one place where entrywise bounds are not enough. The matrix $\vQ^{(2)}$ with entries $(q^{ab})^2$ for
$\nu\neq\mu$ has entries of order $1/d$ and dimension $N\asymp d$, so
$\lVert\vQ^{(2)}-\mathbb E\vQ^{(2)}\rVert_F=O(1)$: the Frobenius bound does
not vanish, and $\vQ^{(2)}$ concentrates around
$\tfrac{\Tr(\vSigma_t^2)}{d^2}(\bm{1}_N\bm{1}_N\tran-\vB_m)$. Nonetheless, as in \citet{elkaroui_2010}, one can prove that $\lVert\vQ^{(2)}-\mathbb E\vQ^{(2)}\rVert_{\mathrm{op}}=o(1)$.

\paragraph*{Step 4: conclusion.}
Summing the four contributions,
\[
    \lVert \vG-\vG_{\mathrm{lin}}\rVert_{\mathrm{op}}
    \;=\; O\big(d^{-1/2}\log^{3/2}d\big) + o(1) \;=\; o(1)
    \qquad\text{with probability } 1-o(1).
\]
By Weyl's inequality every eigenvalue of $\vG$ is within $o(1)$ of the corresponding
eigenvalue of $\vG_{\mathrm{lin}}$; in particular the two empirical spectral
distributions have the same weak limit, and any eigenvalue of $\vG_{\mathrm{lin}}$
separated from the rest of its spectrum by a gap bounded away from zero is tracked
individually. This is the sense in which $\vG$ and $\vG_{\mathrm{lin}}$ are used
interchangeably in the rest of the appendix.
\end{proof}
\subsection{Spectrum of the Gram matrix}

\begin{figure}
    \centering
    \includegraphics[width=0.4\linewidth]{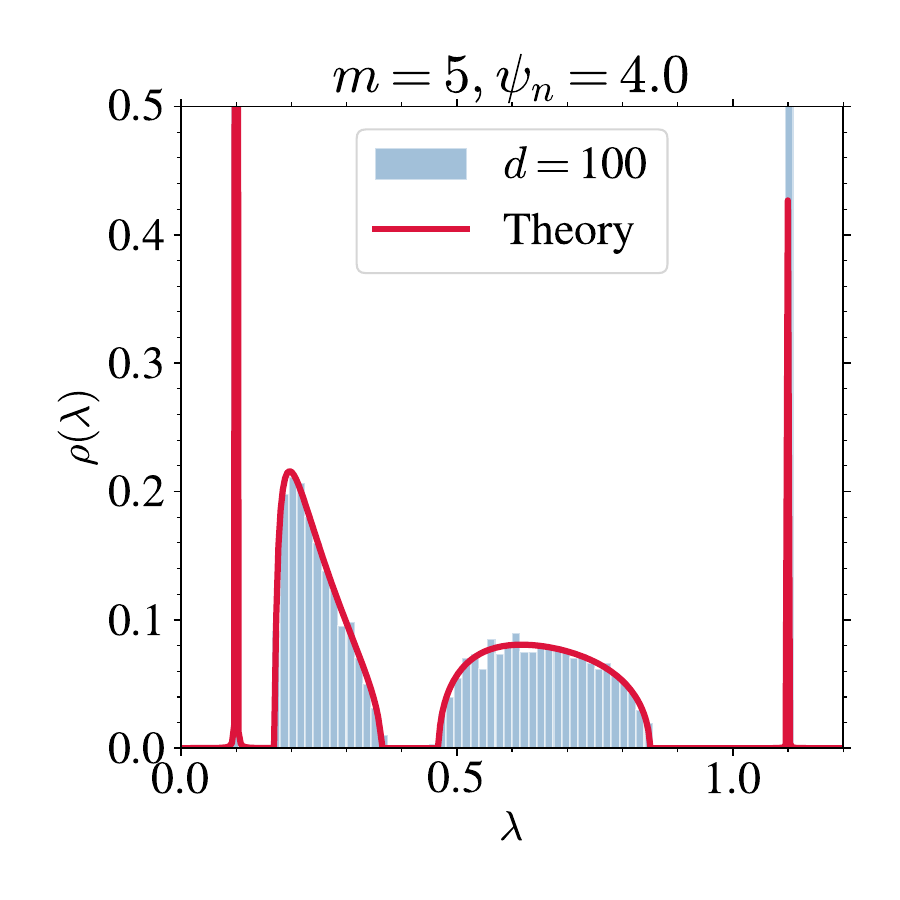}
    \includegraphics[width=0.4\linewidth]{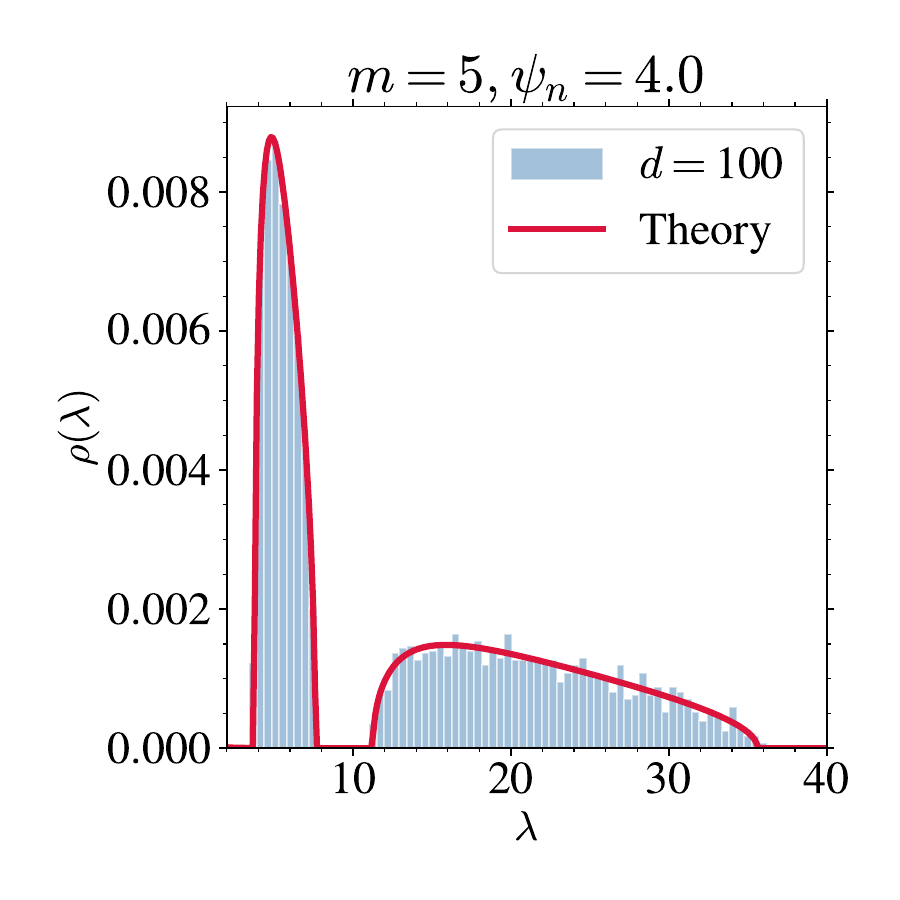}
    \caption{Empirical histogram of the eigenvalues of $\vG_{\mathrm{lin}}$ for $d=100$ (blue) averaged over 10 runs and the analytical prediction obtained by solving the equations on the Stieltjes transform for $m=5, \psi_n=4.0$ and $t=0.1$ for $\rho_{\vSigma}(\lambda)=\frac{1}{2}\delta(\lambda-1)+\frac{1}{2}\delta(\lambda-0.1)$ and for the kernel parameters $\mu_1=1,\mu_I=0.1,\mu_B=0.2$. The left and right panels corresponds to different ranges of $\lambda$.}
    \label{fig:spectrum_anisotropic_sigma}
\end{figure}

\begin{figure}
    \centering
    \includegraphics[width=0.5\linewidth]{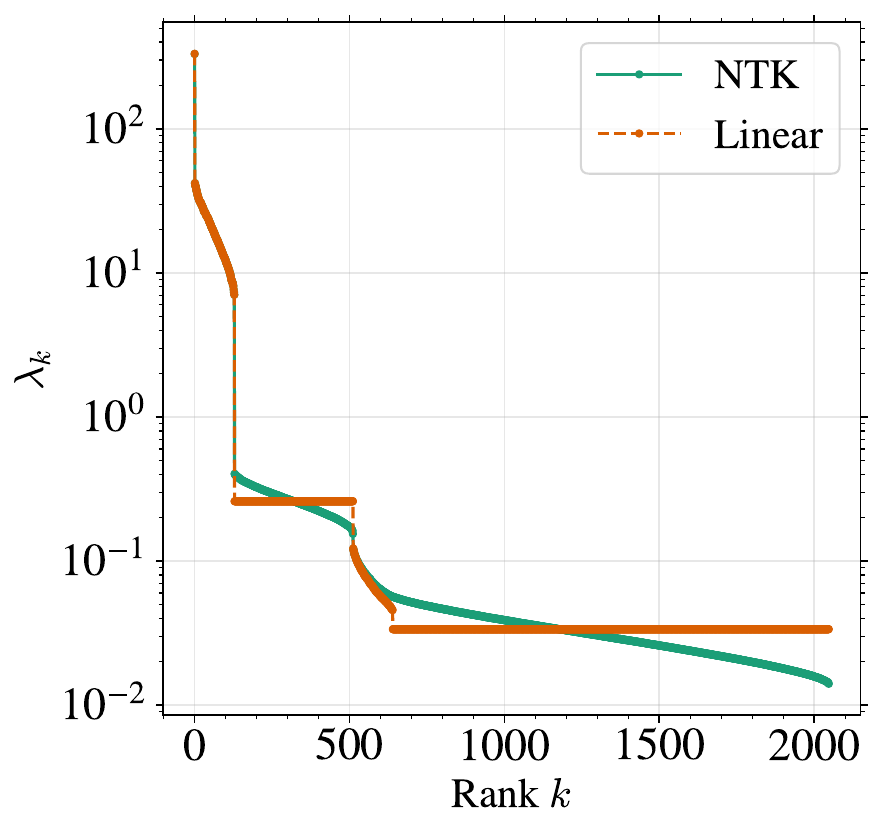}
    \caption{Comparison between the linear equivalent and the non-linear NTK spectrum in finite dimension for $d=128,\ \psi_n=4,\ m=4,$ and $t=0.1$.}
    \label{fig:comparaison_NTK_linear_G}
\end{figure}
We now state Theorem~\ref{thm:Spectrum_gram_linear} of the main text in full.
\begin{thm}[Spectral Distribution of the Gram Matrix]
\label{app:thm:Spectrum_gram_linear}
Let the empirical spectral density of the data covariance $\vSigma$ converge to $\rho_{\vSigma}(\lambda)$. In the high-dimensional limit $d, n \to \infty$ with $n/d \to \psi_n$, the Stieltjes transform $q(z)=\frac1N\Tr(\vG-z\vI_N)^{-1}$ and the auxiliary order parameter $r(z)$ of \eqref{eq:app_order_parameters} solve the system
\begin{align}
    z - \mu_I - m\mu_B+\frac{1}{mr}&= \int \dd \rho_{\vSigma}(\lambda)\frac{\mu_1 \Delta_t+m\mu_1e^{-2t}\lambda}{1+\mu_1 e^{-2t}\psi_nm^2\lambda r+\mu_1 \psi_n m\Delta_tq}, \\
    z-\mu_I+\frac{m-1}{m(q-r)} &=  \int \dd \rho_{\vSigma}(\lambda)\frac{\mu_1 \Delta_t}{1+\mu_1 e^{-2t}\psi_nm^2\lambda r+\mu_1 \psi_n m\Delta_tq}.
\end{align}
In the large-data regime ($\psi_n \gg 1$), the spectral density $\rho(\omega)$ of $\vG - \mu_0 \bm{1}_N \bm{1}_N^\top$, in the spectral variable $\omega$, admits a decomposition into four distinct components:
\begin{equation}
    \rho(\omega) \approx \underbrace{{\frac{m-1-1/\psi_n}{m} \delta(\omega - \mu_I)} + {\frac{1-1/\psi_n}{m} \delta(\omega - \bar{\mu})}}_{\text{atoms}} + \underbrace{\frac{1}{\psi_n m} \rho_1(\omega)}_{\text{generalization bulk}}
    + \underbrace{\frac{1}{\psi_n m} \rho_2(\omega)}_{\text{memorization bulk}} ,
\end{equation}
where $\bar{\mu} = m\mu_B + \mu_I$. To every eigenvalue $\lambda$ of $\vSigma$ there correspond
two eigenvalues of $\vG$, with the following asymptotic scalings, where
$\lambda_t = e^{-2t}\lambda + \Delta_t$ is the associated eigenvalue of
$\vSigma_t = e^{-2t}\vSigma + \Delta_t\vI_d$:
\begin{itemize}
    \item \textbf{Generalization bulk:} $\lambda_{\mathrm{gen}}(\lambda) \simeq \mu_1 \psi_n m \lambda_t=\Theta(nm/d)$.
    \item \textbf{Memorization bulk:} $\lambda_{\mathrm{mem}}(\lambda) \simeq \mu_I + \frac{\mu_B(m-1)\Delta_t}{\lambda_t}=\Theta(m)$.
\end{itemize}
\end{thm}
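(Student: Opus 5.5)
The plan is to work entirely with $\vG_{\mathrm{lin}}$: by Theorem~\ref{app:thm:lin_gram_empirical} and Weyl's inequality, $\vG$ and $\vG_{\mathrm{lin}}$ share the limiting spectral distribution. The rank-one term $\mu_0\bm{1}_N\bm{1}_N\tran$ only moves a single eigenvalue, so it can be dropped. The key structural step is to split $\mathbb{R}^N=U\oplus U^\perp$, where $U$ is the space of cluster-constant vectors on which $\vB_m$ acts as $m$, while $\vB_m$ vanishes on $U^\perp$. Correspondingly I would write $\vY=\bar{\vY}\otimes\bm{1}_m\tran+\vY_\perp$, with $\bar{\vY}\in\mathbb{R}^{d\times n}$ the cluster means and $\vY_\perp=\sqrt{\Delta_t}\,\vxi_\perp$. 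The rows of $\vY$ then have $U$-component in $\mathrm{Ran}(\bar{\vY}\tran)\otimes\bm{1}_m$ and $U^\perp$-component in $\mathrm{Ran}(\vY_\perp\tran)$.

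\textbf{Atoms.} On $\mathrm{Ker}(\vY)$ the term $\mu_1\vY\tran\vY/d$ vanishes, so only $\mu_I\vI_N+\mu_B\vB_m$ remains. On $\mathrm{Ker}(\vY)\cap U=\{\vvarphi\otimes\bm{1}_m:\ \bar{\vY}\vvarphi=0\}$, which has dimension $n-d$, this gives $\bar\mu=\mu_I+m\mu_B$ with weight $(n-d)/N=(1-1/\psi_n)/m$. On $\mathrm{Ker}(\vY_\perp)\cap U^\perp$, which has dimension $n(m-1)-d$, it gives $\mu_I$ with weight $(m-1-1/\psi_n)/m$. Genericity of the data ensures these kernels are exactly the intersections $\mathrm{Ker}(\vY)\cap U$ and $\mathrm{Ker}(\vY)\cap U^\perp$. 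The remaining $2d$ eigenvalues live on the invariant subspace $\mathrm{Ran}(\bar{\vY}\tran)\otimes\bm{1}_m\oplus\mathrm{Ran}(\vY_\perp\tran)$.

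\textbf{Self-consistent equations.} For the Stieltjes transform at finite $\psi_n$, I would write the resolvent of $\mu_I\vI_N+\mu_B\vB_m+\mu_1\vY\tran\vY/d$ with $\vY=e^{-t}\vX\otimes\bm{1}_m\tran+\sqrt{\Delta_t}\vxi$. I would then run the replica computation, equivalently a Gaussian-equivalence / linear-pencil argument, introducing two order parameters: $q$, the full normalized trace, and $r$, the normalized trace of the resolvent projected on $U$. Integrating out the $d$-dimensional Gaussian fields in the eigenbasis of $\vSigma$ yields, for each $\lambda$, a denominator $1+\mu_1e^{-2t}\psi_nm^2\lambda r+\mu_1\psi_nm\Delta_tq$. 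The two equations then arise from the traces on the $U$ and $U^\perp$ blocks, whose bare values are $z-\bar\mu$ and $z-\mu_I$, weighted by $1/m$ and $(m-1)/m$. I expect this derivation, and making it rigorous rather than a replica computation (for instance via a deterministic equivalent with a block Schur complement), to be the main obstacle. It is also the only place where finite $\psi_n$ matters.

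\textbf{Large-$\psi_n$ limit, the step that gives the stated bulks.} For $\psi_n\gg1$ the $d\times d$ Gram matrices concentrate: $\bar{\vY}\bar{\vY}\tran/n\to e^{-2t}\vSigma+\Delta_t\vI_d/m$, $\vY_\perp\vY_\perp\tran/n\to(m-1)\Delta_t\vI_d/m\cdot m$, and the cross term $\bar{\vY}(\bm{1}\tran\otimes\vY_\perp)$ vanishes identically. Hence for each $\vv_\lambda$ the invariant subspace decouples into 2-dimensional blocks, spanned by $a=(\bar{\vY}\tran\vv_\lambda)\otimes\bm{1}_m$ and $b=\vY_\perp\tran\vv_\lambda$. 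Their squared norms are $nm(e^{-2t}\lambda+\Delta_t/m)$ and $n(m-1)\Delta_t$. On the block, $\mu_1\vY\tran\vY/d$ is rank one along $\vu_1=\vY\tran\vv_\lambda=a+b$ with eigenvalue $\mu_1\psi_nm\lambda_t$. The normalized weights of $\vu_1$ are $w_a^2=(me^{-2t}\lambda+\Delta_t)/(m\lambda_t)$ and $w_b^2=(m-1)\Delta_t/(m\lambda_t)$. Meanwhile $\mu_I\vI+\mu_B\vB_m$ acts as $\mathrm{diag}(\bar\mu,\mu_I)$ in this basis.

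Since $\mu_1\psi_nm\lambda_t$ dominates the $O(m)$ diagonal, first-order perturbation gives the large eigenvalue $\lambda_{\mathrm{gen}}\simeq\mu_1\psi_nm\lambda_t$. The small eigenvalue is the diagonal part restricted to the direction orthogonal to $\vu_1$, namely $w_b^2\bar\mu+w_a^2\mu_I=\mu_I+\mu_B(m-1)\Delta_t/\lambda_t$, which is $\lambda_{\mathrm{mem}}$, with corrections $O(1/\psi_n)$. Its eigenvector is $w_b\hat a-w_a\hat b$, which reproduces Proposition~\ref{prop:eigenvectors_gram_linear}. Counting one of each per eigenvalue of $\vSigma$ gives two bulks of $d$ eigenvalues, that is weights $1/(\psi_nm)$ each, and pushing $\rho_{\vSigma}$ forward through these two maps gives $\rho_1$ and $\rho_2$. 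As a consistency check, I would verify that the Stieltjes system reduces to exactly this decomposition when expanded in $1/\psi_n$.
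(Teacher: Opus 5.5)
Your proposal is correct and follows the paper's own route. You use the replica computation for the $(q,r)$ system. You then use the linear-algebra reduction in which, for each eigenvector $\vv_\lambda$ of $\vSigma$, $\vG_{\mathrm{lin}}$ acts on a $2\times2$ block spanned by $\bm{\Phi}^\lambda$ and $\vP^\lambda$ (your $\hat a$ and $\hat b$) as $\mathrm{diag}(\bar\mu,\mu_I)$ plus a rank-one term of size $\mu_1\psi_n m\lambda_t$; this yields $\lambda_{\mathrm{gen}}$, $\lambda_{\mathrm{mem}}$ and the eigenvectors of the Proposition. The one refinement is your exact splitting $\mathbb{R}^N=U\oplus U^\perp$ with $(\bar{\vY}\otimes\bm{1}_m\tran)\vY_\perp\tran=0$: it gives the atom weights $(1-1/\psi_n)/m$ and $(m-1-1/\psi_n)/m$ by an exact kernel-dimension count, whereas the paper reads them off perturbatively from the replica equations near $z=\bar\mu$ and $z=\mu_I$.
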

The proof of this theorem is presented in the next two subsections. In Sect.~\ref{sect:proof_replica}, we derive the self-consistent equations on the Stieltjes transform of $\vG$, while in Sect.~\ref{sect:asymp_spec_G} we establish the result on the asymptotic spectrum of $\vG$. An example of the spectrum for $\vSigma\neq \vI_d$ is presented in Fig.~\ref{fig:spectrum_anisotropic_sigma}, while the convergence of the two bulks $\rho_1,\ \rho_2$ is presented in Fig.\ref{fig:comparaison_NTK_linear_G} and Fig.~\ref{fig:convergence_spectrum_psi_n}.

\subsection{Replica computation of the Stieltjes transform of the Gram matrix}
\label{sect:proof_replica}

\paragraph*{The replica method, and in what sense our results are exact.}
In our context, we frequently need to compute the expected logarithm of a partition
function $\mathcal{Z}$ that arises from the Gaussian integral representation of the
resolvent, e.g., $\mathcal{Z} \propto \int \dd\vphi \, e^{-\frac{1}{2}\vphi^\top \vM
\vphi}$, where $\vM$ is a random matrix of interest. To compute such expectations of
logarithms of random variables, we employ the replica method \citep{mezard1987spin}. The
method rests on the identity
\begin{equation}
    \log \mathcal{Z} = \lim_{s\to 0} \frac{\mathcal{Z}^s - 1}{s}.
\end{equation}
Evaluating the integer moments $\mathbb{E}[\mathcal{Z}^s]$ for $s \in \mathbb{N}$, one
then continues the result analytically to $s \to 0$. This is the replica method of
statistical physics, a non-rigorous technique with a wide
range of applications, notably in statistical learning and the theory of neural networks
\citep{Ascoli_2020}. The replica method is widely believed to yield exact asymptotic
predictions, as has been established rigorously for a wide range of problems
\citep{guerra2002thermodynamic,talagrand2006parisi,barbier2019optimal,gerbelot2023asymptotic}.
More precisely, our results rely on a so-called replica-symmetry (RS) assumption. The RS
ansatz is known to hold for the trace of rational functions of random matrices, where it
has been shown to yield the exact same results as rigorous methods such as linear pencils
\citep{bodin2021model, george_2025, bonnaire2025diffusionmodelsdontmemorize,
vilucchio2025asymptotics}.

We first derive the self-consistent equations on the Stieltjes transform of $\vG$.
\begin{proof}
    Since $\vG$ and $\vG_{\mathrm{lin}}$ have asymptotically the same spectrum, we replace the non-linear Gram matrix by its linear equivalent. We compute the Stieltjes transform $q(z) = \tfrac{1}{N}\Tr(\vG_{\mathrm{lin}} - z\vI_N)^{-1}$
of the linear-equivalent Gram matrix
\[
\vG_{\mathrm{lin}} = \mu_I\,\vI_N + \mu_B\,\vB_m + \mu_0\,\bm{1}_N\bm{1}_N^\top + \mu_1\,\frac{\vY^\top\vY}{d}.
\]
The rank-one spike $\mu_0\bm{1}_N\bm{1}_N^\top$ contributes a single outlier eigenvalue
outside the bulk; we omit it from the replica calculation.

For $\Im z > 0$, write
\[
q(z) \;=\; \frac{2}{N}\,\partial_z \log \mathcal Z(z),
\qquad
\mathcal Z(z) \;=\; \int \dd \vphi\; e^{-\tfrac12 \vphi^\top(\vG_{\mathrm{lin}}-z\vI_N)\vphi}.
\]
Introducing $s$ replicas $\{\vphi^a\}_{a=1}^s$ and absorbing the identity term into a
shifted spectral parameter $z-\mu_I$,
\begin{align}
\mathbb{E}_{X,\Xi}[\mathcal Z^{s}]
\;=\; \int \prod_a \dd\vphi^a\,
e^{\frac12 (z-\mu_I)\,\vphi^a\ \cdot\ \vphi^a}\;
e^{-\frac{\mu_B}{2}\,\vphi^{a\top} \vB_m \vphi^a}\;
\mathbb{E}_Y\left[ e^{-\frac{\mu_1}{2d}\,\vphi^{a\top} \vY^\top \vY \vphi^a} \right].
\end{align}

\paragraph{Gaussian average over $\vY$.}
With sample-index pair $(\nu,\alpha)\in[n]\times[m]$ and $\vv=\bm{1}_m$, the second moment of $\vY$ is
\[
\vK_{ij}^{\nu\nu',\alpha\alpha'}
= \mathbb{E}\left[\vY_i^{\nu\alpha}\vY_j^{\nu'\alpha'}\right]
= e^{-2t}\,\vSigma_{ij}\vv^{\alpha}\vv^{\alpha'}\,\delta^{\nu\nu'}
+ \Delta_t\,\delta_{ij}\,\delta^{\nu\nu'}\delta^{\alpha\alpha'}.
\]
Setting $\vA_{ij}^{\nu\nu',\alpha\alpha'} = \delta_{ij}\sum_a \vphi^a_{\nu\alpha}\vphi^a_{\nu'\alpha'}$ and using the
Gaussian identity $\mathbb{E}_Y\left[ e^{-\tfrac{\mu_1}{2d}\Tr(\vY^\top \vA \vY)} \right]
= e^{-\frac12\log\det\left(\vI_d+\tfrac{\mu_1}{d}\vK\vA\right)}$, we obtain
\begin{align}
\mathbb{E}_{X,\Xi}[\mathcal Z^{s}]
\;=\; \int \prod_a \dd\vphi^a\,
e^{\frac{z-\mu_I}{2}\,\vphi^a\cdot\vphi^a}\,
e^{-\frac{\mu_B}{2}\,\vphi^{a\top} \vB_m \vphi^a}\,
e^{-\frac12\log\det\left(\vI_d+\tfrac{\mu_1}{d}\vK\vA\right)}.
\end{align}

\paragraph{Order parameters.}
Introduce the overlaps and their Lagrange-multiplier conjugates
\begin{align}\label{eq:app_order_parameters}
\vQ^{ab} \;=\; \tfrac{1}{nm}\,\vphi^a\cdot\vphi^b,
\qquad
\vR^{ab} \;=\; \tfrac{1}{nm^2}\,\vphi^a_{\nu\alpha}\,\vv^{\alpha}\vv^{\alpha'}\,\vphi^b_{\nu\alpha'},
\end{align}
via the identity
\begin{align}
1 \;=\; \int \dd\vQ \dd\hat \vQ\,\dd\vR\,\dd\hat \vR
e^{\frac12 \hat \vQ^{ab}(nm\,\vQ^{ab}-\vphi^a\cdot\vphi^b)
  + \frac12 \hat \vR^{ab}(nm^2\,\vR^{ab}-\vphi^a_{\nu\alpha}\vv^\alpha \vv^{\alpha'}\vphi^b_{\nu\alpha'})}.
\end{align}
The $\vB_m$ and $\vK$-determinant terms then become functions of $\vQ,\vR$ only:
\begin{align}
e^{-\frac{\mu_B}{2}\,\vphi^{a\top}\vB_m\vphi^a}\;
e^{-\frac12\log\det(\vI+\tfrac{\mu_1}{d}\vK\vA)}
=
e^{-\frac{\mu_B nm^2}{2}\Tr \vR}
e^{-\frac12\log\det\left(\vI+\tfrac{\mu_1}{d}(e^{-2t}nm^2\vSigma \vR + nm\Delta_t \vI_d\,\vQ)\right)}.
\end{align}
The remaining $\vphi$-integral is Gaussian:
\begin{align}
\int \dd\vphi^a\;
e^{-\frac12 \hat \vQ^{ab}\,\vphi^a\cdot\vphi^b
   -\frac12 \hat \vR^{ab}\,\vphi^a_{\nu\alpha}v^\alpha v^{\alpha'}\vphi^b_{\nu\alpha'}}
\;=\;
e^{-\frac{n}{2}\log\det(\hat \vQ^{ab}\delta_{\alpha\alpha'}+\hat \vR^{ab}\vv^\alpha \vv^{\alpha'})}.
\end{align}

\paragraph{Replica-symmetric ansatz.}
Set $\vQ^{ab}=q\delta^{ab}$, $\hat \vQ^{ab}=\hat q\delta^{ab}$, $\vR^{ab}=r\delta^{ab}$, $\hat \vR^{ab}=\hat r\delta^{ab}$ for all $a,b$. The
matrix $\hat q\,\delta_{\alpha\alpha'}+\hat r \vv^\alpha \vv^{\alpha'}$ has eigenvalues $\hat q$ (multiplicity
$m-1$) and $\hat q + m\hat r$ (multiplicity $1$), so
\[
\log\det(\hat q\,\delta_{\alpha\alpha'}+\hat r\,v^\alpha v^{\alpha'})
\;=\; (m-1)\log\hat q + \log(\hat q + m\hat r).
\]
Replacing the determinant over $\vSigma$ by an integral against its spectral measure
$\rho_{\vSigma}$ and dividing by $sd$, the per-replica action reads

\begin{align}
S(q,\hat q, r, \hat r)
\;=\;& -\,m\psi_n\,(z-\mu_I)\,q
\;+\; \psi_n(m-1)\log\hat q
\;+\; \psi_n\log(\hat q + m\hat r) \nonumber\\
&\;-\; \psi_n m\,q\hat q
\;-\; \psi_n m^2\,\hat r\,r
\;+\; \mu_B\,\psi_n m^2\,r \nonumber\\
&\;+\; \int \dd\rho_{\vSigma}(\lambda)\,
\log\left(1 + \mu_1\bigl(e^{-2t}\,\psi_n m^2\,\lambda\,r + \psi_n m\,\Delta_t\,q\bigr)\right).
\end{align}

\paragraph{Saddle-point equations.}
Stationarity $\partial_{\hat q}S=\partial_{\hat r}S=\partial_q S=\partial_r S=0$ gives

\begin{align}
m\,q \;&=\; \frac{m-1}{\hat q} \;+\; \frac{1}{\hat q + m\hat r}, \\[2pt]
m\,r \;&=\; \frac{1}{\hat q + m\hat r}, \\[2pt]
(z-\mu_I) + \hat q \;&=\; \int \dd\rho_{\vSigma}(\lambda)\;
\frac{\mu_1\,\Delta_t}{\,1 + \mu_1\,e^{-2t}\psi_n m^2\,\lambda\,r + \mu_1\,\psi_n m\,\Delta_t\,q\,}, \\[2pt]
\hat r \;&=\; \mu_B \;+\; \int \dd\rho_{\vSigma}(\lambda)\;
\frac{\mu_1\,e^{-2t}\,\lambda}{\,1 + \mu_1\,e^{-2t}\psi_n m^2\,\lambda\,r + \mu_1\,\psi_n m\,\Delta_t\,q\,}.
\end{align}
These four equations on $(q,r,\hat q, \hat r)$ can be reduced to the two equations on $(q,r)$ by eliminating $\hat q, \hat r$.
\end{proof}

\subsection{Asymptotic spectrum of the Gram matrix}
\label{sect:asymp_spec_G}

\begin{figure}
    \centering
     \includegraphics[width=0.47\linewidth]{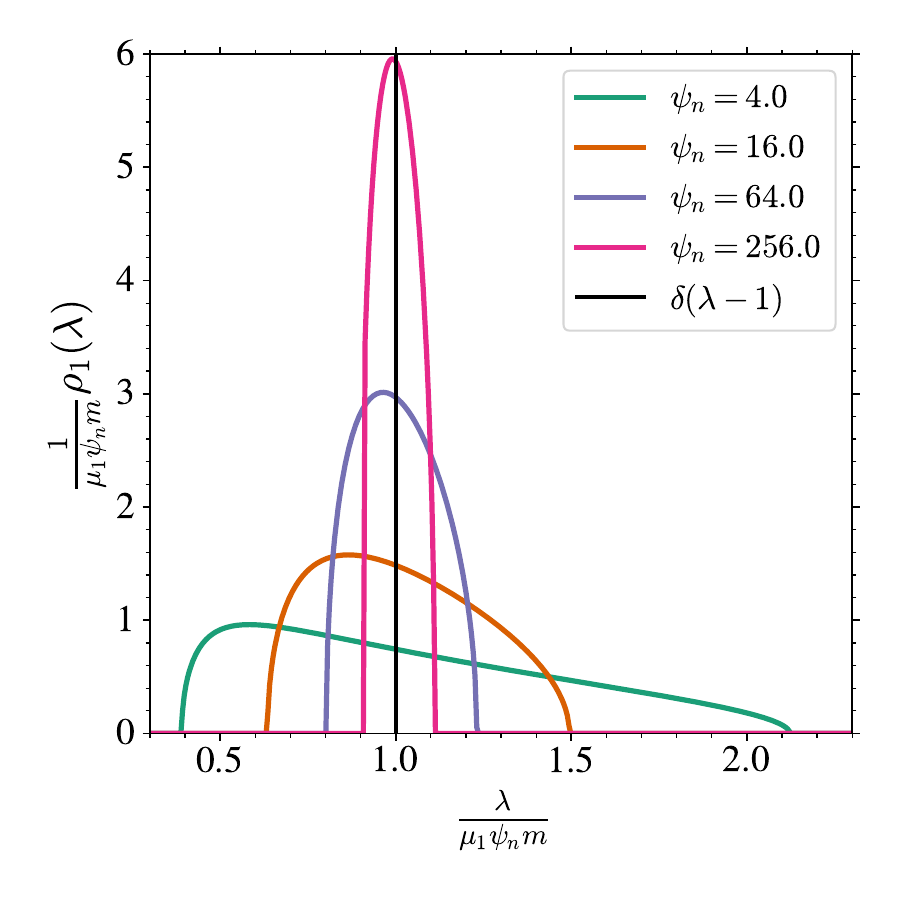}
    \includegraphics[width=0.47\linewidth]{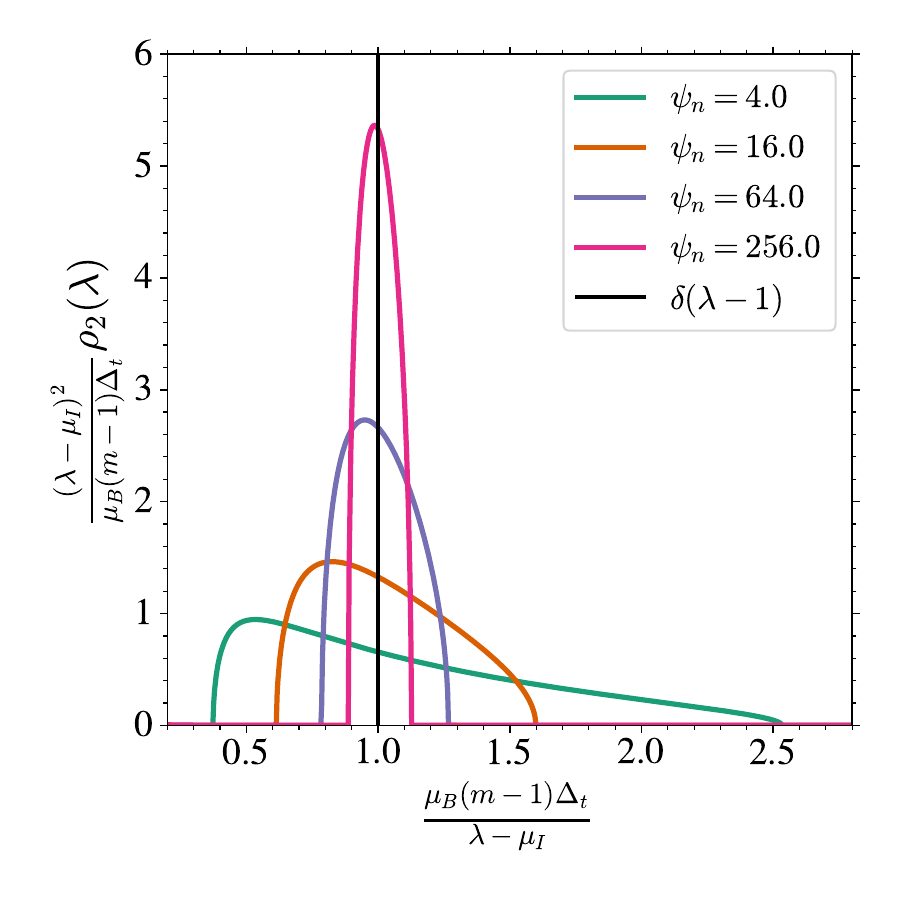}
    \caption{Analytical solutions of the rescaled bulks $\rho_1,\rho_2$ for different values of $\psi_n$ for $m=5,\ t=0.1,\vSigma=\vI_d,\ \mu_1=1.0,\ \mu_B=0.2,\ \mu_I=0.1,\ \mu_0=0.0 $ and the asymptotic limit $\delta(\lambda-1)$.}
    \label{fig:convergence_spectrum_psi_n}
\end{figure}

There are two ways to obtain the scaling and the structure of the spectrum of $\vG$: from the replica equations, and directly from linear-algebra arguments, the latter also giving the eigenvectors. We assume without loss of generality that $\mu_I=0$ since it only amounts to a constant shift in the spectrum.
\subsubsection{Derivation from the replica equations}
We solve the replica equations perturbatively to obtain the structure and the scalings of the spectrum of $\vG_{\mathrm{lin}}$.
\paragraph*{Peak at $z=m\mu_B$.} Assume $z=m\mu_B+\epsilon$ with $\epsilon\ll 1$ and plug in the ansatz $q=-\frac{q_0}{\epsilon}$ and $r=-\frac{r_0}{\epsilon}$. Then
\begin{align}
    \frac{1}{mr}&=m\hat{r}+\hat{q}=m\mu_B-z+\int \dd \rho_{\vSigma}(\lambda)\frac{\mu_1 (e^{-2t}\lambda m+\Delta_t)}{1+\mu_1 e^{-2t}\psi_nm^2\lambda r+\mu_1 \psi_n m\Delta_tq}\\
    &=-\epsilon(1+\int \dd \rho_{\vSigma}(\lambda)\frac{\mu_1 (e^{-2t}\lambda m+\Delta_t)}{\mu_1 e^{-2t}\psi_nm^2\lambda r_0+\mu_1 \psi_n m\Delta_tq_0})
\end{align}
and thus
\begin{align}
    mq&=\frac{m-1}{\hat{q}}+mr=\frac{m-1}{\int \dd \rho_{\vSigma}(\lambda)\frac{\mu_1 \Delta_t}{1+\mu_1 e^{-2t}\psi_nm^2\lambda r+\mu_1\psi_n m\Delta_tq}-z}+mr\\
    &=-\frac{1}{\epsilon}(1+\int \dd \rho_{\vSigma}(\lambda)\frac{\mu_1 (e^{-2t}\lambda m+\Delta_t)}{\mu_1 e^{-2t}\psi_nm^2\lambda r_0+\mu_1 \psi_n m\Delta_tq_0})^{-1}
\end{align}
at first order
\begin{align}
    &q=-\frac{1}{m}\frac{1}{\epsilon}\\
    &r=-\frac{1}{m}\frac{1}{\epsilon}
\end{align}
i.e.\ there are $n$ eigenvalues at this peak. Inserting $q=-1/(m\epsilon)$ back into the equation for $q$, the integral reduces to $1/\psi_n$ and expanding $(1+1/\psi_n)^{-1}\simeq1-1/\psi_n$ gives the correction
\begin{align}
    q=-\frac{1}{\epsilon}(\frac{1}{m}-\frac{1}{m\psi_n})
\end{align}
The correction yields $n-d$ eigenvalues.

\paragraph*{Peak at $z=0$.}
Assume $q=\frac{-q_0}{z}$ and $r=\frac{-r_0}{z}$.
\begin{align}
    \frac{1}{mr}=m\hat{r}+\hat{q}=m\mu_B-z+\int \dd \rho_{\vSigma}(\lambda)\frac{\mu_1 (e^{-2t}\lambda m+\Delta_t)}{1+\mu_1 e^{-2t}\psi_nm^2\lambda r+\mu_1 \psi_n m\Delta_tq}\sim m\mu_B
+O(z)
\end{align}
hence
\begin{align}
    mq&=\frac{m-1}{\hat{q}}\\
&=\frac{m-1}{\int \dd \rho_{\vSigma}(\lambda)\frac{\mu_1 \Delta_t}{1+\mu_1 e^{-2t}\psi_nm^2\lambda r+\mu_1 \psi_n m\Delta_tq}-z}\\
&=\frac{m-1}{\frac{z}{-\psi_nmq_0}-z}
\end{align}
This yields at first order
\begin{align}
    q_0=1-\frac{1}{\psi_nm}-\frac{1}{m}.
\end{align}
i.e. there are $nm-n-d$ eigenvalues at the peak.
\paragraph*{Bulk at $z=O(\psi_nm)$.} We solve the equations perturbatively, using the fact that at leading order $q\sim\frac{-1}{z}$ and $mr=\frac{-1}{z}$. Denote $z=\psi_nm \tilde{z}$. Then one has $mq=\frac{m-1}{\hat{q}}+mr$.
\begin{align}
    mq&=\frac{m-1}{\int \dd \rho_{\vSigma}(\lambda)\frac{\mu_1 \Delta_t}{1+\mu_1 e^{-2t}\psi_nm^2\lambda r+\mu_1 \psi_n m\Delta_tq}-z}\\
    &+\frac{1}{\int \dd \rho_{\vSigma}(\lambda)\frac{\mu_1(e^{-2t}\lambda m+ \Delta_t)}{1+\mu_1 e^{-2t}\psi_nm^2\lambda r+\mu_1 \psi_n m\Delta_tq}+m\mu_B-z}\\
    &=\frac{m-1}{\int \dd \rho_{\vSigma}(\lambda)\frac{\mu_1 \Delta_t}{1-\mu_1 e^{-2t}\lambda/\tilde{z} -\mu_1 \Delta_t/\tilde{z}}-z}+\frac{1}{\int \dd \rho_{\vSigma}(\lambda)\frac{\mu_1( e^{-2t}\lambda m+\Delta_t)}{1-\mu_1 e^{-2t}\lambda/\tilde{z} -\mu_1 \Delta_t/\tilde{z}}+m\mu_B-z}\\
    &\sim -\frac{m-1}{z}\left( 1+\frac{1}{z}\int\dd\rho_{\vSigma}\frac{\mu_1\Delta_t\tilde{z}}{\tilde{z}-\mu_1(e^{-2t}\lambda+\Delta_t)}\right)\\
    &-\frac{1}{z}\left(1+\frac{m}{z}\left(\mu_B+\int\dd\rho_{\vSigma}\frac{\mu_1 (e^{-2t}\lambda+\Delta_t/m)\tilde{z}}{\tilde{z}-\mu_1(e^{-2t}\lambda+\Delta_t)}\right)\right)\\
    &=-\frac{m}{z}-\frac{m}{z^2}\left(\mu_B+\int\dd\rho_{\vSigma}\frac{\mu_1(e^{-2t}\lambda+\Delta_t)\tilde{z}}{\tilde{z}-\mu_1(e^{-2t}\lambda+\Delta_t)}\right)
\end{align}
Hence at first order
\begin{align}
    q&=-\frac{1}{z}-\frac{1}{z^2}\left(\mu_B+\int\dd\rho_{\vSigma}\frac{\mu_1(e^{-2t}\lambda+\Delta_t)\tilde{z}}{\tilde{z}-\mu_1(e^{-2t}\lambda+\Delta_t)}\right)\\
    &=-\frac{1}{z}-\frac{1}{z^2}\left(\mu_B+\int\dd\bar{\rho}\frac{\bar{\lambda}\tilde{z}}{z-\bar{\lambda}}\right)\\
    &=-\frac{1}{z}-\frac{1}{\psi_n mz^2}\left(\psi_nm\mu_B+\int\dd\bar{\rho}\frac{\bar{\lambda}z}{z-\bar{\lambda}}\right)
\end{align}
by denoting $\bar{\lambda}=\psi_nm\mu_1(e^{-2t}\lambda+\Delta_t)$. Then
\begin{align}
    \frac{1}{q}=-z+\mu_B+\frac{1}{\psi_nm}\int\dd\bar{\rho}\frac{\bar{\lambda}z}{z-\bar{\lambda}}
\end{align}
We match this with a Bai--Silverstein equation \citep{SILVERSTEIN1995}. For
$\bm{\mathrm{Z}}\in\mathbb R^{N\times d}$ with i.i.d.\ standard Gaussian entries and aspect ratio
$c=d/N$, the Stieltjes transform $v(z)$ of the spectral density of
$\bm{\mathrm{Z}}\vSigma' \bm{\mathrm{Z}}^\top/d$, with $\vSigma'=\mu_1(e^{-2t}\vSigma+\Delta_t\vI_d)$ of spectral density $\rho'$, satisfies
\begin{align}
    \frac{1}{v(z)}&=-z+\int \frac{\dd \rho'(\lambda')\,\lambda'}{1+\lambda' v(z)/c}
    \sim-z+\int\frac{\dd\rho'(\lambda')\,\lambda' z}{z-\lambda'/c}
    =-z+c\int\frac{\dd\bar{\rho}(\bar\lambda)\,\bar\lambda z}{z-\bar\lambda},
\end{align}
at first order, using $v\sim-1/z$. With $c=1/(\psi_nm)$, this is the equation above: the
density has total mass $1/(\psi_n m)$, i.e.\ $d$ eigenvalues as expected, and at leading order
the generalization bulk is the non-zero spectrum of
\begin{align}
    \mu_1\frac{\bm{\mathrm{Z}}(e^{-2t}\vSigma+\Delta_t\vI_d)\bm{\mathrm{Z}}^\top}{d},\qquad
    \bm{\mathrm{Z}}\sim\mathcal{N}(0,\vI_{N\times d}),
\end{align}
the constant $\mu_B$ being an $O(1)$ shift, below the resolution of this expansion since the
bulk has width $\Theta(\sqrt{\psi_nm})$.

\paragraph*{Bulk at $z=O(m)$.}
To study the second bulk, we assume $z = m\tilde{z}$ where $\tilde{z} = O(1)$. In this regime the leading-order behavior of the Stieltjes transforms is fixed by the two atoms found above: the atom at $0$ carries a fraction $1-\tfrac1m-\tfrac{1}{\psi_n m}$ of the spectrum and the atom at $m\mu_B$ a fraction $\tfrac1m(1-\tfrac1{\psi_n})$, so that
\begin{align}
    q \sim -\frac{1}{z}\Big(1-\frac{1}{m}\Big)+\frac{1}{m(m\mu_B-z)},\qquad  mr \sim \frac{1}{m\mu_B - z}.
\end{align}
(Note that $q\sim-1/z$, which would correspond to putting \emph{all} the mass at the origin, is not consistent with the weight $q_0=1-\tfrac1m-\tfrac1{\psi_n m}$ obtained in the previous paragraph; keeping the correct weight is what produces the factor $m-1$ below.) We expand the saddle-point equation $mq = \frac{m-1}{\hat{q}} + \frac{1}{\hat{q} + m\hat{r}}$ by substituting the expressions for $\hat{q}$ and $\hat{r}$ and keeping terms up to $O(\frac{1}{\psi_n})$:

\begin{align}
    q &= \frac{m-1}{m}\,\frac{1}{-z + \int \dd\rho_{\vSigma}(\lambda) \frac{\mu_1 \Delta_t}{1 + \mu_1 \psi_n m (e^{-2t}\lambda\, mr + \Delta_t q)}} \nonumber \\
    &\quad + \frac{1}{m} \frac{1}{m\mu_B - z + \int \dd\rho_{\vSigma}(\lambda) \frac{\mu_1 (e^{-2t}\lambda m + \Delta_t)}{1 + \mu_1 \psi_n m (e^{-2t}\lambda\, mr + \Delta_t q)}} \nonumber \\
    &\approx \frac{-1}{z}\Big(1-\frac1m\Big) + \frac{1}{m(m\mu_B - z)} \\
    &- \frac{1}{m \psi_n z(m\mu_B - z)} \int \dd\rho_{\vSigma}(\lambda) \frac{m e^{-2t}\lambda z^2 + \Delta_t z^2 + (m-1)\Delta_t(m\mu_B - z)^2}{m e^{-2t}\lambda z + \Delta_t z - (m-1)\Delta_t(m\mu_B - z)}
\end{align}

To find the density $\rho(\omega)$ in the spectral variable $\omega$, we use the Stieltjes inversion formula $\rho(\omega) = \frac{1}{\pi} \lim_{\epsilon \to 0^+} \text{Im}[q(\omega + i\epsilon)]$. The imaginary part is generated by the pole in the integrand where the denominator $D(\lambda, \omega) = m e^{-2t}\lambda \omega + \Delta_t\omega - (m-1)\Delta_t(m\mu_B - \omega)=m\omega\lambda_t-m(m-1)\Delta_t\mu_B$ vanishes, with $\lambda_t=e^{-2t}\lambda+\Delta_t$. This occurs at:
\begin{equation}
    \lambda^*(\omega) = e^{2t} \Delta_t \left( \frac{(m-1)\mu_B}{\omega} - 1 \right)
\end{equation}
equivalently $\omega=(m-1)\mu_B\Delta_t/\lambda_t$, in agreement with the memorization eigenvalue $\lambda_{\mathrm{mem}}(\lambda)$ (at $\mu_I=0$) obtained by the linear-algebra route in Sect.~\ref{sect:asymp_spec_G_linalg} below.

Applying the Sokhotski--Plemelj theorem $\text{Im} \frac{1}{x - i\epsilon} = \pi \delta(x)$ and using $|\partial_\lambda D|=m\omega e^{-2t}$ at the pole, we obtain, since at $\lambda=\lambda^*(\omega)$ the numerator reduces to $(m-1)\Delta_t\,m\mu_B(m\mu_B-\omega)$,
\begin{equation}
    \rho(\omega) = \frac{e^{2t}\Delta_t (m-1)\mu_B}{\psi_n m\, \omega^2}\, \rho_{\vSigma} \left( e^{2t} \Delta_t \left( \frac{(m-1)\mu_B}{\omega} - 1 \right) \right)
    \label{eq:eigv_small_replicas}
\end{equation}
which is exactly the pushforward of $\rho_{\vSigma}$ by the map $\lambda\mapsto (m-1)\mu_B\Delta_t/\lambda_t$, carrying total mass
\begin{align}
    \int\dd\omega\, \rho(\omega)=\frac{1}{\psi_nm},
\end{align}
i.e.\ exactly $d$ eigenvalues. Restoring the shift $\mu_I$ amounts to replacing $\omega$ by $\omega-\mu_I$ in \eqref{eq:eigv_small_replicas}.

\paragraph*{Conclusion.} The density of eigenvalues is composed of
\begin{itemize}
    \item a delta peak at $\lambda=0$ with $nm-n-d$ eigenvalues, the null space of $\vG$ due to its finite rank;
    \item a bulk at $\lambda=\Theta(m)$ with $d$ eigenvalues, whose density is given by \eqref{eq:eigv_small_replicas};
    \item a delta peak at $\lambda=\mu_B m$ with $n-d$ eigenvalues;
    \item a bulk at $\lambda=\Theta(\psi_nm)$ with $d$ eigenvalues, which is that of the population kernel operator of the noisy distribution $P_t$.
\end{itemize}

\subsubsection{Derivation with linear algebra arguments}
\label{sect:asymp_spec_G_linalg}
In this subsection we derive the eigenvalues and eigenvectors of the linearized Gram matrix.
Besides re-deriving the eigenvalues of
Theorem~\ref{app:thm:Spectrum_gram_linear}, this route gives the eigenvectors, and proves
the following proposition, which restates Proposition~\ref{prop:eigenvectors_gram_linear}
of the main text.

\begin{proposition}[Eigenvectors of the Gram matrix]
\label{app:prop:eigenvectors_gram_linear}
For $\psi_n\gg1$ and every eigenvalue $\lambda$ of $\vSigma$ with associated eigenvector
$\vv_\lambda$, the eigenvectors of $\vG$ associated with the generalization and the
memorization bulks of Theorem~\ref{app:thm:Spectrum_gram_linear} are, at leading order,
\begin{align}\label{eq:app_eigenvectors}
    \vu_1^{\nu\alpha}\propto \vv_\lambda^\top\vY^{\nu\alpha},
    \qquad
    \vu_2^{\nu\alpha}\propto \vv_\lambda^\top\big[(me^{-2t}\lambda+\Delta_t)\sqrt{\Delta_t}\,\vxi_\perp^{\nu\alpha}
      -(m-1)\Delta_t\,\bar{\vY}^{\nu}\big],
\end{align}
with $\bar{\vY}^\nu=\frac1m\sum_\beta\vY^{\nu\beta}$ and
$\vxi_\perp^{\nu\alpha}=\vxi^{\nu\alpha}-\frac1m\sum_\beta\vxi^{\nu\beta}$. The eigenvectors
of the two atoms are the $\vvarphi\otimes\bm{1}_m/\sqrt m$ with $\vvarphi\in\mathrm{Ker}(\bar{\vY})$
and the elements of $\mathrm{Ker}(\vY)\cap\mathrm{Ker}(\vB_m)$; they do not contribute to the
estimated score.
\end{proposition}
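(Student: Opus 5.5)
We work with $\vG_{\mathrm{lin}}$ (Theorem~\ref{app:thm:lin_gram_empirical}), drop the rank-one spike $\mu_0\bm{1}_N\bm{1}_N\tran$ and set $\mu_I=0$, since the identity only shifts eigenvalues. The key structural fact is that $\vM=\mu_B\vB_m+\mu_1\vY\tran\vY/d$ maps $\mathbb{R}^N$ into the subspace $U+\mathrm{Row}(\vY)$. Here $U=\mathrm{Im}(\vB_m)$ has dimension $n$ and $\mathrm{Row}(\vY)=\{\vY\tran\va:\va\in\mathbb{R}^d\}$ has dimension $d$. The orthogonal complement $\mathrm{Ker}(\vY)\cap\mathrm{Ker}(\vB_m)$ is therefore the kernel of $\vM$; it has dimension $nm-n-d$ and gives the atom at $\mu_I$. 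For the atom at $\bar\mu=m\mu_B+\mu_I$, take $\vvarphi\in\mathrm{Ker}(\bar\vY)$ (here $\bar\vY\in\mathbb{R}^{d\times n}$ has columns $\bar\vY^\nu$). Then $\vY(\vvarphi\otimes\bm{1}_m)=m\bar\vY\vvarphi=0$ and $\vB_m(\vvarphi\otimes\bm{1}_m)=m\,\vvarphi\otimes\bm{1}_m$. This gives $n-d$ eigenvectors with eigenvalue $m\mu_B$. These two families are orthogonal to $\mathrm{Row}(\vY)$, so $K(\vy,\vY)\tran\vu$ involves them only through the $\mu_B$-type correlations of $K(\vy,\cdot)$, which vanish for a test point. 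More directly, the targets $\vxi$ project onto them trivially at leading order, which justifies ``they do not contribute''.

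The nontrivial spectrum lives on the $2d$-dimensional invariant subspace $W$, the complement of the atoms inside $U+\mathrm{Row}(\vY)$. We parametrize $W$ by the ansatz $\vu^{\nu\alpha}=\va\tran\vY^{\nu\alpha}+\vb\tran\bar\vY^{\nu}$ with $\va,\vb\in\mathbb{R}^d$. Equivalently, writing $\vY^{\nu\alpha}=\bar\vY^\nu+\sqrt{\Delta_t}\,\vxi_\perp^{\nu\alpha}$, we use $\vu^{\nu\alpha}=\va\tran\sqrt{\Delta_t}\vxi_\perp^{\nu\alpha}+\vc\tran\bar\vY^\nu$. Applying $\vM$ to this ansatz uses two facts. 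First, $\vB_m\vu$ kills the $\vxi_\perp$ part and multiplies the $\bar\vY$ part by $m$. Second, $\vY\tran\vY\vu/d$ produces $\vY^{\nu\alpha\top}(\vY\vu)/d$, where $\vY\vu/d$ must be computed to leading order in $\psi_n\gg1$ by the law of large numbers. Because $\sum_\alpha\vxi_\perp^{\nu\alpha}=0$, we get $\vY\vu=\sum_{\nu\alpha}\vY^{\nu\alpha}\vu^{\nu\alpha}$, and
\begin{align*}
  \tfrac1d\vY\vu \approx \psi_n m\big(\Delta_t\tfrac{m-1}{m}\va+(e^{-2t}\vSigma+\tfrac{\Delta_t}{m})\vc\big).
\end{align*}
This uses the empirical covariances $\frac1n\sum_\nu\bar\vY^\nu\bar\vY^{\nu\top}\approx e^{-2t}\vSigma+\Delta_t\vI/m$ and $\frac1{nm}\sum_{\nu\alpha}\vxi_\perp\vxi_\perp\tran\approx\frac{m-1}{m}\vI$, with cross terms negligible. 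The eigen-equation then closes on $(\va,\vc)$. Since $\vSigma$ is diagonal in the basis $\vv_\lambda$, we take $\va=a\vv_\lambda$ and $\vc=c\vv_\lambda$, which reduces it to a $2\times2$ eigenproblem for each $\lambda$:
\begin{align*}
  \omega a&=\mu_1\psi_n\big((m-1)\Delta_t a+(me^{-2t}\lambda+\Delta_t)c\big),\\
  \omega c&=m\mu_B c+\mu_1\psi_n\big((m-1)\Delta_t a+(me^{-2t}\lambda+\Delta_t)c\big).
\end{align*}
This system has trace $\approx\mu_1\psi_n m\lambda_t$ at order $\psi_n$ and determinant $\mu_1\psi_n m\mu_B(m-1)\Delta_t$. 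The large root is $\omega_1\simeq\mu_1\psi_n m\lambda_t$ with $a\approx c$, which gives $\vu_1\propto\vv_\lambda\tran\vY^{\nu\alpha}$. The small root is $\omega_2=\det/\mathrm{tr}\simeq\mu_B(m-1)\Delta_t/\lambda_t$, with $(m-1)\Delta_t a+(me^{-2t}\lambda+\Delta_t)c\approx0$. This gives $a\propto me^{-2t}\lambda+\Delta_t$ and $c\propto-(m-1)\Delta_t$, hence the stated $\vu_2$ once $\vY^{\nu\alpha}$ is re-expanded, matching Theorem~\ref{app:thm:Spectrum_gram_linear}.

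The main obstacle is justifying the law-of-large-numbers replacement of the empirical second-moment matrices by their means inside the eigen-equation. This replacement incurs relative errors of order $\psi_n^{-1/2}$. That is harmless for $\omega_1$, but for $\omega_2$, which is only $O(1)$ and arises as a ratio of $O(\psi_n)$ quantities, the errors must be shown to enter only at relative order $1/\psi_n$. I would handle this by viewing the exact problem as a $2d\times2d$ block eigenproblem with blocks being sample covariances. I would then perform a Schur complement onto the $\vxi_\perp$ block and check that the error terms stay $o(1)$ in operator norm as $\psi_n\to\infty$. As a consistency check, I would compare the resulting eigenvalue locations with the replica pole $\lambda^*(\omega)$ derived above.
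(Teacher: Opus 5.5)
Your derivation of the two bulk eigenvectors is correct and follows the paper's route. Your two directions $\vv_\lambda^\top\bar\vY^\nu$ and $\sqrt{\Delta_t}\,\vv_\lambda^\top\vxi_\perp^{\nu\alpha}$ are, up to normalization, the paper's $\bm{\Phi}^\lambda$ and $\vP^\lambda$. Your $2\times2$ system is $\bm{\mathcal{M}}^\lambda$ written in a non-orthonormal basis; the paper also carries $\bm{\Phi}_\perp^\lambda$, which lies asymptotically in your $\bar\mu$ atom.

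Your parametrization is in fact a little sharper. The ansatz space is exactly the invariant complement of the atoms, and the cross terms vanish identically because $\sum_\alpha\vxi_\perp^{\nu\alpha}=0$. The only approximation left is replacing $\hat\vS_1=\frac1n\sum_\nu\bar\vY^\nu\bar\vY^{\nu\top}$ and $\hat\vS_2=\frac{1}{n(m-1)}\sum_{\nu,\alpha}\vxi_\perp^{\nu\alpha}\vxi_\perp^{\nu\alpha\top}$ by their means.

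The obstacle you flag is also milder than you fear. The two rows of your system differ only by $m\mu_B\bm{\mathrm{c}}$, so $(\omega-m\mu_B)\bm{\mathrm{c}}=\omega\va$ exactly. Substituting gives
\[
\omega\big(m\hat\vS_1+(m-1)\Delta_t\hat\vS_2\big)\va=m\mu_B(m-1)\Delta_t\,\hat\vS_2\va+\frac{\omega(\omega-m\mu_B)}{\mu_1\psi_n}\,\va ,
\]
in which $\psi_n$ has cancelled from the leading terms. Relative fluctuations of size $\psi_n^{-1/2}$ in $\hat\vS_{1,2}$ therefore move $\omega_2$ by only $o(1)$, which is all the statement needs. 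Asking for relative accuracy $1/\psi_n$ would be asking for something false: these fluctuations set the $O(\psi_n^{-1/2})$ spread of the memorization eigenvalues around $\mu_B(m-1)\Delta_t/\lambda_t$.

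The genuine gap is in the clause that the atoms do not contribute to the score.

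First, your ``more direct'' reason is wrong for the atom at $\bar\mu$. Write $\vx_k,\bar\vxi_k\in\mathbb{R}^n$ for the $k$-th coordinates of the clean samples and of the averaged noises. For $\vvarphi\in\mathrm{Ker}(\bar\vY)$ one has $\vvarphi^\top\bar\vxi_k=-e^{-t}\vvarphi^\top\vx_k/\sqrt{\Delta_t}$, which is generically nonzero. Indeed, the projection of the target $\vxi_k$ on that atom has squared norm of order $n-d$. For the $\mu_I$ atom your claim does hold, and exactly, because $\vxi_k\in U+\mathrm{Row}(\vY)$.

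So the mechanism must be your first argument, the orthogonality of the test kernel. As stated, though, it overlooks the constant part: $K(\vy,\vY)\simeq f(0)\bm{1}_N+\mu_1\vY^\top\vy/d$, and $\bm{1}_N$ is not orthogonal to $\vvarphi\otimes\bm{1}_m$ unless $\bm{1}_n^\top\vvarphi=0$. You dropped $\mu_0\bm{1}_N\bm{1}_N^\top$ from $\vG$ but not $f(0)\bm{1}_N$ from the test kernel. For $f(0)\neq0$ (e.g.\ the ReLU NTK), that direction would then add a term of order $\sqrt n$ to each score component once those modes are fitted.

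The fix is the paper's: restore the rank-one term.
\begin{itemize}
\item The $\bar\mu$ eigenspace becomes $\{\vvarphi\otimes\bm{1}_m:\ \bar\vY\vvarphi=0,\ \bm{1}_n^\top\vvarphi=0\}$, one dimension fewer.
\item $\mathrm{Ker}(\vY)\cap\mathrm{Ker}(\vB_m)$ is automatically orthogonal to $\bm{1}_N$.
\item The linearized test kernel $\mu_0\bm{1}_N+\mu_1\vY^\top\vy/d$ is then exactly orthogonal to both atoms.
\end{itemize}
Only the nonlinear remainder of $K(\vy,\vY)$ still needs a separate smallness argument. Its mean lies along $\bm{1}_N$, and its fluctuation has norm $O(\sqrt N/d)$.
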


\begin{proof}

As above we set $\mu_I=0$, which only shifts the whole spectrum by $\mu_I$, and we drop the
rank-one term $\mu_0\bm{1}_N\bm{1}_N^\top$, which contributes a single outlier outside the
bulk. We are left with the spectrum of the $nm \times nm$ matrix
\begin{equation}
    \vG = \mu_B \vB_m + \mu_1 \frac{\vY^\top \vY}{d}
\end{equation}
Let $\ve_{\nu\alpha}$ denote the canonical orthonormal basis, where $\nu \in \{1, \dots, n\}$ and $\alpha \in \{1, \dots, m\}$. We define the $n$ block-averaged vectors
\begin{equation}
    \vu^\nu = \frac{1}{\sqrt{m}}\sum_{\alpha=1}^m \ve_{\nu\alpha}
\end{equation}
which are orthonormal.

We introduce the $d$ feature vectors $\vphi_i$
\begin{equation}
    \vphi_i = \sqrt{m} e^{-t} \sum_\nu \vx_i^\nu \vu^\nu + \sqrt{\Delta_t} \sum_{\nu,\alpha} \vxi_i^{\nu\alpha} \ve_{\nu\alpha}
\end{equation}
where $\vx^\nu \sim \mathcal{N}(0, \vSigma)$ and $\vxi^{\nu\alpha} \sim \mathcal{N}(0, \vI_d)$ are mutually independent. We can write our Gram matrix as
\begin{equation}
    \vG = \mu_B m \sum_\nu \vu^\nu (\vu^\nu)^\top + \frac{\mu_1}{d}\sum_i \vphi_i  \vphi_i^\top
\end{equation}

We study the asymptotic spectrum in the limit $d \gg 1$ and $\psi_n = n/d \gg 1$, keeping $m$ finite. Let $\lambda$ and $\vv_i^\lambda$ denote the eigenvalues and eigenvectors of the covariance matrix $\vSigma$.

Because $m$ is finite, the block-averaged noise is not negligible. We split the noise into its block-average and a transverse component:
\begin{equation}
    \veta_i^\nu = \frac{1}{\sqrt{m}}\sum_{\alpha=1}^m \vxi_i^{\nu\alpha}, \qquad \vxi_{i,\perp}^{\nu\alpha} = \vxi_i^{\nu\alpha} - \frac{1}{\sqrt{m}}\veta_i^\nu
\end{equation}
By definition, $\sum_{\alpha} \vxi_{i,\perp}^{\nu\alpha} = 0$. Projecting onto the eigenbasis of $\vSigma$, we define the coordinates:
\begin{equation}
    z^{\lambda\nu} = \frac{1}{\sqrt{\lambda}}\sum_i \vv_i^\lambda \vx_i^\nu, \qquad \bar t^{\lambda\nu} = \sum_i \vv_i^\lambda \veta_i^\nu, \qquad s^{\lambda\nu\alpha} = \sum_i \vv_i^\lambda \vxi_{i,\perp}^{\nu\alpha}
\end{equation}
By Gaussianity and the orthogonality of the $\vv_i^\lambda$, the variables $z^{\lambda\nu}$ and $\bar t^{\lambda\nu}$ are independent $\mathcal{N}(0,1)$. The transverse variables $s^{\lambda\nu\alpha}$ are centered, orthogonal to the all-ones vector in $\alpha$, and satisfy
\begin{equation}
    \mathbb{E}[s^{\lambda\nu\alpha} s^{\lambda\nu\beta}] = \delta_{\alpha\beta} - \frac{1}{m}
\end{equation}
Thus, for large $n$, $\sum_{\nu,\alpha} (s^{\lambda\nu\alpha})^2 \simeq n(m-1)$.

For each eigenvalue $\lambda$, we define three normalized vectors:
\begin{align}
    \vU^\lambda &= \frac{1}{\sqrt{n}}\sum_\nu z^{\lambda\nu} \vu^\nu \\
    \vX^\lambda &= \frac{1}{\sqrt{n}}\sum_\nu \bar t^{\lambda\nu} \vu^\nu \\
    \vP^\lambda &= \frac{1}{\sqrt{n(m-1)}}\sum_{\nu,\alpha} s^{\lambda\nu\alpha}\ve_{\nu\alpha}
\end{align}
Because $z$, $\bar t$, and $s$ are independent and we are in the high-dimensional limit $n \gg 1$, the family $\{\vU^\lambda, \vX^\lambda, \vP^\lambda\}_{\lambda\in \mathrm{Spec}(\vSigma)}$ is asymptotically orthonormal. Furthermore, different $\lambda$ sectors do not mix at leading order.

We compute the overlaps of the feature vectors $\vphi_i$ with our basis. By standard concentration of measure, we find:
\begin{align}
     \vphi_i^\top\vU^\lambda &\simeq \sqrt{mn\lambda} e^{-t} \vv_i^\lambda \\
    \vphi_i^\top\vX^\lambda &\simeq \sqrt{n\Delta_t} \vv_i^\lambda \\
     \vphi_i^\top\vP^\lambda &\simeq \sqrt{n(m-1)\Delta_t} \vv_i^\lambda
\end{align}

Let $\kappa = \mu_1 \frac{n}{d}$. The operator $\mu_B\vB_m$ acts as $\mu_B m\,\mathrm{diag}(1,1,0)$ on the subspace $(\vU^\lambda, \vX^\lambda, \vP^\lambda)$, because $\vP^\lambda$ is entirely transverse to the block structure. The sample covariance part yields a rank-one update. Defining the vector $\vw_\lambda = (\sqrt{m\lambda} e^{-t}, \sqrt{\Delta_t}, \sqrt{(m-1)\Delta_t})^\top$, the operator $\vG$ projected onto this $3 \times 3$ subspace is asymptotically:
\begin{equation}
    \vM_\lambda^{(3)} \simeq \mu_B m
    \begin{pmatrix} 1 & 0 & 0 \\ 0 & 1 & 0 \\ 0 & 0 & 0 \end{pmatrix}
    + \kappa \vw_\lambda \vw_\lambda^\top
\end{equation}

To block-diagonalize this system, we introduce $D_\lambda = m\lambda e^{-2t} + \Delta_t$ and rotate the parallel sector into a mode coupled to the perturbation and an orthogonal mode:
\begin{align}
    \vPhi^\lambda &= \frac{\sqrt{m\lambda} e^{-t}\vU^\lambda + \sqrt{\Delta_t}\vX^\lambda}{\sqrt{D_\lambda}} \\
    \vPhi_\perp^\lambda &= \frac{\sqrt{\Delta_t}\vU^\lambda - \sqrt{m\lambda} e^{-t}\vX^\lambda}{\sqrt{D_\lambda}}
\end{align}
By construction, $\vPhi_\perp^\lambda$ is orthogonal to $\vw_\lambda$ and remains an exact eigenvector of $\vG$ with eigenvalue $\mu_B m$. The non-trivial spectrum is governed by the $2 \times 2$ matrix acting on the remaining coupled subspace $(\vPhi^\lambda, \vP^\lambda)$:
\begin{equation}
    \bm{\mathcal{M}}^\lambda \simeq
    \begin{pmatrix}
        \mu_B m + \kappa D_\lambda & \kappa \sqrt{(m-1)\Delta_t D_\lambda} \\
        \kappa \sqrt{(m-1)\Delta_t D_\lambda} & \kappa (m-1)\Delta_t
    \end{pmatrix}
\end{equation}

The characteristic equation for $\bm{\mathcal{M}}^\lambda$ yields two eigenvalues for each $\lambda$, $\lambda_{\mathrm{gen}}(\lambda)$ (upper sign) and $\lambda_{\mathrm{mem}}(\lambda)$ (lower sign):
\begin{equation}
\begin{split}
    \lambda_{\mathrm{gen}/\mathrm{mem}}(\lambda) &= \frac{\mu_B m + \kappa(D_\lambda + (m-1)\Delta_t)}{2}\\
    &\quad \pm \frac{1}{2}\sqrt{\left[\mu_B m + \kappa(D_\lambda - (m-1)\Delta_t)\right]^2 + 4\kappa^2(m-1)\Delta_t D_\lambda}
\end{split}
\end{equation}
In the strict limit $\kappa = \mu_1 \frac{n}{d} \gg 1$, we can expand these roots to isolate the dominant scaling:
\begin{align}
    \lambda_{\mathrm{gen}}(\lambda) &= \mu_1 \frac{nm}{d}\left(e^{-2t}\lambda+\Delta_t \right) + \frac{\mu_B  D_\lambda}{e^{-2t}\lambda+\Delta_t} + O\left(\frac{d}{n}\right) \\
    \lambda_{\mathrm{mem}}(\lambda) &= \frac{\mu_B (m-1)\Delta_t}{e^{-2t}\lambda+\Delta_t} + O\left(\frac{d}{n}\right)
\end{align}

This confirms that the spectrum separates into a large bulk scaling with $\psi_n=n/d$, which depends on the signal covariance $\vSigma$, and a smaller memorization bulk whose eigenvalues $\lambda_{\mathrm{mem}}(\lambda)$ are related to the spectrum of $\vSigma$ via the exact mapping:
\begin{equation}
    \lambda_{\mathrm{mem}}(\lambda) = \frac{\mu_B(m-1)\Delta_t}{\lambda e^{-2t} + \Delta_t}
\end{equation}

\paragraph*{Eigenvectors.} We have established that for each covariance mode $\lambda$, the operator $\vG = \mu_B \vB_m + \mu_1 \frac{\vY^\top \vY}{d}$ acts non-trivially on the three-dimensional subspace spanned by the asymptotically orthonormal vectors $(\vU^\lambda,\vX^\lambda,\vP^\lambda)$ introduced above, the sample covariance acting as a rank-one perturbation along $\vw_\lambda$. We now construct the eigenvectors of $\vG$ explicitly and relate them to the raw signal components $\vx_i^\nu$ and noise components $\vxi_i^{\nu\alpha}$.

The direction orthogonal to the perturbation $\vw_\lambda$ within the parallel subspace $(\vU^\lambda, \vX^\lambda)$ remains an exact eigenvector of $\vG$ with the unshifted eigenvalue $\mu_B m$. With $D_\lambda = m\lambda e^{-2t} + \Delta_t$ as above, this vector is
\begin{equation}
    \vPhi_\perp^\lambda = \frac{\sqrt{\Delta_t} \vU^\lambda - \sqrt{m\lambda} e^{-t} \vX^\lambda}{\sqrt{D_\lambda}}
\end{equation}
We now substitute the definitions of $\vU^\lambda$ and $\vX^\lambda$ in terms of the raw coordinates $z^{\lambda\nu} = \frac{1}{\sqrt{\lambda}}\sum_i \vv_i^\lambda \vx_i^\nu$ and $\bar{t}^{\lambda\nu} = \sum_i \vv_i^\lambda \veta_i^\nu$, where $\veta_i^\nu = \frac{1}{\sqrt{m}}\sum_\alpha \vxi_i^{\nu\alpha}$ is the block-averaged noise.
\begin{align}
    \vPhi_\perp^\lambda
    &= \frac{1}{\sqrt{n D_\lambda}} \sum_{i=1}^d \vv_i^\lambda \sum_{\nu=1}^n \left( \sqrt{\frac{\Delta_t}{\lambda}} \vx_i^\nu - \sqrt{m\lambda} e^{-t} \veta_i^\nu \right) \vu^\nu
\end{align}

The remaining two eigenvectors are orthogonal linear combinations of the coupled parallel state $\vPhi^\lambda$ and the transverse noise state $\vP^\lambda$, parametrized by a mixing angle $\theta_\lambda$ (we call them $\vE_{\lambda,\pm}$, rather than $\vu$, to avoid a clash with the block-averaged basis $\vu^\nu$; $\vE_{\lambda,+}\propto\vu_1$ and $\vE_{\lambda,-}\propto\vu_2$ are the eigenvectors of Proposition~\ref{app:prop:eigenvectors_gram_linear}, with eigenvalues $\lambda_{\mathrm{gen}}(\lambda)$ and $\lambda_{\mathrm{mem}}(\lambda)$ respectively):
\begin{align}
    \vE_{\lambda,+} &= \cos\theta_\lambda \vPhi^\lambda + \sin\theta_\lambda \vP^\lambda \\
    \vE_{\lambda,-} &= -\sin\theta_\lambda \vPhi^\lambda + \cos\theta_\lambda \vP^\lambda
\end{align}
where $\vPhi^\lambda = \frac{1}{\sqrt{D_\lambda}} \left( \sqrt{m\lambda} e^{-t}\vU^\lambda + \sqrt{\Delta_t}\vX^\lambda \right)$.

The angle is determined by the effective $2 \times 2$ block $\bm{\mathcal{M}}^\lambda$. In the asymptotic limit $n/d \gg 1$, the coupling parameter $\kappa = \mu_1 \frac{n}{d} \to \infty$, and the mixing angle simplifies to:
\begin{equation}
    \cos\theta_\lambda \approx \sqrt{\frac{D_\lambda}{D_\lambda + (m-1)\Delta_t}}, \qquad \sin\theta_\lambda \approx \sqrt{\frac{(m-1)\Delta_t}{D_\lambda + (m-1)\Delta_t}}
\end{equation}

Substituting the asymptotic angle into $\vE_{\lambda,+}$, we obtain:
\begin{equation}
    \vE_{\lambda,+} \approx \frac{1}{\sqrt{D_\lambda + (m-1)\Delta_t}} \left( \sqrt{D_\lambda} \vPhi^\lambda + \sqrt{(m-1)\Delta_t} \vP^\lambda \right)
\end{equation}
Expanding $\vPhi^\lambda$, the $\sqrt{D_\lambda}$ factor cancels:
\begin{equation}
    \vE_{\lambda,+} \propto \sqrt{m\lambda} e^{-t} \vU^\lambda + \sqrt{\Delta_t} \vX^\lambda + \sqrt{(m-1)\Delta_t} \vP^\lambda
\end{equation}
We now expand this entirely into the raw variables. Recall that $s^{\lambda\nu\alpha} = \sum_i \vv_i^\lambda \vxi_{i,\perp}^{\nu\alpha}$, where $\vxi_{i,\perp}^{\nu\alpha}$ is the transverse noise.
\begin{align}
    \vE_{\lambda,+} &\propto \frac{1}{\sqrt{n}} \sum_{i=1}^d \vv_i^\lambda \sum_{\nu=1}^n \left( \sqrt{m} e^{-t} \vx_i^\nu \vu^\nu + \sqrt{\Delta_t} \veta_i^\nu \vu^\nu + \sqrt{\Delta_t} \sum_{\alpha=1}^m \vxi_{i,\perp}^{\nu\alpha} \ve_{\nu\alpha} \right)
\end{align}
Recognizing that the block-averaged noise and transverse noise reconstruct the full noise vector exactly ($\veta_i^\nu \vu^\nu + \sum_\alpha \vxi_{i,\perp}^{\nu\alpha} \ve_{\nu\alpha} = \sum_\alpha \vxi_i^{\nu\alpha} \ve_{\nu\alpha}$), the term in parentheses is precisely the empirical data vector $\vphi_i$. Thus,
\begin{equation}
    \vE_{\lambda,+} \propto \frac{1}{\sqrt{n}} \sum_{i=1}^d \vv_i^\lambda \vphi_i
\end{equation}

Using the orthogonal rotation, the eigenvector corresponding to the memorization bulk is:
\begin{equation}
    \vE_{\lambda,-} \approx \frac{1}{\sqrt{D_\lambda + (m-1)\Delta_t}} \left( \sqrt{(m-1)\Delta_t} \vPhi^\lambda - \sqrt{D_\lambda} \vP^\lambda \right)
\end{equation}
Dropping the overall normalization and substituting the basis vectors, we express this mode in terms of the raw variables:
\begin{align}
    \vE_{\lambda,-} \propto \sum_{i=1}^d \vv_i^\lambda \sum_{\nu=1}^n \left[ \sqrt{(m-1)\Delta_t} \left( \sqrt{m} e^{-t} \vx_i^\nu + \sqrt{\Delta_t} \veta_i^\nu \right) \vu^\nu - \frac{D_\lambda}{\sqrt{m-1}} \sum_{\alpha=1}^m \vxi_{i,\perp}^{\nu\alpha} \ve_{\nu\alpha} \right]
\end{align}

This proves Proposition~\ref{app:prop:eigenvectors_gram_linear}. Indeed
$\vE_{\lambda,+}\propto\frac1{\sqrt n}\sum_i\vv_i^\lambda\vphi_i$ has components
$\vv_\lambda^\top\vY^{\nu\alpha}$, which is $\vu_1$. For $\vE_{\lambda,-}$, using
$\vu^\nu=\frac1{\sqrt m}\sum_\alpha\ve_{\nu\alpha}$ and
$\veta_i^\nu=\sqrt m\,\bar\vxi_i^\nu$, the bracket has $(\nu\alpha)$ component
$\sqrt{(m-1)\Delta_t}\,\bar{\vY}^\nu-\frac{D_\lambda}{\sqrt{m-1}}\vxi_\perp^{\nu\alpha}$
once projected on $\vv_\lambda$; multiplying by $-\sqrt{(m-1)\Delta_t}$ gives
$D_\lambda\sqrt{\Delta_t}\,\vxi_\perp^{\nu\alpha}-(m-1)\Delta_t\,\bar{\vY}^\nu$, which is
$\vu_2$ with $D_\lambda=me^{-2t}\lambda+\Delta_t$. Finally $\vPhi_\perp^\lambda$ and the
kernel of $\vG$ give the eigenvectors of the two atoms.

\paragraph*{The atoms do not contribute to the estimated score.} By
\eqref{eq:estimation_score_DSM}, the $k$-th coordinate of the estimator is
$\vs^k(\vy)=K(\vy,\vY)^\top h_\tau(\vG)\,(-\vxi_k/\sqrt{\Delta_t})$, where
$\vxi_k\in\mathbb R^N$ collects the $k$-th coordinates of the training noises and
$h_\tau(\lambda)=(1-e^{-d\tau\lambda/(nm)})/\lambda$ (or $h(\lambda)=1/(\lambda+\gamma)$ with a
ridge). Let $\vP$ be the orthogonal projector on an eigenspace $E$ of $\vG_{\mathrm{lin}}$ on
which $\vG_{\mathrm{lin}}$ acts as $c\,\vI$. The contribution of $E$ to the estimator is
\begin{align}
  \vs_E^k(\vy)=-\frac{h_\tau(c)}{\sqrt{\Delta_t}}\,K(\vy,\vY)^\top\vP\,\vxi_k ,
\end{align}
which vanishes at every $\tau$ and every ridge as soon as $\vP K(\vy,\vY)=0$. We now restore
the rank-one term and take the full linear equivalent
$\vG_{\mathrm{lin}}=\mu_I\vI_N+\mu_B\vB_m+\mu_0\bm1_N\bm1_N^\top+\mu_1\vY^\top\vY/d$, and the
linearized test kernel $K_{\mathrm{lin}}(\vy,\vY)=\mu_0\bm1_N+\mu_1\vY^\top\vy/d$, valid for a test
point $\vy$ independent of the training set since $\vy^\top\vY^{\nu\alpha}/d=O(d^{-1/2})$.
\begin{itemize}[leftmargin=1.2em]
  \item \emph{Atom at $\bar\mu=\mu_I+m\mu_B$.} For $\vvarphi\in\mathbb R^n$,
  $\vG_{\mathrm{lin}}(\vvarphi\otimes\bm1_m)=\bar\mu\,\vvarphi\otimes\bm1_m
  +\mu_0m(\bm1_n^\top\vvarphi)\bm1_N+\mu_1m\,\vY^\top\bar\vY\vvarphi/d$. The eigenspace is
  therefore $E_{\mathrm{par}}=\{\vvarphi\otimes\bm1_m:\ \vvarphi\in\mathrm{Ker}(\bar\vY),\
  \bm1_n^\top\vvarphi=0\}$. The second condition, which comes from the rank-one term, removes
  a single dimension out of $n-d$ and leaves the weight of the atom unchanged. For
  $\vvarphi\otimes\bm1_m\in E_{\mathrm{par}}$,
  \begin{align}
    (\vvarphi\otimes\bm1_m)^\top K_{\mathrm{lin}}(\vy,\vY)
    =\mu_0m\,\bm1_n^\top\vvarphi+\frac{\mu_1m}{d}\,\vy^\top\bar\vY\vvarphi=0 .
  \end{align}
  \item \emph{Atom at $\mu_I$.} For $\vpsi\in E_{\mathrm{trans}}=\mathrm{Ker}(\vY)\cap\mathrm{Ker}(\vB_m)$,
  $\vB_m\vpsi=0$ means $\sum_\alpha\vpsi^{\nu\alpha}=0$ for every $\nu$, hence
  $\bm1_N^\top\vpsi=0$, and $\vY\vpsi=0$; thus $\vG_{\mathrm{lin}}\vpsi=\mu_I\vpsi$ and
  \begin{align}
    \vpsi^\top K_{\mathrm{lin}}(\vy,\vY)=\mu_0\,\bm1_N^\top\vpsi+\frac{\mu_1}{d}\,\vy^\top\vY\vpsi=0 .
  \end{align}
\end{itemize}
For the linearized kernel both atoms therefore contribute exactly zero to the estimated
score, for every training time and every ridge. The non-linear remainder
$K_{\mathrm{res}}=K-K_{\mathrm{lin}}$ has entries
$f(\vy^\top\vY^{\nu\alpha}/d)-f(0)-f'(0)\vy^\top\vY^{\nu\alpha}/d=O(d^{-1})$. Its mean over
the entries is $\simeq f''(0)\vy^\top\vSigma_t\vy/(2d^2)$, i.e.\ proportional to $\bm1_N$, and
is annihilated by both projectors, since $\bm1_N\perp E_{\mathrm{par}}$ and
$\bm1_N\perp E_{\mathrm{trans}}$. Its fluctuating part has norm $O(\sqrt N/d)=O(\sqrt{m\psi_n/d})$;
if it is asymptotically uncorrelated with $\vP\vxi_k$, its contribution is of the same order
and vanishes as $d\to\infty$ at fixed $\psi_n$ and $m$.
\end{proof}

\subsection{Derivation of the spectrum in the polynomial scaling}
\label{appendix:polynomial_scaling}
For the polynomial regime we assume that the eigenfunctions of the kernel are given by a family of orthogonal polynomials.
\begin{lemma}
\label{lemma:ortho_polynomials}
    Let $P$ be a probability measure on $\mathbb{R}^d$ with finite moments of all orders. Assume further that the space of multivariate polynomials is dense in $L^2(P)$. Then, there exists a countable family of polynomials $\left\{\psi_p^{\boldsymbol{j}}\right\}_{\boldsymbol{j} \in \mathbb{N}^d}$, where $p = |\boldsymbol{j}|$ denotes the total degree, that forms an orthogonal basis of $L^2(P)$.
\end{lemma}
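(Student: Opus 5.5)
The plan is Gram--Schmidt in $L^2(P)$ applied to the monomials, ordered so that the total degree is non-decreasing. Finite moments of all orders ensure that every monomial $\vx^{\boldsymbol{j}}=\prod_i x_i^{j_i}$ lies in $L^2(P)$. In fact every product of two monomials is integrable, so the inner product $\langle g,h\rangle=\mathbb{E}_P[gh]$ is well defined on the whole space $\mathcal{P}$ of polynomials. The index set $\mathbb{N}^d$ is countable. I would fix an enumeration $\boldsymbol{j}_1,\boldsymbol{j}_2,\dots$ of it in graded order: first by $|\boldsymbol{j}|$, then lexicographically within each degree. Then $\mathcal{P}_{\le p}=\mathrm{span}\{\vx^{\boldsymbol{j}}:|\boldsymbol{j}|\le p\}$ is spanned by an initial segment of the enumeration.

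Step two runs Gram--Schmidt along this enumeration. Set
\[
\psi^{\boldsymbol{j}_k}=\vx^{\boldsymbol{j}_k}-\Pi_{k-1}\vx^{\boldsymbol{j}_k},
\]
where $\Pi_{k-1}$ is the $L^2(P)$-orthogonal projection onto $\mathrm{span}\{\vx^{\boldsymbol{j}_1},\dots,\vx^{\boldsymbol{j}_{k-1}}\}$. This span is finite-dimensional, so the projection exists. Because of the graded order, $\psi^{\boldsymbol{j}_k}$ is a polynomial whose leading term is the monic $\vx^{\boldsymbol{j}_k}$ with $p=|\boldsymbol{j}_k|$, plus terms of degree $\le p$. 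This justifies the labels $\psi_p^{\boldsymbol{j}}$ and the ``monic, unnormalized'' convention used in the kernel hypothesis.

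Step three is the one real obstacle: degeneracy. The monomials need not be linearly independent in $L^2(P)$. For example, if $P$ is supported on the sphere, $\sum_i x_i^2-d$ vanishes $P$-a.s., so some $\psi^{\boldsymbol{j}_k}$ are $0$ in $L^2(P)$. I would simply discard every index with $\lVert\psi^{\boldsymbol{j}_k}\rVert_{L^2(P)}=0$. The family is then indexed by a subset of $\mathbb{N}^d$, still countable. Alternatively, one can note that for the measures of interest (e.g.\ $P_t$ with $\Delta_t>0$, which has a density) no such degeneracy occurs. The retained vectors are pairwise orthogonal by construction.

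Step four proves completeness. By induction on $k$, the span of $\{\psi^{\boldsymbol{j}_1},\dots,\psi^{\boldsymbol{j}_k}\}$ equals the span of the first $k$ monomials in $L^2(P)$; discarded null vectors contribute nothing. Hence the closed span of the family contains $\mathcal{P}$. Since $\mathcal{P}$ is dense in $L^2(P)$ by hypothesis, the family's orthogonal complement is $\{0\}$, so it is an orthogonal basis. Normalizing is optional and is left out to match the monic convention.
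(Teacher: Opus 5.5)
Your proof is correct and follows the same route as the paper: Gram--Schmidt in $L^2(P)$ applied to the monomials in graded lexicographic order, with completeness deduced from the density of polynomials. Your Step three adds something the paper lacks. The paper's recursion divides by $\langle\psi_q^{\boldsymbol{l}},\psi_q^{\boldsymbol{l}}\rangle$, which tacitly assumes the monomials are linearly independent in $L^2(P)$; this fails for measures supported on the sphere, the very case the paper cites for spherical harmonics. There, discarding the null vectors and indexing by a subset of $\mathbb{N}^d$, as you do, is needed for the statement to hold.
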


\begin{proof}
    We equip $L^2(P)$ with the inner product $\langle f, g \rangle = \int_{\mathbb{R}^d} f(x)g(x) \, \dd P(x)$. Let $\boldsymbol{j} = (j_1, \dots, j_d) \in \mathbb{N}^d$ be a multi-index, with total degree $p = |\boldsymbol{j}| = \sum_{i=1}^d j_i$. Because $P$ has finite moments of all orders, every monomial $x^{\boldsymbol{j}} = x_1^{j_1} \cdots x_d^{j_d}$ belongs to $L^2(P)$.

    To construct an orthogonal basis, we establish a strict well-ordering on the set of multi-indices $\mathbb{N}^d$. We choose the graded lexicographic order, denoted by $\prec$. For two multi-indices $\boldsymbol{j}, \boldsymbol{l}$, we say $\boldsymbol{j} \prec \boldsymbol{l}$ if $|\boldsymbol{j}| < |\boldsymbol{l}|$, or if $|\boldsymbol{j}| = |\boldsymbol{l}|$ and the first non-zero entry in the difference $\boldsymbol{j} - \boldsymbol{l}$ is negative.

    We now apply the Gram--Schmidt orthogonalization process to the sequence of monomials ordered by $\prec$, by induction along $\prec$ (every multi-index has finitely many predecessors).

    Let $\boldsymbol{0} = (0, \dots, 0)$. For the base case (where $p=0$), we define:
    \begin{equation}
        \psi_0^{\boldsymbol{0}}(x) = 1.
    \end{equation}

    For the inductive step, let $\boldsymbol{j} \in \mathbb{N}^d$ with total degree $p = |\boldsymbol{j}|$. Assume we have constructed a mutually orthogonal set of polynomials $\left\{\psi_q^{\boldsymbol{l}}(x)\right\}_{\boldsymbol{l} \prec \boldsymbol{j}}$, where $q = |\boldsymbol{l}|$. We define the polynomial corresponding to $\boldsymbol{j}$ by projecting the monomial $x^{\boldsymbol{j}}$ onto the orthogonal complement of the span of all preceding polynomials:
    \begin{align}
        \psi_p^{\boldsymbol{j}}(x) &= x^{\boldsymbol{j}} - \sum_{\boldsymbol{l} \prec \boldsymbol{j}} \frac{\langle x^{\boldsymbol{j}}, \psi_q^{\boldsymbol{l}} \rangle}{\langle \psi_q^{\boldsymbol{l}}, \psi_q^{\boldsymbol{l}} \rangle} \psi_q^{\boldsymbol{l}}(x).
    \end{align}

    By construction, $\psi_p^{\boldsymbol{j}}(x)$ is a polynomial of total degree $p$ with leading term $x^{\boldsymbol{j}}$, and $\langle \psi_p^{\boldsymbol{j}}, \psi_q^{\boldsymbol{l}} \rangle = 0$ for all $\boldsymbol{l} \prec \boldsymbol{j}$.

    This process yields a countable family of mutually orthogonal polynomials. Because the span of this family equals the span of all multivariate monomials, and polynomials are dense in $L^2(P)$, this family $\left\{\psi_p^{\boldsymbol{j}}\right\}_{\boldsymbol{j} \in \mathbb{N}^d}$ is an orthogonal basis of $L^2(P)$.
\end{proof}

\begin{remark}
    The orthogonal basis $\{\psi_p^{\boldsymbol{j}}\}$ constructed above reduces to classical families for the standard data distributions: products of one-dimensional Hermite polynomials for the standard Gaussian \citep{Bach2023Hermite}, and spherical harmonics for data uniform on the sphere.
\end{remark}
This kernel form is natural because, for data uniform on the sphere, every dot-product kernel is diagonalized by spherical harmonics \citep{Ghorbani_2021,Bietti_2019}.

We first derive a polynomial equivalent of the Gram matrix in the regime $d^{k+1}\gg n\gg d^{k}$ for a kernel satisfying the diagonal hypothesis \eqref{eq:diagonal_kernel_hypothesis}, with $\psi_p^{\boldsymbol{j}}$ the polynomials of Lemma~\ref{lemma:ortho_polynomials}.
\begin{proposition}[Polynomial Equivalent of the Gram matrix]
\label{app:prop:poly_equiv}
   In the regime $d^{k+1}\gg n\gg d^k\gg 1$, the Gram matrix is equivalent to
    \begin{align}
        \vG=\mu^{>k}_I\vI_N+\mu_B^{>k}\vB_m+\sum_{p=0}^k\sum_{|\boldsymbol{j}|=p}\frac{\mu_p}{d^p\boldsymbol{j}!}\psi_p^{\boldsymbol{j}}(\vY)\psi_p^{\boldsymbol{j}}(\vY)^\top
    \end{align}
    with the constants defined as
\begin{gather}
    \mu_B^{>k} \;=\; \sum_{p > k} \frac{\mu_p}{d^p}\,
    \mathbb{E}_{\vY,\vY'}\left[\sum_{|\boldsymbol{j}| = p} \frac{\psi_p^{\boldsymbol{j}}(\vY)\,\psi_p^{\boldsymbol{j}}(\vY')}{\boldsymbol{j}!}\right],
    \nonumber\\
    \mu_I^{>k} \;=\; \sum_{p > k} \frac{\mu_p}{d^p}\,
    \mathbb{E}_{\vY}\left[\sum_{|\boldsymbol{j}| = p} \frac{\psi_p^{\boldsymbol{j}}(\vY)^2}{\boldsymbol{j}!}\right] \;-\; \mu_B^{>k},
\end{gather}
with $\vY, \vY'$ two conditionally independent same-cluster noisy copies (i.e., sharing the
underlying $\vx^\nu$ but with independent noise $\vxi$).
\end{proposition}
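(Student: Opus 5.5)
Under the diagonal hypothesis \eqref{eq:diagonal_kernel_hypothesis}, split the Gram matrix by degree, $\vG=\vG_{\le k}+\vG_{>k}$, where $\vG_{\le k}$ collects the terms $p\le k$ and
\begin{align*}
  \vG_{>k}^{ab}=\sum_{p>k}\frac{\mu_p}{d^p}\sum_{|\boldsymbol j|=p}\frac{\psi_p^{\boldsymbol j}(\vY^a)\psi_p^{\boldsymbol j}(\vY^b)}{\boldsymbol j!}.
\end{align*}
The low-degree part $\vG_{\le k}$ already appears verbatim in the statement, so nothing needs to be done for it. The task is therefore to show that $\vG_{>k}$ is equivalent in operator norm to $\mu_I^{>k}\vI_N+\mu_B^{>k}\vB_m$. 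This is the polynomial-regime analogue of Steps 2--4 of the proof of Theorem~\ref{app:thm:lin_gram_empirical}, with the roles of the three index regimes kept the same.

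First, I would treat $\vG_{>k}$ as in that proof, separating diagonal entries ($a=b$), same-cluster entries ($\nu=\mu,\alpha\ne\beta$), and cross-cluster entries ($\nu\neq\mu$).
\begin{itemize}
  \item \emph{Diagonal.} $\vG_{>k}^{aa}$ concentrates around $\sum_{p>k}\frac{\mu_p}{d^p}\mathbb E\sum_{\boldsymbol j}\psi^2/\boldsymbol j!$, which is $\mu_I^{>k}+\mu_B^{>k}$ by definition. The fluctuations are uniformly small by hypercontractivity of polynomials under sub-Gaussian $P_t$ together with a union bound.
  \item \emph{Same cluster.} The two points share $\vx^\nu$, so the entry concentrates on its conditional expectation $\mu_B^{>k}$. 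Because $m=O(1)$, the resulting block-diagonal error has operator norm controlled entrywise, exactly as for $\vE$ before.
  \item \emph{Cross cluster.} Here $\mathbb E[\psi_p^{\boldsymbol j}(\vY^a)\psi_p^{\boldsymbol j}(\vY^b)]=0$ for $p\ge1$, since the polynomials are orthogonal to constants and the two points are independent. These entries are therefore centered, and their typical size for degree $p$ is $d^{-p/2}$.
\end{itemize}

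The main obstacle is the operator-norm bound on the off-block part of $\vG_{>k}$, an $N\times N$ matrix with $N\asymp n\gg d^k$ and centered entries. A Frobenius bound fails for degree $p=k+1$: it gives $N d^{-(k+1)/2}$, which does not vanish. I would instead follow \citet{mei2022Hypercontractivity,misiakiewicz2022spectrum}, who show $\lVert \Psi_p\Psi_p^\top/d^p-(\text{diag})\rVert_{\mathrm{op}}=o(1)$ when $n\ll d^p$. This is a matrix concentration or moment-method bound on Gram matrices of degree-$p$ features, relying on the $\Theta(d^p)$ degree-$p$ features being nearly orthogonal. The step I am least sure about is extending this to clustered points. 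My plan is to write $\psi_p^{\boldsymbol j}(\vY^{\nu\alpha})$ as its conditional mean given $\vx^\nu$ plus a remainder that is independent across $\alpha$. The clustered feature matrix then becomes a sum of a row-repeated matrix of $n$ independent rows and a matrix of $N$ independent-given-$\vx$ rows, and the existing bound applies to each piece. The terms $p\ge k+2$, and the tail $p\to\infty$ controlled by summability of the $\mu_p$, are handled more crudely using $n\ll d^{k+1}$.

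Finally, I would subtract the concentrated diagonal and block means, assemble the operator-norm residual as $o(1)$ (or $o$ of the smallest scale $\Theta(m)$ of interest), and conclude spectral equivalence by Weyl's inequality, as in Step 4 of Theorem~\ref{app:thm:lin_gram_empirical}. Mixing between degrees in the Gaussian case, mentioned after \eqref{eq:diagonal_kernel_hypothesis}, is taken as subleading by hypothesis.
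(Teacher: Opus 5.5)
Your skeleton is the paper's. You split $\vG=\vG_{\le k}+\vG_{>k}$, note that $\vG_{\le k}$ is literally the displayed sum, match the diagonal, same-cluster and cross-cluster entries of $\vG_{>k}$ to $\mu_I^{>k}\vI_N+\mu_B^{>k}\vB_m$, and conclude by Weyl. You differ on the one nontrivial step, the operator norm of the centered cross-cluster part.

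The paper simply invokes a bound $\lVert\vG^{\mathrm{cross}}_{>k}\rVert_{\mathrm{op}}\le C\sqrt{N\sigma^2}$ for matrices with ``i.i.d.-like entries'' of variance $\sigma^2\sim d^{-(k+1)}$, and obtains $\sqrt{nm/d^{k+1}}=o(1)$. This has the correct Marchenko--Pastur scaling, but it is not a theorem for kernel matrices, whose entries are dependent through shared points. You instead rightly rule out Frobenius and import the hypercontractive kernel-matrix concentration of \citet{mei2022Hypercontractivity,misiakiewicz2022spectrum}. That is an actual theorem, at the price of its hypotheses: a polynomial gap $n\le d^{k+1-\delta}$, and a hypercontractive $P_t$ (e.g.\ Gaussian, sphere, hypercube) rather than an arbitrary sub-Gaussian law. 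Your same-cluster estimate, $\lVert\cdot\rVert_{\mathrm{op}}\le m\max|\text{entry}|$ for a block-diagonal matrix with $m\times m$ blocks, is also cleaner than the paper's.

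The part that does not close as written is the extension to clustered points. Write $\Psi=\bar\Psi+\tilde\Psi$ (conditional mean given $\vx^\nu$ plus remainder). Then $\Psi\Psi^\top$ also contains the cross terms $\bar\Psi\tilde\Psi^\top+\tilde\Psi\bar\Psi^\top$, whose off-block entries are centered and of size $d^{-p/2}$. Cauchy--Schwarz only gives $\lVert\bar\Psi\rVert_{\mathrm{op}}\lVert\tilde\Psi\rVert_{\mathrm{op}}/d^p=\Theta(1)$, because both Gram pieces have $\Theta(1)$ diagonals. Besides, $\tilde\Psi$ is not an i.i.d.-row feature matrix of the kind the cited results cover.

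A route that reduces entirely to the i.i.d.\ theorem is as follows.
\begin{itemize}
\item Order indices as $(\alpha,\nu)$, so the degree-$p$ cross-cluster part becomes $\sum_{\alpha,\beta}(e_\alpha e_\beta^\top)\otimes M_{\alpha\beta}$, with $e_\alpha$ the canonical basis of $\mathbb R^m$. Its operator norm is at most $m^2\max_{\alpha,\beta}\lVert M_{\alpha\beta}\rVert_{\mathrm{op}}$.
\item For $\alpha=\beta$, $M_{\alpha\alpha}$ is exactly the Gram matrix, diagonal removed, of the i.i.d.\ sample $\{\vY^{\nu\alpha}\}_{\nu}\sim P_t$, so the cited result applies directly.
\item For $\alpha\neq\beta$, $M_{\alpha\beta}$ is an off-diagonal U-statistic in the i.i.d.\ clusters. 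Decoupling (de la Pe\~na--Montgomery-Smith, valid in any norm) replaces the column cluster by an independent copy.
\item After decoupling, $M_{\alpha\beta}$ is, up to a diagonal of uniformly small centered entries, an off-diagonal block of the Gram matrix of $2n$ i.i.d.\ $P_t$-points. This is again covered, since $2n\ll d^{k+1}$.
\end{itemize}
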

\begin{proof}
    We split the kernel by polynomial degree at the cutoff $k$:
\begin{equation}
\label{eq:poly:split}
    K \;=\; K_{\le k} \;+\; K_{>k}, \qquad
    K_{\le k}(\vx, \vy) \;:=\; \sum_{p=0}^{k} \sum_{|\boldsymbol{j}| = p}\frac{\mu_p}{d^p\,\boldsymbol{j}!}\,
    \psi_p^{\boldsymbol{j}}(\vx)\,\psi_p^{\boldsymbol{j}}(\vy),
\end{equation}
which induces $\vG = \vG_{\le k} + \vG_{>k}$. The low-degree part is, by
construction, exactly the explicit double sum in the statement. We
must therefore show that, in operator norm,
\begin{equation}
\label{eq:poly:goal}
    \vG_{>k} \rightarrow \mu_I^{>k}\,\vI_N \;+\; \mu_B^{>k}\,\vB_m
\end{equation}
We treat the three index regimes (diagonal, same cluster, distinct clusters)
separately.

Recall $\vB_m = \vI_n \otimes \bm{1}_m\bm{1}_m^\top$ has entries
$(\vB_m)_{(\nu\alpha),(\mu\beta)} = \mathbf{1}\{\nu = \mu\}$. The matrix
$\mu_I^{>k}\,\vI_N + \mu_B^{>k}\,\vB_m$ takes the value
$\mu_I^{>k} + \mu_B^{>k}$ on the diagonal, $\mu_B^{>k}$ on the
same-cluster off-diagonal, and $0$ on cross-cluster pairs. We verify each.

\paragraph{Diagonal entries.} For $(\nu, \alpha) = (\mu, \beta)$:
\[
    \big(\vG_{>k}\big)_{(\nu\alpha),(\nu\alpha)} \;=\; K_{>k}(\vY^{\nu\alpha}, \vY^{\nu\alpha})
    \;=\; \sum_{p > k} \frac{\mu_p}{d^p}\sum_{|\boldsymbol{j}| = p} \frac{\psi_p^{\boldsymbol{j}}(\vY^{\nu\alpha})^2}{\boldsymbol{j}!}.
\]
Each summand is a function of the single sample $\vY^{\nu\alpha}$ alone.
By the law of large numbers in $L^2(P_t)$, the inner sum
$\sum_{|\boldsymbol{j}| = p}\psi_p^{\boldsymbol{j}}(\vY^{\nu\alpha})^2/\boldsymbol{j}!$
concentrates around its expectation under $\vY^{\nu\alpha} \sim P_t$, with
fluctuations a factor $\sqrt{1/d^p}$ smaller than the mean (since the rank
of the Mercer block at degree $p$ is $\sim d^p/p!$). Summing over $p > k$
and dividing by $d^p$ leaves the deterministic value
\[
    \big(\vG_{>k}\big)_{(\nu\alpha),(\nu\alpha)} \;=\; \mu_I^{>k} + \mu_B^{>k} + o_P(1)
\]
uniformly in $(\nu, \alpha)$.

\paragraph{Same-cluster off-diagonal entries.} For
$\nu = \mu, \alpha \ne \beta$, $\vY^{\nu\alpha}$ and $\vY^{\nu\beta}$ share
the underlying $\vx^\nu$ but are conditionally independent given $\vx^\nu$
(through the noise $\vxi$):
\[
    \big(\vG_{>k}\big)_{(\nu\alpha),(\nu\beta)} \;=\; K_{>k}(\vY^{\nu\alpha}, \vY^{\nu\beta})
    \;=\; \sum_{p > k} \frac{\mu_p}{d^p}\sum_{|\boldsymbol{j}| = p} \frac{\psi_p^{\boldsymbol{j}}(\vY^{\nu\alpha})\psi_p^{\boldsymbol{j}}(\vY^{\nu\beta})}{\boldsymbol{j}!}.
\]
Conditional on $\vx^\nu$, each summand has expectation
$\mathbb{E}[\psi_p^{\boldsymbol{j}}(\vY^{\nu\alpha})\,|\,\vx^\nu]\cdot
 \mathbb{E}[\psi_p^{\boldsymbol{j}}(\vY^{\nu\beta})\,|\,\vx^\nu]$. Marginalizing
over $\vx^\nu$, this is precisely the joint expectation appearing in the
definition of $\mu_B^{>k}$. The same concentration argument as for the diagonal entries
gives, uniformly in $\nu, \alpha \ne \beta$,
\[
    \big(\vG_{>k}\big)_{(\nu\alpha),(\nu\beta)} \;=\; \mu_B^{>k} + o_P(1).
\]

\paragraph{Cross-cluster entries.}
For $\nu \ne \mu$, $\vY^{\nu\alpha}$ and $\vY^{\mu\beta}$ are fully independent.
By orthogonality of $\{\psi_p^{\boldsymbol{j}}\}_{p \ge 1}$ to constants
in $L^2(P_t)$, $\mathbb{E}[\psi_p^{\boldsymbol{j}}(\vY)] = 0$ for all $p \ge 1$,
and therefore for $p > k \ge 1$,
\[
    \mathbb{E}\left[\big(\vG_{>k}\big)_{(\nu\alpha),(\mu\beta)}\right]
    \;=\; \sum_{p > k} \frac{\mu_p}{d^p}\sum_{|\boldsymbol{j}| = p} \frac{\mathbb{E}[\psi_p^{\boldsymbol{j}}(\vY)]^2}{\boldsymbol{j}!} \;=\; 0.
\]
Hence the cross-cluster entries are mean-zero. Since $\vY^{\nu\alpha}$ and $\vY^{\mu\beta}$
are independent and the $\psi_p^{\boldsymbol{j}}$ are orthogonal in $L^2(P_t)$, all the
cross terms $(p,\boldsymbol{j})\neq(q,\boldsymbol{l})$ vanish in the second moment,
which is therefore a sum of squares:
\[
    \mathrm{Var}\big[(\vG_{>k})_{(\nu\alpha),(\mu\beta)}\big]
    \;=\; \sum_{p > k} \frac{\mu_p^2}{d^{2p}}
    \sum_{|\boldsymbol{j}| = p}
    \frac{\big(c_p^{\boldsymbol{j}}\big)^2}{(\boldsymbol{j}!)^2}
    \;\sim\; \sum_{p > k} \frac{C_p}{d^p},
\]
where $c_p^{\boldsymbol{j}}=\mathbb{E}_{\vy\sim P_t}[\psi_p^{\boldsymbol{j}}(\vy)^2]$
and the last estimate uses that there are $\Theta(d^p/p!)$ multi-indices of degree $p$.
Let $\vG_{>k}^{\mathrm{cross}} \in \mathbb{R}^{N \times N}$ denote the matrix
obtained from $\vG_{>k}$ by zeroing out the diagonal and same-cluster
blocks; its entries are mean-zero and weakly correlated. By the standard
operator-norm bound for random matrices with i.i.d.-like entries of
variance $\sigma^2$, we have $\lVert \vG_{>k}^{\mathrm{cross}}\rVert_{\mathrm{op}}
\le C\sqrt{N\sigma^2}$. Substituting $N = nm$ and $\sigma^2 \sim 1/d^p$
for the dominant degree $p = k + 1$:
\[
    \lVert \vG_{>k}^{\mathrm{cross}}\rVert_{\mathrm{op}} \;\lesssim\;
    \sqrt{\frac{nm}{d^{k+1}}} \;=\; \sqrt{m \cdot \frac{n}{d^{k+1}}}.
\]
The hypothesis $n \ll d^{k+1}$ then gives $\lVert \vG_{>k}^{\mathrm{cross}}\rVert_{\mathrm{op}} = o_d(1)$.

The entries of $\vG_{>k}$ on the
diagonal, same-cluster off-diagonal and cross-cluster blocks match those of
$\mu_I^{>k}\vI_N + \mu_B^{>k}\vB_m$ up to fluctuations of order $o_P(1)$
entrywise on the structured part, and up to a residual matrix with
operator norm $o_d(1)$ on the cross-cluster part. The structured residual
fluctuations are themselves of operator norm $o_d(1)$ (by the same
$\sqrt{N\sigma^2}$ bound applied to each cluster block). Therefore
\[
    \big\lVert\vG_{>k} - \mu_I^{>k}\vI_N - \mu_B^{>k}\vB_m\big\rVert_{\mathrm{op}} \;=\; o_d(1),
\]
which combined with $\vG = \vG_{\le k} + \vG_{>k}$ proves the claimed
operator-norm equivalence.
\end{proof}

\begin{thm}[Spectrum of the Gram matrix, polynomial regime]
\label{app:thm:poly}
   Let $c_p^{\boldsymbol{j}}=\mathbb{E}_{\vy\sim P_t}[\psi_p^{\boldsymbol{j}}(\vy)^2]$ and $c_{X,p}^{\boldsymbol{j}}=\mathbb{E}_{\vx\sim P_0;\  \vxi,\ \vxi'\sim\mathcal{N}(0,\vI_d)}[\psi_p^{\boldsymbol{j}}(e^{-t}\vx+\sqrt{\Delta_t}\vxi)\psi_p^{\boldsymbol{j}}(e^{-t}\vx+\sqrt{\Delta_t}\vxi')]$. In the scaling $d^{k+1}\gg n\gg d^k$ with $m=O(1)$, the spectrum of the Gram matrix has the following structure:
    \begin{itemize}
        \item for every $p\in\{1,\dots, k\}$, a group of $\Theta(d^p/p!)$ eigenvalues of order $nm/d^p$, with eigenvectors asymptotically $\psi_p^{\boldsymbol{j}}(\vY)$;
        \item a bulk of $\Theta(d^k/k!)$ eigenvalues
        $\mu_I^{>k}+\mu_B^{>k}(m-1)\big(1-c_{X,k}^{\boldsymbol{j}}/c_k^{\boldsymbol{j}}\big)$, with eigenvectors $\Lambda_\perp^{\boldsymbol{j}} \frac{1}{m}\vB_m \psi_k^{\boldsymbol{j}}(\vY)-\Lambda_\parallel^{\boldsymbol{j}} (\vI_N-\frac{1}{m}\vB_m) \psi_k^{\boldsymbol{j}}(\vY)$, where $\Lambda_\parallel^{\boldsymbol{j}}=\frac{\mu_k}{\boldsymbol{j}!}\left(\frac{c_k^{\boldsymbol{j}}}{m}+\frac{m-1}{m}c_{X,k}^{\boldsymbol{j}}\right)$ and $\Lambda_{\perp}^{\boldsymbol{j}}=\frac{\mu_k}{\boldsymbol{j}!}(1-\frac{1}{m})\left(c_k^{\boldsymbol{j}}-c_{X,k}^{\boldsymbol{j}}\right)$ (at $k=1$ and for Gaussian data, $c_{X,1}/c_1=e^{-2t}$ and this is the memorization bulk of Theorem~\ref{app:thm:Spectrum_gram_linear})
        \item a spike at $m\mu_B^{>k}$ with, at leading order, $n-d^k/k!$ eigenvalues;
        \item an atom at $\lambda=\mu_I^{>k}$ with $nm-n-d^k/k!$ eigenvalues.
    \end{itemize}
\end{thm}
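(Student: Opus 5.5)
We would start from Proposition~\ref{app:prop:poly_equiv}, which lets us replace $\vG$ by
\[
\mu_I^{>k}\vI_N+\mu_B^{>k}\vB_m+\sum_{p=0}^k\sum_{|\boldsymbol{j}|=p}\frac{\mu_p}{d^p\boldsymbol{j}!}\,\psi_p^{\boldsymbol{j}}(\vY)\psi_p^{\boldsymbol{j}}(\vY)^\top .
\]
We would then mimic the linear-algebra route of Sect.~\ref{sect:asymp_spec_G_linalg}, with each feature $\psi_p^{\boldsymbol{j}}(\vY)\in\mathbb{R}^N$ playing the role that $\vv_\lambda^\top\vY$ played there.

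The first step is to split each feature vector into its block-averaged part $\frac1m\vB_m\psi_p^{\boldsymbol{j}}(\vY)$ and its transverse part $(\vI_N-\frac1m\vB_m)\psi_p^{\boldsymbol{j}}(\vY)$. Their squared norms concentrate by the law of large numbers over the $n$ independent clusters. For the averaged part this gives
\[
\big\|\tfrac1m\vB_m\psi\big\|^2\simeq n\big(c_p^{\boldsymbol{j}}+(m-1)c_{X,p}^{\boldsymbol{j}}\big),
\]
using that two same-cluster copies have correlation $c_{X,p}^{\boldsymbol{j}}$. For the transverse part it gives $n(m-1)(c_p^{\boldsymbol{j}}-c_{X,p}^{\boldsymbol{j}})$.

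Cross overlaps between distinct $(p,\boldsymbol{j})$ are sums of $n$ centred terms, so the family is asymptotically orthogonal. Because the number of features $\sum_{p\le k}d^p/p!\asymp d^k$ is much smaller than $n$, we expect a Gram-matrix (restricted isometry) argument to control the mixing in operator norm. On this subspace, the low-degree part is then, up to negligible errors, a diagonal of size $\frac{\mu_p}{d^p\boldsymbol{j}!}\|\psi_p^{\boldsymbol{j}}(\vY)\|^2=\Theta(nm/d^p)$. For $p<k$ these eigenvalues dominate the $O(1)$ terms $\mu_I^{>k},\mu_B^{>k}$, which yields the sub-bulks of order $nm/d^p$ with eigenvectors $\psi_p^{\boldsymbol{j}}(\vY)$. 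The $\Theta(d^p/p!)$ count is just the number of multi-indices of degree $p$.

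For the top degree $p=k$ we would redo the $2\times2$ reduction of the linear case in the basis of the normalized averaged and transverse parts of $\psi_k^{\boldsymbol{j}}(\vY)$. On the averaged part $\vB_m$ acts as $m$, on the transverse part as $0$, and the rank-one feature term has coupling $\kappa=\Theta(n/d^k)\gg1$. Diagonalizing gives one large eigenvalue $\Theta(nm/d^k)$ along $\psi_k^{\boldsymbol{j}}(\vY)$, and a small eigenvalue along the direction orthogonal to the feature inside that plane. That orthogonal direction is exactly $\Lambda_\perp\frac1m\vB_m\psi-\Lambda_\parallel(\vI_N-\frac1m\vB_m)\psi$, with weights fixed by the two squared norms. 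Its eigenvalue is $\mu_I^{>k}$ plus $m\mu_B^{>k}$ times the averaged-weight fraction, namely
\[
\mu_I^{>k}+m\mu_B^{>k}\,\frac{(m-1)(c_k-c_{X,k})}{m\,c_k}=\mu_I^{>k}+\mu_B^{>k}(m-1)\big(1-c_{X,k}/c_k\big),
\]
which matches the statement. As a check, at $k=1$ with Gaussian data, $c_{X,1}/c_1=e^{-2t}\lambda/\lambda_t$ recovers Theorem~\ref{app:thm:Spectrum_gram_linear}.

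The remaining eigenvalues are obtained by counting orthogonal complements. The block-averaged subspace $\mathrm{Im}\,\vB_m$ has dimension $n$. Removing the $\Theta(d^k/k!)$ directions touched by features leaves $n-d^k/k!$ eigenvectors at $m\mu_B^{>k}+\mu_I^{>k}$. The transverse complement, of dimension $nm-n-d^k/k!$, sits at $\mu_I^{>k}$.

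We expect the main obstacle to be the orthogonality and non-mixing step for $p=k$. There the small memorization eigenvalue is $O(1)$, so any off-diagonal leakage of size $o(\kappa)$ but $\gg1$ between different $\boldsymbol{j}$, or between degrees, could move it. To control this we would first try a Schur-complement argument: project out the large-eigenvalue feature subspace exactly, so that the $O(1)$ eigenvalues are given by the compression of $\mu_I\vI+\mu_B\vB_m$ onto the orthogonal complement of $\mathrm{span}\{\psi(\vY)\}$. That compression depends only on the principal angles between $\mathrm{span}\{\psi(\vY)\}$ and $\mathrm{Im}\,\vB_m$, which concentrate by the same norm computations. This sidesteps the need for precise control of the $\kappa$-scale couplings.
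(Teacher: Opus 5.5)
Your proposal is correct and follows the paper's own proof. It uses the same split of each $\psi_k^{\boldsymbol{j}}(\vY)$ into $\frac1m\vB_m\psi_k^{\boldsymbol{j}}(\vY)$ and $(\vI_N-\frac1m\vB_m)\psi_k^{\boldsymbol{j}}(\vY)$, and the same $2\times2$ reduction with coupling $nm/d^k\gg1$: you write it in orthonormalized, symmetric form, while the paper uses trace and determinant in the non-orthonormal basis. It therefore reaches the same eigenvalue $\mu_I^{>k}+\mu_B^{>k}(m-1)\big(1-c_{X,k}^{\boldsymbol{j}}/c_k^{\boldsymbol{j}}\big)$, the same eigenvector and the same atom counts.

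Your Schur-complement fallback is a sound way to make rigorous the non-mixing that the paper only asserts: the $O(1)$ eigenvalues equal, up to $O(d^k/(nm))$, those of the compression of $\mu_I^{>k}\vI_N+\mu_B^{>k}\vB_m$ onto $\mathrm{span}\{\psi(\vY)\}^\perp$. Like the paper, it tacitly needs the same-cluster covariances $\mathbb{E}[\psi_p^{\boldsymbol{j}}(\vY)\psi_q^{\boldsymbol{l}}(\vY')]$ to vanish off the diagonal, which holds for Gaussian data by Mehler's formula. It also correctly places the block-averaged atom at $\mu_I^{>k}+m\mu_B^{>k}$, rather than at the $m\mu_B^{>k}$ written in the statement.
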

\begin{proof}
\textbf{Case $p<k$.} The rank of the term $\sum_{p=0}^{k-1}\sum_{\boldsymbol{j}}\frac{\mu_p}{d^p\boldsymbol{j}!}\psi_p^{\boldsymbol{j}}(\vY)\psi_p^{\boldsymbol{j}}(\vY)^\top$ is not extensive in the dimension of $\vG$, so this part can only generate spikes. Moreover the vectors $\psi_p^{\boldsymbol{j}}(\vY), \psi_q^{\boldsymbol{l}}(\vY)$ for $(p,\boldsymbol{j})\neq(q,\boldsymbol{l})$ are asymptotically orthogonal in the large-$n$ limit $\psi_p^{\boldsymbol{j}}(\vY)^\top\psi_q^{\boldsymbol{l}}(\vY)\sim nm \mathbb{E}_{\vy\sim P_t}[\psi_p^{\boldsymbol{j}}(\vy)\psi_q^{\boldsymbol{l}}(\vy)]=nm c_p^{\boldsymbol{j}}\delta_{p,q}\delta^{\boldsymbol{j},\boldsymbol{l}}$. Hence, this term will generate spikes in the spectrum with eigenvectors $\psi_p^{\boldsymbol{j}}(\vY)$ and eigenvalues $\mu_I^{>k}+\frac{c_p^{\boldsymbol{j}}\mu_pnm}{d^p\boldsymbol{j}!}+O(1)$. For each $p\in\{1,\dots, k-1\}$, there are $\Theta(d^p/p!)$ such eigenvectors. Because of the term $\mu_B^{>k}\vB_m$, the $2\times2$ reduction below applies to every degree $p\le k$: at each $p<k$ it also produces $\Theta(d^p/p!)$ eigenvalues $\mu_I^{>k}+\mu_B^{>k}(m-1)(1-c_{X,p}^{\boldsymbol{j}}/c_p^{\boldsymbol{j}})$, which are negligible in number compared with the $\Theta(d^k/k!)$ ones of degree $k$ and are removed from the atom counts at subleading order. The case $p=k$ is treated below.

\textbf{Case $p=k$.}
We now focus on
\begin{align}
    \vG=\mu^{>k}_I\vI_N+\mu_B^{>k}\vB_m+\sum_{\boldsymbol{j}}\frac{\mu_k}{d^k\boldsymbol{j}!}\psi_k^{\boldsymbol{j}}(\vY)\psi_k^{\boldsymbol{j}}(\vY)^\top
\end{align}
As in the linear case, we observe that in the asymptotic regime
\begin{align}
&\left[\sum_{\boldsymbol{j}}\frac{\mu_k}{d^k\boldsymbol{j}!}\psi_k^{\boldsymbol{j}}(\vY^{\nu\alpha})\psi_k^{\boldsymbol{j}}(\vY^{\nu'\alpha'})\right]\psi_k^{\boldsymbol{l}}(\vY^{\nu'\alpha'})\sim \frac{nm\mu_k}{d^k\boldsymbol{l}!}c_k^{\boldsymbol{l}}\psi_k^{\boldsymbol{l}}(\vY^{\nu\alpha})\\
&\left[\sum_{\boldsymbol{j}}\frac{\mu_k}{d^k\boldsymbol{j}!}\psi_k^{\boldsymbol{j}}(\vY^{\nu\alpha})\psi_k^{\boldsymbol{j}}(\vY^{\nu'\alpha'})\right]\frac{1}{m}\vB_m\psi_k^{\boldsymbol{l}}(\vY^{\nu'\alpha'})\sim \frac{nm\mu_k}{d^k\boldsymbol{l}!}\left(\frac{c_k^{\boldsymbol{l}}}{m}+\frac{m-1}{m}c_{X,k}^{\boldsymbol{l}}\right)\psi_k^{\boldsymbol{l}}(\vY^{\nu\alpha})
\end{align}
 where the sums over $(\nu',\alpha')$ are implicit, and we used that $c_{X,k}^{\boldsymbol{l}}$ is by definition the covariance of two noisy copies of the same clean sample. Up to the shift $\mu_I^{>k}$, the Gram matrix on the space spanned by $\{\frac{1}{m}\vB_m\psi_k^{\boldsymbol{j}}(\vY), (\vI_N-\frac{1}{m}\vB_m)\psi_k^{\boldsymbol{j}}(\vY)\}$ reads
\begin{align}
    &\begin{pmatrix}
        m\mu_B^{>k}+\frac{nm\mu_k}{d^k\boldsymbol{j}!}\left(\frac{c_k^{\boldsymbol{j}}}{m}+\frac{m-1}{m}c_{X,k}^{\boldsymbol{j}}\right)&\frac{nm\mu_k}{d^k\boldsymbol{j}!}\frac{m-1}{m}(c_k^{\boldsymbol{j}}-c_{X,k}^{\boldsymbol{j}})\\\frac{nm\mu_k}{d^k\boldsymbol{j}!}\left(\frac{c_k^{\boldsymbol{j}}}{m}+\frac{m-1}{m}c_{X,k}^{\boldsymbol{j}}\right)& \frac{nm\mu_k}{d^k\boldsymbol{j}!}\frac{m-1}{m}(c_k^{\boldsymbol{j}}-c_{X,k}^{\boldsymbol{j}})
    \end{pmatrix}\\
    &= \begin{pmatrix}
        m\mu_B^{>k}+\frac{nm}{d^k}\Lambda_\parallel^{\boldsymbol{j}}&\frac{nm}{d^k}\Lambda_\perp^{\boldsymbol{j}}\\\frac{nm}{d^k}\Lambda_\parallel^{\boldsymbol{j}}& \frac{nm}{d^k}\Lambda_\perp^{\boldsymbol{j}}
    \end{pmatrix}
\end{align}
Its trace is $m\mu_B^{>k}+\frac{nm}{d^k}(\Lambda_\parallel^{\boldsymbol{j}}+\Lambda_\perp^{\boldsymbol{j}})$ and its determinant $m\mu_B^{>k}\frac{nm}{d^k}\Lambda_\perp^{\boldsymbol{j}}$, with $\Lambda_\parallel^{\boldsymbol{j}}+\Lambda_\perp^{\boldsymbol{j}}=\mu_kc_k^{\boldsymbol{j}}/\boldsymbol{j}!$. For $nm/d^k\gg1$, diagonalizing it yields, at leading order, the eigenvector $\psi_k^{\boldsymbol{j}}(\vY)/\sqrt{nmc_k^{\boldsymbol{j}}}$ with eigenvalue
\begin{align}
  \mu_I^{>k}+\frac{nm\,\mu_kc_k^{\boldsymbol{j}}}{d^k\boldsymbol{j}!}+m\mu_B^{>k}\frac{\Lambda_\parallel^{\boldsymbol{j}}}{\Lambda_\parallel^{\boldsymbol{j}}+\Lambda_\perp^{\boldsymbol{j}}},
\end{align}
and the eigenvector $\Lambda_\perp^{\boldsymbol{j}} \frac{1}{m}\vB_m\psi_k^{\boldsymbol{j}} - \Lambda_\parallel^{\boldsymbol{j}} (\vI_N-\frac{1}{m}\vB_m) \psi_k^{\boldsymbol{j}}$ with eigenvalue
\begin{align}
  \mu_I^{>k}+m\mu_B^{>k} \frac{\Lambda_\perp^{\boldsymbol{j}}}{\Lambda_\perp^{\boldsymbol{j}} + \Lambda_\parallel^{\boldsymbol{j}}} = \mu_I^{>k}+\mu_B^{>k} (m-1) \left( 1 - \frac{c_{X,k}^{\boldsymbol{j}}}{c_k^{\boldsymbol{j}}} \right).
\end{align}
Each of these two bulks has $\Theta(d^k/k!)$ eigenvalues. As a check, at $k=1$ and for isotropic Gaussian data with $\sigma^2=1$, $\psi_1^{\boldsymbol{e}_i}(\vy)=y_i$, $c_1=\Gamma_t=1$ and $c_{X,1}=e^{-2t}$, so that the second eigenvalue is $\mu_I+\mu_B(m-1)\Delta_t$, the memorization bulk of Theorem~\ref{app:thm:Spectrum_gram_linear}.

\end{proof}
\subsection{Bias--variance decomposition}
\label{app:bv_decomposition}
We record here the decomposition used in Sect.~\ref{sect:biasvariance} of the main text and
in the two subsections below.

Write $\vy=e^{-t}\vx+\sqrt{\Delta_t}\,\vxi$ for a fresh test point, whose law is $P_t$, and
let $\vs_{\mathrm{exact}}(\vy)=\nabla_{\vy}\log P_t(\vy)$. Differentiating
$P_t(\vy)=\int\dd P_0(\vx)\,\mathcal{N}(\vy;e^{-t}\vx,\Delta_t\vI_d)$ under the integral sign
gives Tweedie's identity
\begin{align}
\label{eq:app_tweedie}
  \mathbb{E}\big[\vxi\,\big|\,\vy\big]=-\sqrt{\Delta_t}\,\vs_{\mathrm{exact}}(\vy).
\end{align}

Let $\vs_{\mathcal{D},\vtheta_0}$ be the learned score and $\langle\cdot\rangle$ the average
over the training set and the initialization, both independent of the test pair
$(\vx,\vxi)$. Adding and subtracting $\vs_{\mathrm{exact}}(\vy)$ in the test loss \eqref{eq:test_loss},
\begin{align}
  2\,\Ltest=\frac1d\,\mathbb{E}_{\vx,\vxi}\Big\langle\Big\lVert
  \big[\vs_{\mathcal{D},\vtheta_0}(\vy)-\vs_{\mathrm{exact}}(\vy)\big]
  +\Big[\vs_{\mathrm{exact}}(\vy)+\tfrac{\vxi}{\sqrt{\Delta_t}}\Big]
  \Big\rVert^2\Big\rangle .
\end{align}
The cross term vanishes: conditionally on $\vy$ the first bracket is a function of $\vy$ and
of $(\mathcal{D},\vtheta_0)$ alone, while the second has conditional mean zero by
\eqref{eq:app_tweedie}. Splitting what remains of the first bracket into its mean and its
fluctuation produces the two terms \eqref{eq:bias_variance} of the main text, and
\begin{align}
\label{eq:app_bv_decomposition}
  \Ltest=\frac12\big(C_t+\mathcal B^2+\mathcal V\big),
  \qquad
  C_t=\frac1d\,\mathbb{E}_{\vx,\vxi}\Big\lVert \vs_{\mathrm{exact}}(\vy)+\frac{\vxi}{\sqrt{\Delta_t}}\Big\rVert^2 .
\end{align}
The constant $C_t$ is the Bayes error of the denoising problem at noise level $t$ and it is the
value taken by $\Ltest$ at the exact score. It does not depend on the estimator. Using \eqref{eq:app_tweedie} once more,
$C_t=\frac{1}{d\Delta_t}\mathbb{E}\,\Tr\mathrm{Cov}(\vxi\mid\vy)$, that is
\begin{align}
\label{eq:app_Ct_fisher}
  C_t=\frac{1}{\Delta_t}-\frac1d\,\mathbb{E}_{\vy\sim P_t}\big\lVert\vs_{\mathrm{exact}}(\vy)\big\rVert^2.
\end{align}

Let us specialize to the case $P_0=\mathcal{N}(0,\vSigma)$. One has that
$P_t=\mathcal{N}(0,\vSigma_t)$ with $\vSigma_t=e^{-2t}\vSigma+\Delta_t\vI_d$, hence
$\vs_{\mathrm{exact}}(\vy)=-\vSigma_t^{-1}\vy$ and
$\mathbb{E}_{\vy\sim P_t}\lVert\vs_{\mathrm{exact}}(\vy)\rVert^2=\Tr(\vSigma_t^{-1})$.
Equation \eqref{eq:app_Ct_fisher} becomes
\begin{align}
\label{eq:app_Ct_gaussian}
  C_t=\frac1{\Delta_t}-\frac{\Tr(\vSigma_t^{-1})}{d}
     =\frac{e^{-2t}}{d}\sum_{\lambda\in\mathrm{Spec}(\vSigma)}\frac{\lambda}{\Delta_t\,\lambda_t},
  \qquad \lambda_t=e^{-2t}\lambda+\Delta_t .
\end{align}
For isotropic data, $\vSigma=\sigma^2\vI_d$, every $\lambda_t$ equals $\Gamma_t$ and
\begin{equation}
\label{eq:app_Ct_isotropic}
  C_t=\frac{\sigma^2e^{-2t}}{\Delta_t\,\Gamma_t}.
\end{equation}

\subsection{Derivation of Theorem~\ref{thm:bias_variance}: fixed-point equations in the kernel limit}
\label{app:bv_fixed_point}

We derive here the closed-form equations of Theorem~\ref{thm:bias_variance}, used in
Sect.~\ref{sect:biasvariance}. We set $\psi_n=n/d$, and the data are isotropic,
$\vSigma=\sigma^2\vI_d$. We first state the equations at $\sigma^2=1$, where
$\Gamma_t=e^{-2t}+\Delta_t=1$; the general case reduces exactly to this one (see the remark
after Theorem~\ref{app:thm:bias_variance}). We write $\gamma$ for the ridge and
$\mu_1=f'(0)$, $\mu_I=f(1)-f(e^{-2t})-\Delta_tf'(0)$,
$\mu_B=f(e^{-2t})-f(0)-e^{-2t}f'(0)$, $\bar\mu=\mu_I+m\mu_B$ for the kernel constants of
Theorem~\ref{thm:Spectrum_gram_linear}.

\begin{thm}[Bias and variance of the kernel ridge predictor]
\label{app:thm:bias_variance}
Under the assumptions above, let $w_1=me^{-2t}+\Delta_t$, $c_1=\bar\mu$, $w_2=\Delta_t$ and
$c_2=\mu_I$. The bias and the variance \eqref{eq:bias_variance} are
\begin{align}\label{eq:app_observables}
  \mathcal B^2=(b-1)^2,
  \qquad
  \mathcal V=\frac{\mu_1T_4}{\Delta_t}-b^2 ,
\end{align}
where $b$ and $T_4$ are given as follows.
\begin{enumerate}[leftmargin=1.4em,itemsep=2pt,topsep=2pt]
\item \emph{Ridge $\tilde\gamma>0$.} Let $\bar q=1/\tilde\gamma$ and
\begin{align}\label{eq:app_fp_ridge}
  L_a(\bar r)=1+\frac{\mu_1w_a\bar r+c_a\bar q}{\psi_n m},
  \quad
  g(\bar r)=\frac1{L_1}+\frac{m-1}{L_2},
  \quad
  R(\bar r)=\frac{\mu_1}{m}\left(\frac{w_1}{L_1}+\frac{(m-1)w_2}{L_2}\right).
\end{align}
Then
\begin{align}\label{eq:app_fp_obs}
  b=\frac{\mu_1\,\bar r\,g(\bar r)}{m},
  \qquad
  T_4=\frac{-\Phi'(\bar r)}{\bar r^{-2}+R'(\bar r)},
  \qquad
  \Phi(\bar r)=\frac{g}{m}-\frac{\mu_1\Delta_t}{m^2}\,\bar r\,g^2 ,
\end{align}
where $\bar r$ solves $1/\bar r=\tilde\gamma+R(\bar r)$.
\item \emph{Ridgeless limit $\tilde\gamma\to0^+$.} For any $\varrho>0$, let $q=1/\varrho$ and
\begin{align}\label{eq:app_fp_ridgeless}
  e_a(r)=\mu_1 r\,w_a+q\,c_a,
  \quad
  g(r)=\frac1{e_1}+\frac{m-1}{e_2},
  \quad
  R(r)=\mu_1\psi_n\left(\frac{w_1}{e_1}+\frac{(m-1)w_2}{e_2}\right).
\end{align}
Then \eqref{eq:app_fp_obs} holds with $\bar r$ replaced by the solution $r$ of
$1/r=\varrho+R(r)$, $b=\mu_1\psi_n\,r\,g(r)$ and $\Phi(r)=\psi_n g-\mu_1\Delta_t\psi_n^2\,r\,g^2$.
The result does not depend on $\varrho$.
\end{enumerate}
\end{thm}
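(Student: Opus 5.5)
We replace $\vG$ by its linear equivalent $\vG_{\mathrm{lin}}$ of Theorem~\ref{app:thm:lin_gram_empirical}. We also replace the test kernel by $K_{\mathrm{lin}}(\vy,\vY)=\mu_0\bm 1_N+\mu_1\vY\tran\vy/d$. For the tanh kernel $\mu_0=0$, and in general the $\bm 1_N$ direction is an isolated outlier that decouples. The predictor is then linear in $\vy$: $\vs_\gamma(\vy)=-\frac{\mu_1}{d\sqrt{\Delta_t}}\,\vW\tran\vy$ with $\vW=\vY(\vG_{\mathrm{lin}}+\gamma\vI_N)^{-1}\vxi\tran$, a $d\times d$ matrix.

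For isotropic Gaussian data at $\sigma^2=1$ we have $\vs_{\mathrm{exact}}(\vy)=-\vy$. By rotational invariance, $\langle\vW\rangle$ is a multiple of $\vI_d$. Hence $\langle\vs_\gamma(\vy)\rangle=-b\,\vy$, with $b=\frac{\mu_1}{d^2\sqrt{\Delta_t}}\langle\Tr \vW\rangle$, and $\mathcal B^2=(b-1)^2$ after $\mathbb E_\vy\lVert\vy\rVert^2/d=1$. Likewise $\mathcal V=\frac{\mu_1^2}{d^3\Delta_t}\langle\Tr\vW\tran\vW\rangle-b^2$, and we define $T_4$ so that this reads $\mu_1T_4/\Delta_t-b^2$. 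The proof therefore reduces to computing two traces of rational functions of $(\vY,\vxi)$:
\[
\frac1{d^2}\Tr\,\vY(\vG_{\mathrm{lin}}+\gamma)^{-1}\vxi\tran,\qquad
\frac1{d^3}\Tr\,\vxi(\vG_{\mathrm{lin}}+\gamma)^{-1}\vY\tran\vY(\vG_{\mathrm{lin}}+\gamma)^{-1}\vxi\tran .
\]
Concentration of both traces is taken as in the RS discussion of Sect.~\ref{sect:proof_replica}.

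The strategy for both traces is the replica/Gaussian-integral computation of Sect.~\ref{sect:proof_replica}, extended with a source term. We write $\vY=e^{-t}\vX\otimes\bm1_m\tran+\sqrt{\Delta_t}\vxi$ and use the orthogonal decomposition of $\mathbb R^N$ into the block-constant subspace $U$ and its complement. On $U$, the noise enters with weight $w_1=me^{-2t}+\Delta_t$ (after rescaling by $m$) and $\mu_B\vB_m$ adds $c_1=\bar\mu$. On $U^\perp$, only the noise $\vxi_\perp$ enters, with $w_2=\Delta_t$, $c_2=\mu_I$ and multiplicity $m-1$.

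For $b$, we add a linear coupling $\epsilon\,\Tr(\vY\vA\vxi\tran)$ to the action, with $\vA=(\vG+\gamma)^{-1}$ represented by the Gaussian field $\vphi$, and differentiate at $\epsilon=0$. Since $\vY$ and $\vxi$ are jointly Gaussian, the $\vY$-average is again a log-determinant. The order parameters are those of the Stieltjes computation, now evaluated at $z=-\gamma$. After rescaling, they reduce to $\bar q=1/\tilde\gamma$ and a single $\bar r$ solving $1/\bar r=\tilde\gamma+R(\bar r)$. The factors $L_a$ are the per-sector propagators. The expression $b=\mu_1\bar r g(\bar r)/m$ then follows from the linear response of the $\vxi$-correlated part. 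The covariance of $\vY$ with $\vxi$ is $\sqrt{\Delta_t}$ in both sectors, which produces the weighted sum $g=1/L_1+(m-1)/L_2$.

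For $T_4$, which is quadratic in the resolvent, we use the standard trick of writing it as a derivative of a first-order quantity with respect to a perturbation. We insert $\eta\,\vY\tran\vY/d$ into the matrix being inverted, i.e.\ shift $\mu_1$ in the resolvent only, and differentiate a suitably chosen trace $\Phi$ at $\eta=0$. Implicit differentiation of the fixed point $1/\bar r=\tilde\gamma+R(\bar r)$ then produces the denominator $\bar r^{-2}+R'(\bar r)$. The numerator is $-\Phi'(\bar r)$, where $\Phi=g/m-\mu_1\Delta_t\bar r g^2/m^2$ collects the direct and the $\vxi$-self-correlation contributions.

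The ridgeless limit is obtained by rescaling $\bar r=\psi_n m\, r$ and sending $\tilde\gamma\to0$ with $q=1/\varrho$ kept as a gauge variable. The atoms at $0$ (after shift) make $\bar q$ diverge, and $\varrho$-independence follows because the atoms carry no weight in $K(\vy,\vY)$, by Proposition~\ref{app:prop:eigenvectors_gram_linear}. Finally, the case $\sigma^2\neq1$ reduces to $\sigma^2=1$ by rescaling $\vy$ by $\sqrt{\Gamma_t}$ and redefining $f$, as in Appendix~\ref{app:OU_vs_RF}.

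The main obstacle is the second-order term $T_4$. The derivative trick must be set up so that the perturbation acts only on the resolvent and not on the outer $\vY\tran\vY$. The correlation between $\vY$ and the target $\vxi$ must also be tracked consistently in both sectors. I would first check the resulting formulas against direct Monte Carlo at $d=256$, as done in Fig.~\ref{fig:bias_variance_m_1_m_4}, before cleaning up the algebra.
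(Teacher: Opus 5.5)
\textbf{Verdict: genuine gap.} The decisive steps of item 1 are asserted rather than derived, and the ridgeless step (item 2) fails as written.

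\textbf{What matches the paper.} Your reduction coincides with it: the linear equivalent, the predictor $-\vA\vy$ with $\vA\propto\vxi\vR\vY^\top$, and $\langle\vA\rangle=b\vI_d$ by $O(d)$ invariance. These give $\mathcal B^2=(b-1)^2$ and $\mathcal V=\mu_1T_4/\Delta_t-b^2$. So does obtaining $T_4$ as a derivative with respect to the coefficient of $\vY^\top\vY/d$ inside the resolvent.

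\textbf{Item 1: the first-order traces are asserted.} Your engine differs: replica with sources rather than a deterministic equivalent. The sketch then asserts the two formulas that make up the theorem (``$b=\mu_1\bar r g/m$ then follows from the linear response'') instead of deriving them. The paper writes $\vG+\gamma\vI_N=\vE+\frac{\mu_1}{d}\sum_{i=1}^d\vy_i\vy_i^\top$, with $\vE=(\mu_I+\gamma)\vI_N+\mu_B\vB_m$ and i.i.d.\ rows $\vy_i\sim\mathcal N(0,\vC)$, $\vC=e^{-2t}\vB_m+\Delta_t\vI_N$. This gives the deterministic equivalent $\bar{\vR}=(\vE+\mu_1\vC/(1+\delta))^{-1}$, which in the two joint eigenspaces of $\vC$ and $\vE$ is exactly $1/\bar r=\tilde\gamma+R(\bar r)$ with $\bar r=\bar q/(1+\delta)$.

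The idea missing from your plan is to condition the target on the inputs row by row: $\vxi=\sqrt{\Delta_t}\,\vY\vC^{-1}+\vH$, with $\vH$ independent of $\vY$.
\begin{itemize}
\item For $b$: the $\vH$ part drops out, and leave-one-out gives $\vy_i^\top\vC^{-1}\vR\vy_i\simeq\Tr\bar{\vR}/(1+\delta)$, hence $b=\mu_1\bar r g/m$.
\item For $T_4$: the trace to differentiate is $S(\varepsilon)=d^{-2}\Tr(\vxi\vR_\varepsilon\vxi^\top)$, with $\vR_\varepsilon=(\vE+(1-\varepsilon)\frac{\mu_1}{d}\vY^\top\vY)^{-1}$. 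There is no outer $\vY^\top\vY$ to protect, so the obstacle you single out does not arise.
\item The same decomposition plus Sherman--Morrison gives $S=\frac1d\Tr\bar{\vR}_\varepsilon-\frac{\Delta_t\mu_\varepsilon}{d^2}(\Tr\bar{\vR}_\varepsilon)^2/(1+\delta_\varepsilon)=\bar q\,\Phi$. The $\Tr(\vC^{-1}\bar{\vR})$ terms cancel between the $\vH$ and $\vY$ parts.
\item Since $\mu_\varepsilon$ enters $g$, $R$ and $\Phi$ and not only the fixed point, differentiate in $\rho=\mu_\varepsilon\bar r_\varepsilon$. Everything depends on $\rho$ alone except the term $1/(\mu_\varepsilon\bar q)$ of the fixed point.
\end{itemize}
Also, $\bar q=1/\tilde\gamma$ is not something the Stieltjes order parameters ``reduce to''. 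It is just the inverse rescaled ridge, and the only order parameter is $\bar r$ (equivalently $\delta$). The Stieltjes transform at $z=-\gamma$ is $\bar q\,g/(m^2\psi_n)$.

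\textbf{Item 2: the ridgeless scaling is wrong.} With $\bar r=\psi_n m\,r$ and $r$ finite, $L_a\ge1+c_a\bar q/(\psi_n m)\to\infty$ as $\bar q\to\infty$ (since $c_a>0$). Then $R(\bar r)\to0$, and the fixed point $1/\bar r=\tilde\gamma+R(\bar r)$ forces $\bar r\to\infty$, contradicting your ansatz.

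The correct scaling is $\bar r=\bar q\,r$, so $r=1/(1+\delta)$ in the gauge $\varrho=1$. This stays finite because $\vG\succeq\mu_I\vI_N$ is invertible. Then at $q=1$ one has $L_a\simeq\bar q\,e_a/(\psi_n m)$, $g=(\psi_n m/\bar q)g_0$, $R=R_0/\bar q$ and $\Phi=\Phi_0/\bar q$. This recovers $1/r=1+R_0(r)$, $b=\mu_1\psi_n r g_0$ and $T_4=-\Phi_0'/(r^{-2}+R_0')$.

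\textbf{The role of $\bar q$ and the $\varrho$-independence.} $\bar q$ diverges simply because $\tilde\gamma\to0$; atoms play no role in this. The $\varrho$-independence is not a consequence of the atoms carrying no weight in $K(\vy,\vY)$: the atom values $c_1=\bar\mu$ and $c_2=\mu_I$ enter $e_a$ explicitly and shape the answer. It is an exact homogeneity. If $r_1$ solves the $\varrho=1$ equation, then $r=r_1/\varrho$ solves the general one, with $e_a\mapsto e_a/\varrho$, $g\mapsto\varrho g$, $R\mapsto\varrho R$ and $\Phi\mapsto\varrho\Phi$. This leaves $b$ and $T_4$ unchanged.
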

Case 1 is Theorem~\ref{thm:bias_variance} of the main text. The index $a=1$ refers to the
average of the $m$ noisy copies of a clean sample and $a=2$ to the $m-1$ directions
orthogonal to it; $c_1$ and $c_2$ are the two atoms of
Theorem~\ref{app:thm:Spectrum_gram_linear}. For $m=1$ only the terms with $a=1$ remain.

\paragraph*{Reduction of $\vSigma=\sigma^2\vI_d$ to $\sigma^2=1$.} Let
$\Gamma_t=\sigma^2e^{-2t}+\Delta_t$ and define $t'$ by $e^{-2t'}=\sigma^2e^{-2t}/\Gamma_t$, so
that $\Delta_{t'}=1-e^{-2t'}=\Delta_t/\Gamma_t$. With $\vx'=\vx/\sigma$, which has covariance
$\vI_d$,
\begin{align}
  \vY^{\nu\alpha}=e^{-t}\vx^\nu+\sqrt{\Delta_t}\,\vxi^{\nu\alpha}
  =\sqrt{\Gamma_t}\,\big(e^{-t'}\vx'^\nu+\sqrt{\Delta_{t'}}\,\vxi^{\nu\alpha}\big)
  =\sqrt{\Gamma_t}\,\vY'^{\nu\alpha},
\end{align}
where $\vY'$ is the noised data of the $\sigma^2=1$ problem at time $t'$, with the
\emph{same} noises $\vxi$. Hence:
\begin{itemize}[leftmargin=1.2em]
  \item the Gram matrix is unchanged, $f(\vY^\top\vY/d)=\tilde f(\vY'^\top\vY'/d)$ with
  $\tilde f(u)=f(\Gamma_tu)$, and so are the ridge $\gamma$ and the training time $\tau$;
  \item the target $-\vxi/\sqrt{\Delta_t}=\Gamma_t^{-1/2}\big(-\vxi/\sqrt{\Delta_{t'}}\big)$
  and the exact score $\vs_{\mathrm{exact}}(\vy)=-\vy/\Gamma_t=\Gamma_t^{-1/2}\,\vs'_{\mathrm{exact}}(\vy')$
  are both rescaled by $\Gamma_t^{-1/2}$, so that the predictor, its bias and its variance
  are those of the $\sigma^2=1$ problem, with $\mathcal B^2$ and $\mathcal V$ multiplied by
  $1/\Gamma_t$.
\end{itemize}
Therefore, independently of the Gaussian-equivalence argument (which also extends to
general $\sigma^2$ with modified coefficients, see the end of the proof of
Proposition~\ref{app:prop:gep}), Theorem~\ref{app:thm:bias_variance} holds for
$\vSigma=\sigma^2\vI_d$ with $t$ replaced by $t'$, $f$ by $\tilde f$, and $\mathcal B^2,\mathcal V$ divided by $\Gamma_t$. In the
original variables the kernel constants of $\tilde f$ are exactly those of
Theorem~\ref{app:thm:lin_gram_empirical} at general $\sigma^2$,
\begin{gather}
  \tilde\mu_I=f(\Gamma_t)-f(\sigma^2e^{-2t})-\Delta_tf'(0)=\mu_I,\nonumber\\
  \tilde\mu_B=f(\sigma^2e^{-2t})-f(0)-\sigma^2e^{-2t}f'(0)=\mu_B,\qquad
  \tilde\mu_1=\Gamma_tf'(0),
\end{gather}
and the channel weights are $\tilde w_1=(m\sigma^2e^{-2t}+\Delta_t)/\Gamma_t$ and
$\tilde w_2=\Delta_t/\Gamma_t$. As a check, the same map sends the irreducible term
$C'_{t'}=e^{-2t'}/\Delta_{t'}$ to $C'_{t'}/\Gamma_t=\sigma^2e^{-2t}/(\Delta_t\Gamma_t)$, which is
\eqref{eq:app_Ct_isotropic}.

\begin{proof}
We take the data isotropic and Gaussian, $\vx^\nu\sim\mathcal N(0,\vI_d)$, and treat the
case $\sigma^2=1$; general $\sigma^2$ follows from the reduction below. The proof has four
steps: we reduce the predictor to a linear map, use isotropy to fix its mean, and compute
its first two moments with deterministic equivalents for the resolvent.

\paragraph*{Step 1: linear structure.} By Theorem~\ref{app:thm:lin_gram_empirical} (or,
equivalently, the Gaussian equivalence of Proposition~\ref{app:prop:gep}), the Gram
matrix can be replaced by its linear equivalent, and the test kernel by
$K(\vy,\vY)\simeq\mu_1\vY^\top\vy/d$ for a test point $\vy\sim P_t$ independent of the
training set. The rank-one terms $\mu_0\bm1_N\bm1_N^\top$ and $\mu_0\bm1_N$ change
the normalized traces below by $O(1/d)$ and are dropped. With
$\vE=(\mu_I+\gamma)\vI_N+\mu_B\vB_m$ and $\vR=(\vG+\gamma\vI_N)^{-1}$,
\begin{gather}
  \vG+\gamma\vI_N=\vE+\frac{\mu_1}{d}\vY^\top\vY,\nonumber\\
  \vs(\vy)=K(\vy,\vY)^\top\vR\Big(-\frac{\vxi}{\sqrt{\Delta_t}}\Big)^{\!\top}\!=-\vA\vy,
  \qquad
  \vA=\frac{\mu_1}{d\sqrt{\Delta_t}}\,\vxi\,\vR\,\vY^\top\in\mathbb R^{d\times d},
\end{gather}
where $\vY,\vxi\in\mathbb R^{d\times N}$ stack the noised training points and their noises
as columns. The predictor is thus linear in the test point. The $d$ rows $\vy_i\in\mathbb R^N$ of $\vY$ are
i.i.d.\ $\mathcal N(0,\vC)$ with
\begin{align}
  \vC=e^{-2t}\vB_m+\Delta_t\vI_N ,
\end{align}
since $\mathbb E[\vY_i^{\nu\alpha}\vY_i^{\mu\beta}]=\delta_{\nu\mu}(e^{-2t}+\Delta_t\delta_{\alpha\beta})$.
The matrices $\vC$ and $\vE$ are diagonal in the same orthonormal basis: on the $n$
block-average directions $\bm1_m/\sqrt m$ they take the values $w_1=me^{-2t}+\Delta_t$ and
$c_1+\gamma=\bar\mu+\gamma$, and on the $n(m-1)$ directions transverse to them the values
$w_2=\Delta_t$ and $c_2+\gamma=\mu_I+\gamma$. This is the decoupling into the two channels
of the statement.

\paragraph*{Step 2: isotropy.} The joint law of $(\vY,\vxi)$ is invariant under
$(\vY,\vxi)\mapsto(\vO\vY,\vO\vxi)$ for any orthogonal $\vO\in O(d)$, under which
$\vR$ is unchanged and $\vA\mapsto\vO\vA\vO^\top$. Hence $\langle\vA\rangle$ commutes
with every rotation, and
\begin{align}
  \langle\vA\rangle=b\,\vI_d,\qquad b=\frac1d\big\langle\Tr\vA\big\rangle .
\end{align}
This is the isotropic form of the mean predictor, obtained here without an ansatz. Since
$\vs_{\mathrm{exact}}(\vy)=-\vy$ and $\mathbb E[\vy\vy^\top]=\vI_d$, the definitions
\eqref{eq:bias_variance} give
\begin{align}
  \mathcal B^2=(b-1)^2,\qquad
  \mathcal V=\frac1d\big\langle\Tr\vA\vA^\top\big\rangle-b^2
  =\frac{\mu_1T_4}{\Delta_t}-b^2,\qquad
  T_4=\frac{\mu_1}{d^3}\big\langle\Tr\big(\vxi\vR\vY^\top\vY\vR\vxi^\top\big)\big\rangle,
\end{align}
which is \eqref{eq:app_observables}, with $b$ and $T_4$ the first two moments of $\vA$. It
remains to compute them.

\paragraph*{Step 3: the resolvent.} Since $\vG+\gamma\vI_N$ is a deterministic matrix plus a
sample covariance of $d$ i.i.d.\ Gaussian vectors in $\mathbb R^N$, with $N/d=m\psi_n$ fixed,
its resolvent has the deterministic equivalent
\citep{SILVERSTEIN1995,louart2018}
\begin{align}
\label{eq:app_det_equiv}
  \vR\simeq\bar\vR=\Big(\vE+\frac{\mu_1\vC}{1+\delta}\Big)^{-1},
  \qquad
  \delta=\frac{\mu_1}{d}\Tr\big(\vC\bar\vR\big),
\end{align}
in the sense that $\frac1d\Tr(\vM\vR)-\frac1d\Tr(\vM\bar\vR)\to0$ for deterministic $\vM$ of
bounded norm; we also use the leave-one-out identity
$\vR\vy_i=\vR_{-i}\vy_i/(1+\frac{\mu_1}{d}\vy_i^\top\vR_{-i}\vy_i)$, with $\vR_{-i}$ the
resolvent without row $i$ and $\frac{\mu_1}{d}\vy_i^\top\vR_{-i}\vy_i\to\delta$. In the joint
eigenbasis of $\vC$ and $\vE$, $\bar\vR$ is diagonal with entries
$(c_a+\gamma+\mu_1w_a/(1+\delta))^{-1}$. Writing $\gamma=m\psi_n\tilde\gamma=m\psi_n/\bar q$ and
setting
\begin{align}
  \bar r=\frac{\bar q}{1+\delta},
\end{align}
these entries are $\bar q/(m\psi_nL_a(\bar r))$ with $L_a$ as in \eqref{eq:app_fp_ridge}.
Hence $\frac1d\Tr\bar\vR=\bar qg(\bar r)/m$, and the equation for $\delta$ becomes
$\delta=\mu_1\psi_n\big(w_1\bar R_1+(m-1)w_2\bar R_2\big)=\bar q\,R(\bar r)$, i.e.\
$1/\bar r=(1+\delta)/\bar q=\tilde\gamma+R(\bar r)$: this is the fixed point of the statement.

\paragraph*{Step 4: the two moments.} Row by row, $(\vy_i,\vxi_i)$ is a centered Gaussian
pair with $\mathbb E[\vxi_i\vy_i^\top]=\sqrt{\Delta_t}\,\vI_N$, so that
\begin{align}
  \vxi=\sqrt{\Delta_t}\,\vY\vC^{-1}+\vH ,
\end{align}
with the rows of $\vH$ i.i.d.\ $\mathcal N(0,\vI_N-\Delta_t\vC^{-1})$ and independent of
$\vY$, hence of $\vR$.
\begin{itemize}[leftmargin=1.2em]
  \item \emph{Bias.} The $\vH$ part of $\Tr(\vxi\vR\vY^\top)=\sum_i\vxi_i^\top\vR\vy_i$ has
  zero mean, and by the leave-one-out identity
  $\vy_i^\top\vC^{-1}\vR\vy_i\simeq\Tr(\vC^{-1}\vC\bar\vR)/(1+\delta)=\Tr\bar\vR/(1+\delta)$. Thus
  \begin{align}
    b=\frac{\mu_1}{d^2\sqrt{\Delta_t}}\,d\sqrt{\Delta_t}\,\frac{\Tr\bar\vR}{1+\delta}
    =\frac{\mu_1\bar q\,g}{m(1+\delta)}=\frac{\mu_1\bar r\,g(\bar r)}{m}.
  \end{align}
  \item \emph{Second moment.} Let $\vR_\varepsilon=\big(\vE+(1-\varepsilon)\frac{\mu_1}{d}\vY^\top\vY\big)^{-1}$.
  Since $\partial_\varepsilon\vR_\varepsilon|_0=\vR\,\frac{\mu_1}{d}\vY^\top\vY\,\vR$,
  \begin{align}
    T_4=\partial_\varepsilon S(\varepsilon)\big|_{\varepsilon=0},\qquad
    S(\varepsilon)=\frac1{d^2}\big\langle\Tr\big(\vxi\vR_\varepsilon\vxi^\top\big)\big\rangle ,
  \end{align}
  which only involves first-order traces. Using the decomposition of $\vxi$ and, for the
  $\vY$ part, Sherman--Morrison,
  $\vy_i^\top\vC^{-1}\vR_\varepsilon\vC^{-1}\vy_i\simeq\Tr(\vC^{-1}\bar\vR_\varepsilon)
  -\frac{\mu_\varepsilon}{d}(\Tr\bar\vR_\varepsilon)^2/(1+\delta_\varepsilon)$ with
  $\mu_\varepsilon=(1-\varepsilon)\mu_1$, the terms in $\Tr(\vC^{-1}\bar\vR_\varepsilon)$ cancel
  between the two parts and
  \begin{align}
    S(\varepsilon)=\frac1d\Tr\bar\vR_\varepsilon-\frac{\Delta_t\mu_\varepsilon}{d^2}\,
    \frac{(\Tr\bar\vR_\varepsilon)^2}{1+\delta_\varepsilon}
    =\bar q\left(\frac{g}{m}-\frac{\mu_\varepsilon\Delta_t}{m^2}\,\bar r_\varepsilon\,g^2\right),
  \end{align}
  where $\bar r_\varepsilon$ and $g$ are those of Step 3 with $\mu_1$ replaced by
  $\mu_\varepsilon$. All these quantities depend on $\mu_\varepsilon$ and $\bar r_\varepsilon$ only through
  $\rho=\mu_\varepsilon\bar r_\varepsilon$, except for the fixed point, which reads
  $1/\rho=1/(\mu_\varepsilon\bar q)+h(\rho)/m$ with
  $h=w_1/L_1+(m-1)w_2/L_2$. Differentiating it at $\varepsilon=0$ gives
  $\partial_\varepsilon\rho=-\big[\mu_1\bar q\,(\rho^{-2}+h'(\rho)/m)\big]^{-1}$, and
  $S=\bar q\,\tilde\Phi(\rho)$ with $\tilde\Phi(\rho)=g/m-\Delta_t\rho g^2/m^2$, so that
  \begin{align}
    T_4=\bar q\,\tilde\Phi'(\rho)\,\partial_\varepsilon\rho
    =-\frac{\tilde\Phi'(\rho)}{\mu_1\big(\rho^{-2}+h'(\rho)/m\big)}
    =-\frac{\Phi'(\bar r)}{\bar r^{-2}+R'(\bar r)},
  \end{align}
  the last form following from $\Phi(\bar r)=\tilde\Phi(\mu_1\bar r)$ and
  $R(\bar r)=\frac{\mu_1}{m}h(\mu_1\bar r)$. This is \eqref{eq:app_fp_obs}.
\end{itemize}

\paragraph*{Test loss.} Combining with \eqref{eq:app_bv_decomposition} and
$C_t=e^{-2t}/\Delta_t$,
\begin{align}\label{eq:app_Ltest_middle}
  \Ltest=\frac12\big(C_t+\mathcal B^2+\mathcal V\big)
  =\frac{1-2\Delta_t\,b+\mu_1T_4}{2\Delta_t}.
\end{align}

\paragraph*{Ridgeless limit.} As $\tilde\gamma\to0^+$, $\bar q\to\infty$ and
$\bar r=\bar q\,r$ with $r$ finite. Then $L_a\simeq\bar q\,e_a/(\psi_nm)$ with
$e_a=\mu_1rw_a+c_a$, so that $g=(\psi_nm/\bar q)\,g_0$, $R=R_0/\bar q$ and
$\Phi=\Phi_0/\bar q$, where $g_0,R_0,\Phi_0$ are the functions of
\eqref{eq:app_fp_ridgeless} at $q=1$. The fixed point becomes $1/r=1+R_0(r)$, and
$b=\mu_1\psi_nrg_0$ and $T_4=-\Phi_0'/(r^{-2}+R_0')$, which is item 2 of the statement with
$\varrho=q=1$. The equations are invariant under $r\mapsto r/\varrho$, $q=1/\varrho$, which
is why the result does not depend on the gauge $\varrho$.

\paragraph*{Derivatives.}
The derivatives entering $T_4$ follow from the chain rule. In the
ridge parametrization \eqref{eq:app_fp_ridge}, with $\partial L_a=\mu_1w_a/(\psi_nm)$,
\begin{align}
  g'&=-\frac{\partial L_1}{L_1^2}-\frac{(m-1)\partial L_2}{L_2^2},
  \quad
  R'=-\frac{\mu_1}{m}\left[\frac{w_1\partial L_1}{L_1^2}+\frac{(m-1)w_2\partial L_2}{L_2^2}\right],\\
  &\Phi'=\frac{g'}{m}-\frac{\mu_1\Delta_t}{m^2}\big(g^2+2\bar rgg'\big),
\end{align}
and in the ridgeless parametrization \eqref{eq:app_fp_ridgeless},
\begin{align}
  g'&=-\frac{\mu_1w_1}{e_1^2}-\frac{(m-1)\mu_1w_2}{e_2^2},
  \quad
  R'=-\mu_1^2\psi_n\left[\frac{w_1^2}{e_1^2}+\frac{(m-1)w_2^2}{e_2^2}\right],\\
  &\Phi'=\psi_ng'-\mu_1\Delta_t\psi_n^2\big(g^2+2rgg'\big).
\end{align}

\end{proof}

\subsection{Numerical check of Theorem~\ref{thm:bias_variance}}
\label{app:subsec:krr_bv}

This subsection documents the finite-dimensional experiments performed for kernel ridge regression.

\begin{figure}[t]
    \centering
    \includegraphics[width=0.49\linewidth]{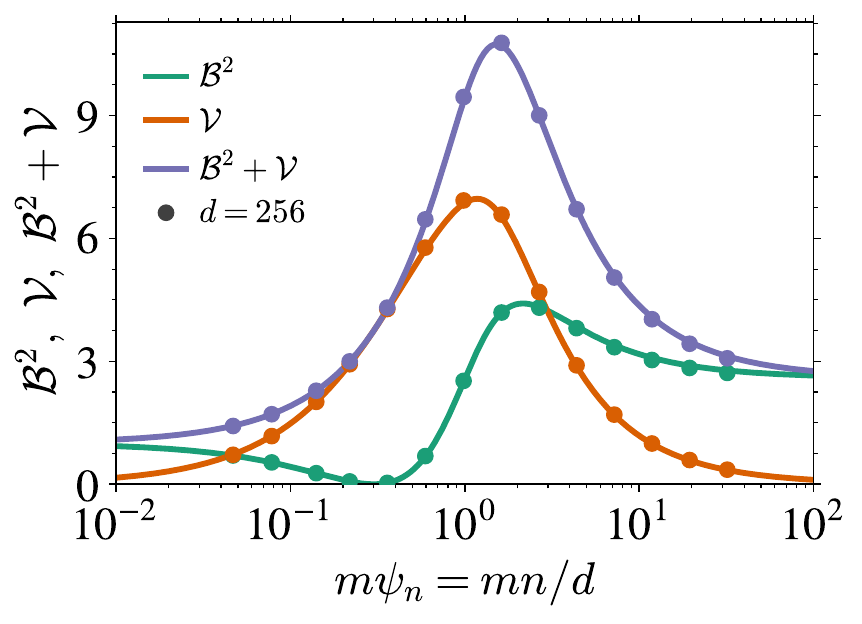}\hfill
    \includegraphics[width=0.49\linewidth]{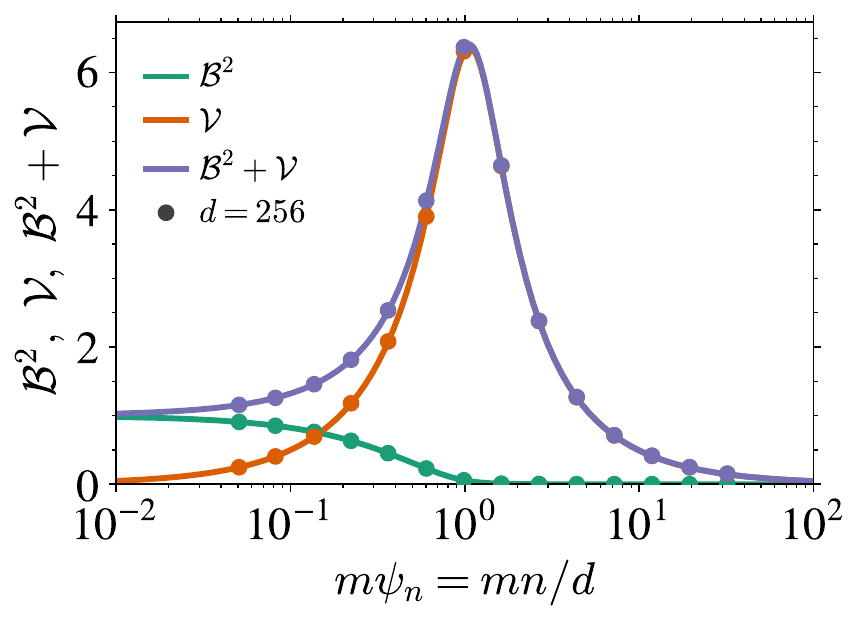}
    \caption{\textbf{Bias and variance against the sample complexity.} $\mathcal B^2$,
    $\mathcal V$ and $\mathcal B^2+\mathcal V$ against $m\psi_n=mn/d$ for the ridgeless
    estimator $\tilde\gamma\to0^+$, at $t=0.1$, $\sigma^2=1$, with the kernel of
    Figure~\ref{fig:bias_variance_m_1_m_4}, for $m=4$ (\textit{left}) and $m=1$
    (\textit{right}). Solid lines are the equations of Theorem~\ref{thm:bias_variance};
    markers are experiments at $d=256$.}
    \label{fig:bv_budget}
\end{figure}

\paragraph*{Setup.} Data are isotropic, $\vx^\nu\sim\mathcal N(0,\vI_d)$ and
$\vY^{\nu\alpha}=e^{-t}\vx^\nu+\sqrt{\Delta_t}\vxi^{\nu\alpha}$ with $m$ noise realizations per
data point, so that $\vSigma_t=\vI_d$ and the exact score is $\vs_{\mathrm{exact}}(\vy)=-\vy$. The
kernel is the dual activation $f(u)=\mathbb E[\sigma(z_1)\sigma(z_2)]$ of
$\sigma=\tanh$ at correlation $u$. Since $f$ is a kernel only for $|u|\le1$, and at finite $d$ the norms
$\|\vY^{\nu\alpha}\|^2/d$ fluctuate around their limit $\Gamma_t=1$, the rows are
rescaled to the sphere of radius $\sqrt d$ in the kernel argument only.

\paragraph*{Bias and variance.} With $n_D$ independent training sets and $n_z$ fresh test
points, writing $\bar{\vs}(\vy)$ for the average of the $n_D$ predictors,
\begin{align}\label{eq:app_bv_estimators}
  \widehat{\mathcal V}(\vy)&=\frac{1}{d\,(n_D-1)}\sum_{i=1}^{n_D}
    \big\|\vs_i(\vy)-\bar{\vs}(\vy)\big\|^2,
  &
  \widehat{\mathcal B^2}(\vy)&=\frac{1}{d}\big\|\vs_{\mathrm{exact}}(\vy)-\bar{\vs}(\vy)\big\|^2
    -\frac{\widehat{\mathcal V}(\vy)}{n_D},
\end{align}
both averaged over the $n_z$ test points. The $1/(n_D-1)$ and the $-\widehat{\mathcal
V}/n_D$ subtraction make the two estimators unbiased at any $n_D$, so that neither the
variance nor the bias is inflated by the finite number of realizations.

\paragraph*{Parameters.} The sweeps at fixed $\psi_n$ of
Figure~\ref{fig:bias_variance_m_1_m_4} (left panel and inset) use
$\psi_n=8$, $n_z=512$ test points and $d\in\{64,128,256\}$, with $n_D=16$ training sets for
$m=1$ and $n_D=16,12,8$ for $m=4$ at $d=64,128,256$ respectively, while the sweep over the budget $m\psi_n$ of
Figure~\ref{fig:bv_budget} uses $n_z=1024$ and $n_D$ between $32$
at the smallest $N$ and $8$ at the largest.

\paragraph*{Error bars.} The error bars drawn on every finite-$d$ point are a delete-one
jackknife over the $n_D$ realizations of the dataset.

\paragraph*{Early stopping versus ridging.} Fixing a training time $\tau$ in
\eqref{eq:estimation_score_DSM} imposes a spectral cutoff $\lambda_c\sim nm/(d\tau)$,
whereas a ridge cuts at $\lambda\sim\gamma$. Equating the two cutoffs gives
$\gamma_{\mathrm{KRR}}=m\psi_n/\tau$, that is $\tilde\gamma=1/\tau$ on the scale of
Theorem~\ref{thm:bias_variance} --- the same calibration under which
\citet{ali2019_earlystopping} bound the risk of gradient flow by that of
ridge regression along the whole path, for least squares with no assumption on the
design. Figure~\ref{fig:tau_gamma} tests that identification by
drawing the analytical $\mathcal L_{\mathrm{test}}$, $\mathcal B^2$ and $\mathcal V$ at
$\tilde\gamma=1/\tau$ against the measured gradient flow at $\tau$. The ridge looks therefore like a faithful proxy for early stopping for the values of the
observables and for the scaling of the timescales.

\begin{figure}[t]
    \centering
    \includegraphics[width=\linewidth]{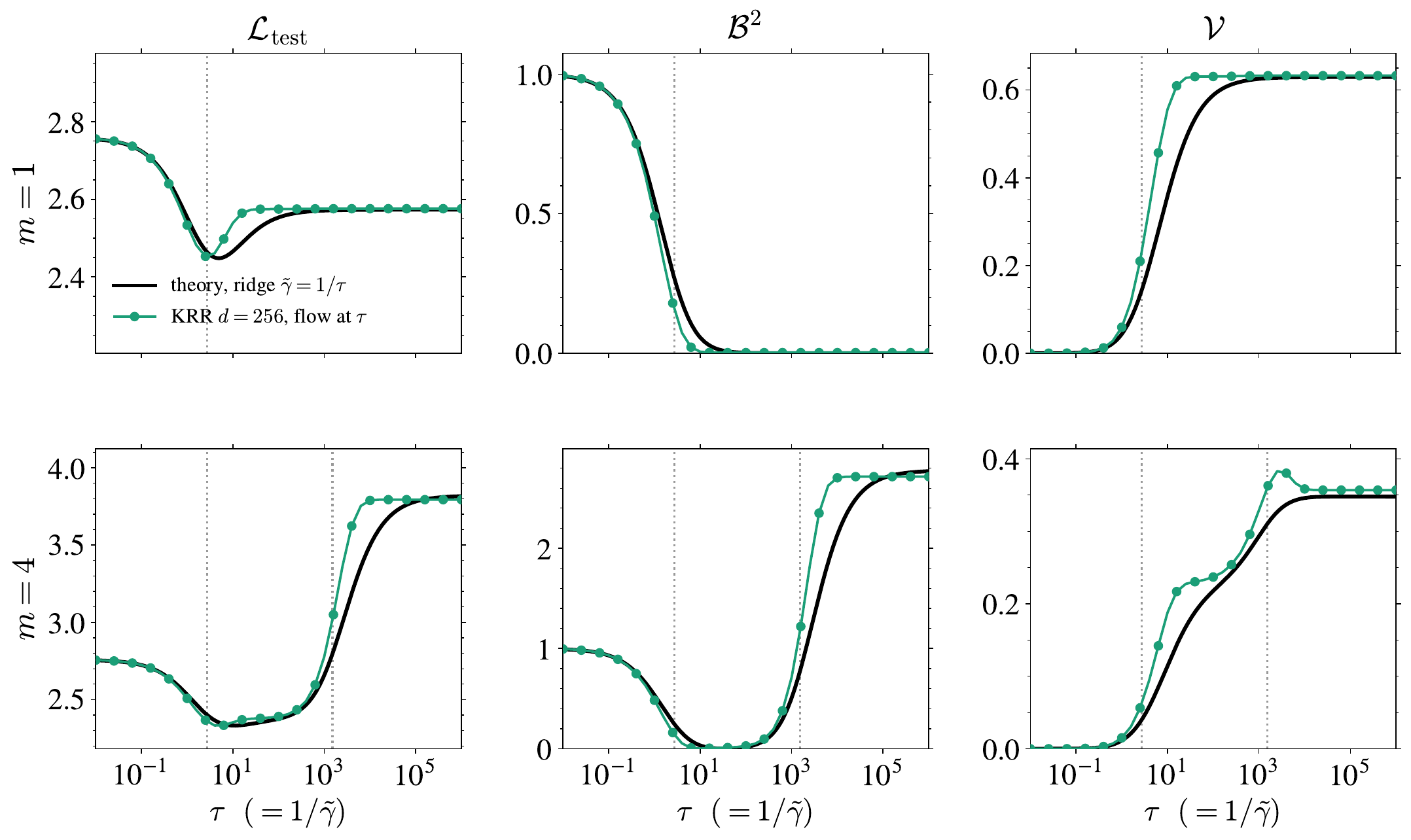}
    \caption{\textbf{The ridge as a proxy for early stopping.}
    $\mathcal L_{\mathrm{test}}$, $\mathcal B^2$ and $\mathcal V$ against the training
    time $\tau$, at $m=1$ (\textit{top}) and $m=4$ (\textit{bottom}). Black: the
    theory of Theorem~\ref{thm:bias_variance} evaluated at
    $\tilde\gamma=1/\tau$. Green: kernel gradient flow of \eqref{eq:estimation_score_DSM}
    measured at finite $d=256$, $\psi_n=8$, averaged over $16$ ($m=1$) and $8$ ($m=4$)
    realizations of the dataset and $512$ test points. Dotted verticals are
    $\tau_{\mathrm{gen}}$ and $\tau_{\mathrm{mem}}$. Panels are scaled independently: at $m=1$ the bias vanishes whereas at
    $m=4$ it saturates at $\Theta(1)$.}
    \label{fig:tau_gamma}
\end{figure}

\subsection{Small-$t$ behavior of the bias and the variance}
\label{app:bv_smallt}
Theorem~\ref{thm:bias_variance} holds at a fixed noise level, and most figures are drawn
at $t=0.1$. Since memorization is a small-$t$ phenomenon, we record here how
$\mathcal B^2$ and $\mathcal V$ behave as $t\to0$ (Fig.~\ref{fig:bv_tscaling}).

\begin{figure}[t]
  \centering
  \includegraphics[width=0.5\linewidth]{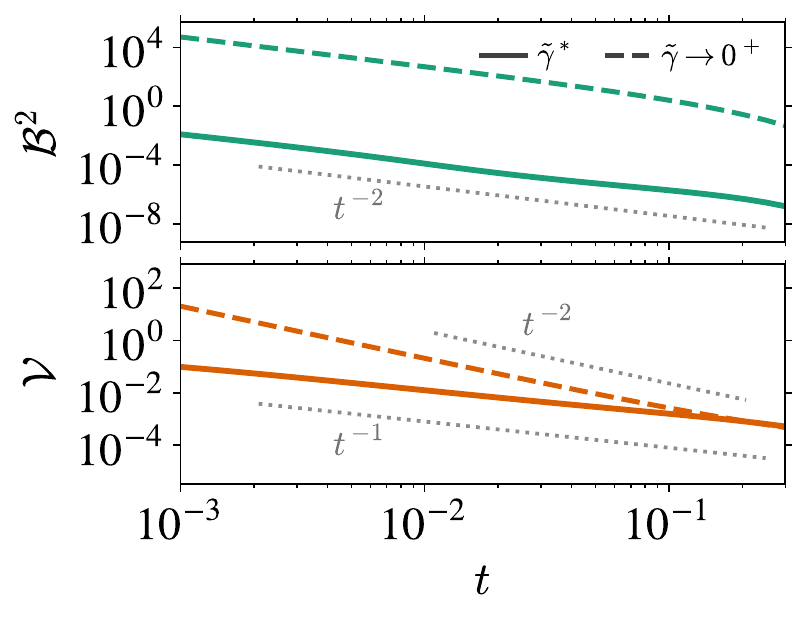}
  \caption{\textbf{Scalings with $t$.} $\mathcal B^2$ (\emph{top}) and $\mathcal V$
  (\emph{bottom}) against $t$ at $\psi_n=10^3$ and $m=4$, where $1/(m\psi_n)\ll t\ll1$, for
  the optimally ridged estimator $\tilde\gamma^*$ (solid) and the ridgeless one
  $\tilde\gamma\to0^+$ (dashed), with the kernel of
  Fig.~\ref{fig:bias_variance_m_1_m_4}. Lines are the equations of
  Theorem~\ref{thm:bias_variance}.}
  \label{fig:bv_tscaling}
\end{figure}

\paragraph*{What collapses at small $t$.} Expanding the constants of
Theorem~\ref{app:thm:lin_gram_empirical} as $\Delta_t\to0$,
\begin{align}
\label{eq:app_smallt_constants}
  \mu_I=\Delta_t\,\iota+O(\Delta_t^2),\quad \iota=f'(\sigma^2)-f'(0),
  \qquad
  \mu_B\to\mu_B^0=f(\sigma^2)-f(0)-\sigma^2f'(0),
\end{align}
so that, at $\sigma^2=1$ where $\lambda_t=1$, the memorization eigenvalue of
Theorem~\ref{app:thm:Spectrum_gram_linear} becomes
\begin{align}
\label{eq:app_smallt_lambda2}
  \lambda_{\mathrm{mem}}=\mu_I+\frac{\mu_B(m-1)\Delta_t}{\lambda_t}
  \;\simeq\;\Delta_t\big(\iota+(m-1)\mu_B^0\big) .
\end{align}
It vanishes linearly in $\Delta_t$, while the generalization eigenvalue
$\lambda_{\mathrm{gen}}=\mu_1\psi_nm\lambda_t$ stays $\Theta(\psi_n m)$: the two bulks separate further
and further as the noise level drops. The memorization modes become both arbitrarily slow to
learn and arbitrarily ill-conditioned for the ridgeless predictor, and the scalings below are
the quantitative form of that second statement.

\begin{result}[Small-$t$ scalings of the bias and the variance]
\label{res:bv_smallt}
Let $\sigma^2=1$ and $\Delta_t=1-e^{-2t}\simeq2t$, and write $s=m\psi_n\Delta_t$.
\begin{enumerate}[leftmargin=1.4em,itemsep=2pt,topsep=2pt]
  \item \emph{Ridgeless, $\Delta_t\to0$ at fixed $\psi_n$ and $m$.}
  \begin{itemize}[leftmargin=1.2em]
    \item $m=1$: $\mathcal B^2$ is independent of $t$, and
          $\psi_n\Delta_t\mathcal V\to1$ as $\psi_n\to\infty$.
    \item $m>1$: $\mathcal B^2\simeq R_{\psi_n}^2/\Delta_t^2$ and
          $\mathcal V\simeq c_{m,\psi_n}/(\psi_n\Delta_t^2)$, where the prefactors depend on
          $\psi_n$ only through $O(1/\psi_n)$ corrections, with limits
          \begin{align}
          \label{eq:app_smallt_R}
            \begin{aligned}
            R&=\lim_{\psi_n\to\infty}R_{\psi_n}=\frac{(m-1)\mu_B^0}{\iota+(m-1)\mu_B^0}\in(0,1),\\
            c_m&=\lim_{\psi_n\to\infty}c_{m,\psi_n}=\frac{m(m-1)\,\iota^2(\mu_B^0)^2}{\big(\iota+(m-1)\mu_B^0\big)^4}.
            \end{aligned}
          \end{align}
  \end{itemize}
  \item \emph{Optimal ridge, $\Delta_t\to0$ and $\psi_n\to\infty$ at fixed $s$, for every $m$:}
        $\tilde\gamma^*=\mu_1/s$ and
        \begin{align}
        \label{eq:app_smallt_collapse}
          \mathcal B^2=\frac1{(1+s)^2},\qquad
          \mathcal V=\frac{s}{(1+s)^2},\qquad
          \mathcal B^2+\mathcal V=\frac1{1+s} .
        \end{align}
  \item \emph{The window of Result~\ref{res:bv_scalings}.} For $m>1$ and
        $1/(m\psi_n)\ll t\ll1$, i.e.\ $s\gg1$ with $\Delta_t\ll1$, items 1 and 2 give
        $\mathcal B^2\simeq R^2/\Delta_t^2=\Theta(t^{-2})$ and
        $\mathcal V\simeq c_m/(\psi_n\Delta_t^2)=\Theta(\psi_n^{-1}t^{-2})$ without ridge, and
        $\mathcal B^2\simeq s^{-2}=\Theta(\psi_n^{-2}t^{-2})$ and
        $\mathcal V\simeq s^{-1}=\Theta(\psi_n^{-1}t^{-1})$ at the optimal ridge. In the
        opposite regime $s\ll1$ the optimally ridged estimator returns $\mathcal B^2\to1$ and
        $\mathcal V\simeq s$: it predicts almost nothing.
\end{enumerate}
\end{result}

\paragraph*{The ridgeless prefactors.} Take the ridgeless parametrization
\eqref{eq:app_fp_ridgeless} with $\varrho=1$ (the result does not depend on this choice) and
let $\Delta_t\to0$ with \eqref{eq:app_smallt_constants}. Then $w_1\to m$, $c_1\to m\mu_B^0$,
$w_2=\Delta_t$ and $c_2=\Delta_t\iota$, so that $e_1\to m(\mu_1r+\mu_B^0)$ and
$e_2=\Delta_t(\mu_1r+\iota)$. The order parameter equation becomes independent of $t$,
\begin{align}
\label{eq:app_smallt_r}
  \frac1r=1+\mu_1\psi_n\left(\frac1{\mu_1r+\mu_B^0}+\frac{m-1}{\mu_1r+\iota}\right),
\end{align}
whereas $g=G/\Delta_t+O(1)$ with $G=(m-1)/(\mu_1r+\iota)$ is dominated by the transverse
directions. Hence $b=\mu_1\psi_nrg$ diverges as $R_{\psi_n}/\Delta_t$ with
$R_{\psi_n}=\mu_1\psi_n rG$. As $\psi_n\to\infty$ the solution of \eqref{eq:app_smallt_r} is
$r\simeq\iota\mu_B^0/[\mu_1\psi_n(\iota+(m-1)\mu_B^0)]$, which gives the value of $R$ in
\eqref{eq:app_smallt_R}; $\mathcal B^2=(b-1)^2\simeq R_{\psi_n}^2/\Delta_t^2$. For the variance,
$\Phi=\psi_n g-\mu_1\Delta_t\psi_n^2rg^2$ is also $\Theta(1/\Delta_t)$, so that both terms of
$\mathcal V=\mu_1T_4/\Delta_t-b^2$ are $\Theta(\Delta_t^{-2})$. Their leading parts in
$1/\psi_n$ are both equal to $G^2/(\mu_B^{0\,-1}+(m-1)\iota^{-1})^2$ and cancel, and the next order,
obtained by expanding \eqref{eq:app_smallt_r} to $O(\psi_n^{-2})$, gives
$\psi_n\Delta_t^2\mathcal V\to c_m$. At $m=1$ there are no transverse directions:
$w_1=1$ and $c_1=\mu_I+\mu_B=f(1)-f(0)-f'(0)$ are exactly independent of $t$, hence so is
$\mathcal B^2$, and only the $1/\Delta_t$ in front of $T_4$ survives, which gives
$\psi_n\Delta_t\mathcal V\to1$.

\paragraph*{Why the ridged branch is a function of $s$ alone.} The last item of
Result~\ref{res:bv_smallt} is not only read off the numerics: it follows from
Theorem~\ref{app:thm:bias_variance} in four steps, in the limit $\Delta_t\to0$ at fixed
$s=m\psi_n\Delta_t$, hence $\psi_n m=s/\Delta_t\to\infty$.

First, the two channel denominators saturate. With $w_1=me^{-2t}+\Delta_t\to m$,
$c_1=\bar\mu\to m\mu_*^2$, $w_2=\Delta_t$ and $c_2=\mu_I=\Delta_t\iota+O(\Delta_t^2)$,
equation \eqref{eq:app_fp_ridge} gives
\begin{align}
\label{eq:app_smallt_L}
  L_1-1=\frac{m\Delta_t}{s}\Big(\mu_1\bar r+\frac{\mu_*^2}{\tilde\gamma}\Big),
  \qquad
  L_2-1=\frac{\Delta_t^2}{s}\Big(\mu_1\bar r+\frac{\iota}{\tilde\gamma}\Big),
\end{align}
so $L_1,L_2\to1$ as soon as $\tilde\gamma\gg\mu_*^2/\psi_n$. Second, the fixed point
becomes explicit: $g\to m$ and $R\to\mu_1$, so $1/\bar r=\tilde\gamma+R(\bar r)$ collapses
to $\bar r=(\tilde\gamma+\mu_1)^{-1}$ and
\begin{align}
\label{eq:app_smallt_b}
  b=\frac{\mu_1\bar rg}{m}=\frac{\mu_1}{\tilde\gamma+\mu_1},
  \qquad
  \mathcal B^2=(b-1)^2=\frac{\tilde\gamma^2}{(\tilde\gamma+\mu_1)^2} .
\end{align}
Third, the trace. From $L_1'=\mu_1m\Delta_t/s$ and $L_2'=\mu_1\Delta_t^2/s$ one gets
$g'\to-\mu_1m\Delta_t/s$, hence $R'\to0$, $\bar r^{-2}+R'\to(\tilde\gamma+\mu_1)^2$ and
$\Phi'\to-\mu_1\Delta_t(1+1/s)$, so that $\mu_1T_4/\Delta_t=\mu_1^2(1+1/s)/(\tilde\gamma+\mu_1)^2$
and
\begin{align}
\label{eq:app_smallt_V}
  \mathcal V=\frac{\mu_1T_4}{\Delta_t}-b^2=\frac{\mu_1^2}{s\,(\tilde\gamma+\mu_1)^2} .
\end{align}
Equations \eqref{eq:app_smallt_b} and \eqref{eq:app_smallt_V} are stronger than
Result~\ref{res:bv_smallt}: the whole ridge curve, not only its minimum, depends on
$\psi_n$, $m$ and $t$ through $s$ alone, and $\mathcal B^2$ does not depend on $s$ at all.
Fourth, the optimum. The excess risk is
\begin{align}
\label{eq:app_smallt_risk}
  \mathcal B^2+\mathcal V=\frac{\tilde\gamma^2+\mu_1^2/s}{(\tilde\gamma+\mu_1)^2},
\end{align}
whose derivative is proportional to $(\tilde\gamma+\mu_1)(\mu_1\tilde\gamma-\mu_1^2/s)$ and
vanishes at $\tilde\gamma^*=\mu_1/s$; substituting gives
\eqref{eq:app_smallt_collapse}.

\subsection{Link between kernels and random features}
\label{app:extra_results}
\label{app:kernel_rf}

Any dot-product kernel $K(\vx,\vy)=f(\frac{\vx^\top\vy}{d})$ can be written as the infinite-width limit of a random-features model \citep{Rahimi_2007} $K(\vx,\vy)=\underset{p\rightarrow\infty}{\operatorname{lim}}\frac{1}{p}\sum_{i=1}^p \sigma(\frac{\vw_i^\top \vx}{\sqrt{d}})\sigma(\frac{\vw_i^\top \vy}{\sqrt{d}})$ with $\vW\sim\mathcal{N}(0,\vI_{p\times d})$ and an activation $\sigma$ that depends on $f$. We show below that the large-$p$ limit of the Gram matrix of the random-features network on our dataset yields the same linear equivalent as Theorem~\ref{app:thm:lin_gram_empirical}. The difference with \citet{bonnaire2025diffusionmodelsdontmemorize} is that they take the number of noises $m\to\infty$ first, whereas we keep $m=O(1)$.
The results of this section are presented for $\vSigma=\vI_d$ but extend to arbitrary covariance. We first recall the choice of activation realizing a given dot-product kernel.
\begin{lemma}
   For any dot-product kernel $K(\vx,\vy)=f(\frac{\vx^\top\vy}{d})$, there exists an activation function $\sigma$ such that
\begin{align}
    K(\vx,\vy)=\mathbb{E}_{\vw\sim\mathcal{N}(0,\vI_{d})}[\sigma(\frac{\vw^\top\vx}{\sqrt{d}})\sigma(\frac{\vw^\top\vy}{\sqrt{d}})]
\end{align}

\end{lemma}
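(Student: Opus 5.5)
The plan is to build $\sigma$ from its Hermite expansion and match it to the Taylor coefficients of $f$ through Mehler's formula. First I fix the normalization. As in the kernels of Appendix~\ref{app:kernels_used}, I take inputs on the sphere of radius $\sqrt d$, i.e.\ $\lVert\vx\rVert^2=\lVert\vy\rVert^2=d$; for the noisy data this holds up to $O(d^{-1/2})$ by Lemma~\ref{app:lem:overlaps}. Then, for $\vw\sim\mathcal N(0,\vI_d)$, the pair $(u,v)=(\vw^\top\vx/\sqrt d,\ \vw^\top\vy/\sqrt d)$ is a centered Gaussian vector with unit variances and correlation $\rho=\vx^\top\vy/d\in[-1,1]$. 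The right-hand side of the lemma is therefore $\mathbb E[\sigma(u)\sigma(v)]$, which depends on $(\vx,\vy)$ only through $\rho$. This reduces the lemma to a one-dimensional identity: find $\sigma\in L^2(\mathcal N(0,1))$ with $\mathbb E_\rho[\sigma(u)\sigma(v)]=f(\rho)$ for all $|\rho|\le1$.

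Second, I expand $\sigma=\sum_{k\ge0}b_k\,\mathrm{He}_k$ in the probabilists' Hermite polynomials. Mehler's identity $\mathbb E_\rho[\mathrm{He}_j(u)\mathrm{He}_k(v)]=\delta_{jk}\,k!\,\rho^k$ then gives
\begin{align*}
\mathbb E_\rho[\sigma(u)\sigma(v)]=\sum_{k\ge0}k!\,b_k^2\,\rho^k .
\end{align*}
Writing $f(\rho)=\sum_k c_k\rho^k$, it suffices to choose $b_k=\sqrt{c_k/k!}$. The series then defines an element of $L^2$ exactly when $\sum_k k!\,b_k^2=\sum_k c_k=f(1)<\infty$. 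In that case the Hermite series converges in $L^2$, and the exchange of sum and expectation is justified by Cauchy--Schwarz together with $|\rho|\le1$.

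The main obstacle is ensuring that every $c_k\ge0$, so that the square roots are real, and that the power series of $f$ converges up to $|\rho|=1$. I would address this with Schoenberg's theorem: a function $f$ for which $f(\vx^\top\vy/d)$ is positive semidefinite on spheres of every dimension $d$ is exactly one with $f(\rho)=\sum_k c_k\rho^k$, $c_k\ge0$ and $\sum_kc_k<\infty$. The lemma therefore needs ``dot-product kernel'' to mean positive definite uniformly in $d$. That is the natural reading here, because for NTKs and dual kernels the $c_k$ are explicitly squares of Hermite coefficients; an example is $\kappa_0,\kappa_1$ for ReLU. If $f$ is only required to be smooth, I would state this as an added hypothesis rather than try to prove it, since it is necessary. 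Finally, for inputs whose norms differ from $\sqrt d$, I would either apply the construction to the projected points $\hat\vy$ as in the experiments, or note that the resulting kernel differs from $f(\vx^\top\vy/d)$ only through $\lVert\vx\rVert^2/d-1=O(\sqrt{\log d/d})$. These discrepancies land in the diagonal and block-diagonal corrections $\vD,\vE$ already controlled in the proof of Theorem~\ref{app:thm:lin_gram_empirical}.
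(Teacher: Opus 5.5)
Your proof is correct and follows the paper's argument: expand $\sigma$ in Hermite polynomials, apply Mehler's formula, and choose $k!\,b_k^2$ equal to the Taylor coefficients of $f$. Your additions make explicit the hypotheses that the paper's proof and its positivity footnote leave unstated: inputs of norm $\sqrt d$ so that Mehler's formula applies with correlation $\vx^\top\vy/d$, the check $\sigma\in L^2$ through $\sum_k c_k=f(1)<\infty$, and Schoenberg's theorem to justify $c_k\ge 0$, which requires positive definiteness in every dimension rather than in a single fixed $d$.
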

\begin{proof}
    Assume that $f$ can be expanded as a Taylor series $f(u)=\sum_{k\ge0}a_k u^k$ with $a_k\ge0$\footnote{The positivity of the coefficients holds since we assumed that $K(\vx,\vy)$ is a positive-definite kernel.} and that $\sigma$ can be expanded on a basis of Hermite polynomials $\sigma(u)=\sum_{k\ge0}b_k H_k(u)$. By Mehler's formula \citep{kibble1945},
    \begin{align}
        \mathbb{E}_{\vw}[\sigma(\frac{\vw^\top\vx}{\sqrt{d}})\sigma(\frac{\vw^\top\vy}{\sqrt{d}})]=\sum_{k\ge0}b_k^2k! \left(\frac{\vx^\top\vy}{d}\right)^k,
    \end{align}
    hence it suffices to choose the Hermite coefficients of $\sigma$ such that $\forall k\ge 0,\ b_k^2k!=a_k$.
\end{proof}
We then derive a Gaussian Equivalence Principle (GEP) \citep{Gerace_2020, goldt_2021} for the random-features model.
\begin{proposition}[Gaussian equivalence for the random-features model]
\label{app:prop:gep}
    Assume $\mu'_0=\mathbb{E}_{\mathcal{N}(0,1)}[\sigma(u)]=0$. In the proportional regime $p\asymp n\asymp d\gg 1$, the random-features model $\sigma(\frac{\vW\vY}{\sqrt{d}})$ is equivalent to the Gaussian
    \begin{align}
        \mu'_1\frac{\vW\vY}{\sqrt{d}}+\mu'_*\vOmega
    \end{align}
    with $\vOmega\in\mathbb{R}^{p\times mn}$ with Gaussian entries and covariance $\mathbb{E}[\vOmega_{i}^{\nu\alpha}\vOmega_{j}^{\nu'\alpha'}]=\delta_{ij}\delta^{\nu\nu'}\left(\delta^{\alpha\alpha'}+\kappa_t(1-\delta^{\alpha\alpha'}) \right)$ with $\mu'_1=\mathbb{E}[\sigma(z)z]$, ${\mu'}_*^2=\mathbb{E}[\sigma^2(z)]-{\mu'}_1^2$ and $\kappa_t=\frac{1}{{\mu'}_*^2}\mathbb{E}_{u,v,w}[(\sigma(e^{-t}u+\sqrt{\Delta_t}v)-\mu'_1e^{-t}u)(\sigma(e^{-t}u+\sqrt{\Delta_t}w)-\mu'_1e^{-t}u)].$
\end{proposition}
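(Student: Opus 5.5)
The plan is the standard Gaussian equivalence argument through a Hermite decomposition of the activation. It is adapted to the cluster structure of the inputs.

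Fix a row $\vw_i$ of $\vW$ and set $u_i^{\nu\alpha}=\vw_i^\top\vY^{\nu\alpha}/\sqrt d$. Conditionally on the training data, the vector $(u_i^{\nu\alpha})_{\nu\alpha}$ is centered Gaussian with covariance $q^{\nu\alpha,\mu\beta}=\vY^{\nu\alpha\top}\vY^{\mu\beta}/d$. The rows are i.i.d.\ across $i$, which gives the $\delta_{ij}$ factor.

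By Lemma~\ref{app:lem:overlaps}, the overlaps fall into three regimes. On the diagonal, $q^{aa}\to\Gamma_t=1$. Within a cluster, $q^{\nu\alpha,\nu\beta}\to e^{-2t}$. Across clusters, $|q^{\nu\alpha,\mu\beta}|=O(\sqrt{\log d/d})$.

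\textbf{Step 1: split the feature.} Write
\[
\sigma(u_i^{\nu\alpha})=\mu_1'u_i^{\nu\alpha}+\big[\sigma(u_i^{\nu\alpha})-\mu_1'u_i^{\nu\alpha}\big].
\]
The first term is exactly $\mu_1'(\vW\vY/\sqrt d)_i^{\nu\alpha}$. The second term, the residual $r_i^{\nu\alpha}$, has Hermite coefficients only of degree $0$ and degree $\ge2$. The degree-$0$ coefficient is $\mu_0'=0$ by assumption.

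\textbf{Step 2: covariance of the residual.} I will use Mehler's formula, as in the previous lemma, together with the overlap estimates.
\begin{itemize}
\item Across clusters, the covariance is $\sum_{k\ge2}b_k^2k!(q^{ab})^k=O(\log d/d)$. It is also uncorrelated with the linear part, because the degree-1 cross term is exactly cancelled.
\item On the diagonal, it equals ${\mu_*'}^2=\mathbb E[\sigma^2]-{\mu_1'}^2$.
\item Within a cluster, the two preactivations are jointly Gaussian with correlation $e^{-2t}$. They can be written as $e^{-t}u+\sqrt{\Delta_t}v$ and $e^{-t}u+\sqrt{\Delta_t}w$, with $u$ the shared projection of $\vx^\nu$. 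This representation gives exactly the expression $\kappa_t{\mu_*'}^2$ of the statement, with one caveat: the subtracted linear part must be chosen so that the residual is uncorrelated with $\mu_1'\vW\vY/\sqrt d$. I will check this decorrelation explicitly. If the orthogonalization against $u_i^{\nu\alpha}$ produces $\mu_1'(e^{-t}u+\sqrt{\Delta_t}v)$ rather than $\mu_1' e^{-t}u$, I will recompute $\kappa_t$ accordingly.
\end{itemize}
Together these give the block covariance $\delta^{\nu\nu'}(\delta^{\alpha\alpha'}+\kappa_t(1-\delta^{\alpha\alpha'}))$ stated for $\vOmega$.

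\textbf{Step 3: the equivalence proper.} Equivalence is understood in the sense needed downstream, namely equality of the limiting spectra of $\tfrac1p\Phi^\top\Phi$ and of the test and train errors of ridge regression on the features. I will follow \citet{goldt_2021} and \citet{Gerace_2020}.
\begin{itemize}
\item First, replace the residual by a Gaussian with the same first two moments, conditionally on $\vY$. This uses a Lindeberg-type interpolation over the $p$ i.i.d.\ rows, or equivalently the resolvent/leave-one-out expansion of Step 3 in the proof of Theorem~\ref{app:thm:bias_variance}.
\item Second, discard the cross-cluster covariance. Its matrix has entries $O(\log d/d)$ in dimension $N\asymp d$. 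The degree-2 piece must be bounded in operator norm, as in Step 3 of the proof of Theorem~\ref{app:thm:lin_gram_empirical}, so that its contribution is $o(1)$.
\end{itemize}

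\textbf{Main obstacle.} The genuinely hard step is the Lindeberg/universality argument for a non-i.i.d.\ residual. Its within-cluster correlations are $\Theta(1)$, not vanishing. These correlations affect only $m=O(1)$ coordinates per cluster, though. My first attempt will therefore treat each cluster's $m$-vector as one block-level random vector: I will swap blocks one at a time, and bound the third-order terms using the bounded block size together with the uniform overlap control of Lemma~\ref{app:lem:overlaps}.
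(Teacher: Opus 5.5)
Your plan is essentially the paper's proof: the same exact split $\sigma(z)=\mu'_1 z+\mu'_*\tilde\Omega(z)$, the same case-by-case covariance computation, and the Gaussian replacement itself delegated to the GEP literature, which the paper simply cites, so your block-Lindeberg sketch goes beyond what the paper proves. Your caveat resolves in favor of the statement, exactly as the paper checks: the residual must subtract the full $\mu'_1 z=\mu'_1(e^{-t}u+\sqrt{\Delta_t}\,v)$ to be decorrelated from the linear part, but the extra terms $\mu'_1\sqrt{\Delta_t}\,\mathbb{E}[(\sigma(z)-\mu'_1e^{-t}u)\,w]$, $\mu'_1\sqrt{\Delta_t}\,\mathbb{E}[(\sigma(z')-\mu'_1e^{-t}u)\,v]$ and ${\mu'_1}^2\Delta_t\,\mathbb{E}[vw]$ all vanish by independence of $u,v,w$, so $\kappa_t$ needs no recomputation (and in your cross-cluster step only the fluctuation of the degree-two piece is $o(1)$ in operator norm, while its mean is a rank-one $\Theta(1)$ spike along $\bm{1}_N$, the $f''(0)$ part of $\mu_0$, which is dropped because it is low rank, not because it is small).
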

\begin{proof}
   According to the Gaussian Equivalence Principle, $\sigma(\frac{\vW\vY}{\sqrt{d}})$ and  $\frac{\vW\vY}{\sqrt{d}}$ are jointly Gaussian; hence, to characterize their statistics, it suffices to compute their covariance.

For each $i \in [p]$, $\nu \in [n]$, $\alpha \in [m]$, define the
pre-activation
\[
    \vz_i^{\nu\alpha} = \frac{1}{\sqrt{d}}\sum_{k=1}^d \vW_{ik}\,\vY_k^{\nu\alpha}.
\]
Conditioned on $\vY$, the $\vz_i^{\nu\alpha}$ are jointly centered Gaussian
with
\[
    \mathbb{E}\left[\vz_i^{\nu\alpha}\,\vz_j^{\nu'\alpha'} \,\Big|\, \vY\right]
    \;=\; \delta_{ij}\,\frac{\vY^{\nu\alpha}\cdot \vY^{\nu'\alpha'}}{d}.
\]

Define the centered nonlinearity
\[
    \tilde\Omega(z) \;:=\; \frac{\sigma(z) - \mu'_1\,z}{\mu'_*}, \qquad
    \mu'_1 = \mathbb{E}_{z\sim\mathcal{N}(0,1)}[z\,\sigma(z)], \quad
    (\mu'_*)^2 = \mathbb{E}[\sigma(z)^2] - (\mu'_1)^2.
\]
Direct calculation, using the assumption $\mu'_0 = \mathbb{E}[\sigma] = 0$
and Stein's identity $\mathbb{E}[z\sigma(z)] = \mathbb{E}[\sigma'(z)]$ for
$z \sim \mathcal{N}(0,1)$, gives the orthonormality conditions
\[
    \mathbb{E}_{z}[\tilde\Omega(z)] = 0, \qquad
    \mathbb{E}_{z}[z\,\tilde\Omega(z)] = 0, \qquad
    \mathbb{E}_{z}[\tilde\Omega(z)^2] = 1.
\]
Setting
$\vOmega_i^{\nu\alpha} = \tilde\Omega(\vz_i^{\nu\alpha})$ yields
\begin{equation}
    \sigma\left(\tfrac{(\vW\vY)_i^{\nu\alpha}}{\sqrt{d}}\right)
    \;=\; \mu'_1\,\tfrac{(\vW\vY)_i^{\nu\alpha}}{\sqrt{d}}
       \;+\; \mu'_*\,\vOmega_i^{\nu\alpha}.
\end{equation}
This equality holds pointwise, not just asymptotically. We still need to show that $\vOmega$ behaves as a Gaussian matrix with a specific covariance
in the proportional regime; we verify the covariance below and invoke
the GEP for the equivalence as Gaussian objects.
Let us compute $\mathbb{E}[\vOmega_i^{\nu\alpha} \vOmega_j^{\nu'\alpha'}]$
case by case.

\textbf{Different rows of $\vW$ ($i \neq j$).} The rows $\vw_i, \vw_j$ are
independent, so $\vz_i^{\nu\alpha}, \vz_j^{\nu'\alpha'}$ are conditionally
independent and
$\mathbb{E}[\vOmega_i^{\nu\alpha}\vOmega_j^{\nu'\alpha'}]
= \mathbb{E}[\tilde\Omega(\vz_i^{\nu\alpha})]\mathbb{E}[\tilde\Omega(\vz_j^{\nu'\alpha'})] = 0$.
This produces the prefactor $\delta_{ij}$.

\textbf{Different clusters ($i = j$, $\nu \neq \nu'$).} The overlap
$\vY^{\nu\alpha}\cdot \vY^{\nu'\alpha'}/d$ vanishes asymptotically
(by independence of $\vx^\nu, \vx^{\nu'}$ and Gaussian concentration),
so $\vz_i^{\nu\alpha}$ and $\vz_i^{\nu'\alpha'}$ are asymptotically
independent. Then
$\mathbb{E}[\vOmega_i^{\nu\alpha}\vOmega_i^{\nu'\alpha'}]
\to \mathbb{E}[\tilde\Omega(z)]^2 = 0$, giving the prefactor
$\delta^{\nu\nu'}$.

\textbf{Diagonal ($i = j$, $\nu = \nu'$, $\alpha = \alpha'$).} Same
Gaussian, $\mathbb{E}[\tilde\Omega(z)^2] = 1$.

\textbf{Same cluster, different noise copies ($i = j$, $\nu = \nu'$,
$\alpha \neq \alpha'$).} The overlap concentrates around
\[
    \frac{\vY^{\nu\alpha}\cdot \vY^{\nu\alpha'}}{d} \;\xrightarrow{d\to\infty}\; e^{-2t}\sigma^2 \;=\; e^{-2t}
\]
(the last equality assumes $\sigma^2 = 1$; the general isotropic case is discussed at the
end of this proof). Conditional on
$\vY$, $(\vz_i^{\nu\alpha}, \vz_i^{\nu\alpha'})$ is a centered Gaussian
pair with unit marginal variance and correlation $\rho = e^{-2t}$. We
realize this pair as
\[
    \vz_i^{\nu\alpha} = e^{-t}u + \sqrt{\Delta_t}\,v, \qquad
    \vz_i^{\nu\alpha'} = e^{-t}u + \sqrt{\Delta_t}\,w,
\]
with $u, v, w \stackrel{\mathrm{i.i.d.}}{\sim} \mathcal{N}(0, 1)$. Then
\begin{equation}
\label{eq:gep:cov_step1}
    (\mu'_*)^2\,\mathbb{E}\left[\vOmega_i^{\nu\alpha}\vOmega_i^{\nu\alpha'}\right]
    \;=\; \mathbb{E}\left[(\sigma(z) - \mu'_1\,z)(\sigma(z') - \mu'_1\,z')\right].
\end{equation}
Expand the product and use Stein's lemma at correlation $\rho = e^{-2t}$,
which gives $\mathbb{E}[z'\,\sigma(z)] = \rho\,\mathbb{E}[\sigma'(z)] = \rho\,\mu'_1$
and similarly $\mathbb{E}[z\,\sigma(z')] = \rho\,\mu'_1$:
\begin{align*}
    \mathbb{E}[(\sigma(z) - \mu'_1 z)(\sigma(z') - \mu'_1 z')]
    &= \mathbb{E}[\sigma(z)\sigma(z')] \;-\; 2\,\mu'_1\,\rho\,\mu'_1 \;+\; (\mu'_1)^2\,\mathbb{E}[zz']\\
    &= \mathbb{E}[\sigma(z)\sigma(z')] - (\mu'_1)^2\,\rho.
\end{align*}

The proposition defines $\kappa_t$ via the integrand
$(\sigma(e^{-t}u + \sqrt{\Delta_t}v) - \mu'_1 e^{-t}u)(\sigma(e^{-t}u + \sqrt{\Delta_t}w) - \mu'_1 e^{-t}u)$,
which subtracts only $\mu'_1 e^{-t}u$ from each factor instead of the
full $\mu'_1 z$. The two definitions coincide because the missing
$\mu'_1 \sqrt{\Delta_t}v$ and $\mu'_1\sqrt{\Delta_t}w$ pieces cancel:
they are mean-zero, mutually independent, and each
of them is independent of the remaining terms in the cross expectation.
Concretely, expanding the product
\[
    (\sigma(z) - \mu'_1 e^{-t}u - \mu'_1\sqrt{\Delta_t}v)(\sigma(z') - \mu'_1 e^{-t}u - \mu'_1\sqrt{\Delta_t}w)
\]
and taking expectation, the cross terms involving $v$ or $w$ alone vanish
(because $v$ is independent of $u$ and $w$, and $\mathbb{E}[v] = 0$;
similarly for $w$), and the term $(\mu'_1)^2\Delta_t\,vw$ vanishes by
$\mathbb{E}[vw] = 0$. Hence \eqref{eq:gep:cov_step1} equals
\[
    \mathbb{E}\left[(\sigma(e^{-t}u + \sqrt{\Delta_t}v) - \mu'_1 e^{-t}u)
    (\sigma(e^{-t}u + \sqrt{\Delta_t}w) - \mu'_1 e^{-t}u)\right]
    \;=\; (\mu'_*)^2\,\kappa_t,
\]
which is the definition of $\kappa_t$ in the proposition. So
$\mathbb{E}[\vOmega_i^{\nu\alpha}\vOmega_i^{\nu\alpha'}] = \kappa_t$ for
$\alpha \ne \alpha'$.

Combining the four cases:
\[
    \mathbb{E}[\vOmega_i^{\nu\alpha}\vOmega_j^{\nu'\alpha'}]
    \;=\; \delta_{ij}\,\delta^{\nu\nu'}\,\Big(\delta^{\alpha\alpha'} + \kappa_t\,(1 - \delta^{\alpha\alpha'})\Big).
\]
This is the claimed covariance.

\emph{General isotropic data $\vSigma=\sigma^2\vI_d$.} The same computation goes through
with two changes: the pre-activations have marginal variance
$\lVert\vY^{\nu\alpha}\rVert^2/d\to\Gamma_t=\sigma^2e^{-2t}+\Delta_t$ instead of $1$, and two
copies of the same clean sample have correlation $\rho=\sigma^2e^{-2t}/\Gamma_t$ instead of
$e^{-2t}$. The Gaussian equivalence then holds with $\mu'_1$, $\mu'_*$ and $\kappa_t$ computed
for $z\sim\mathcal N(0,\Gamma_t)$ and pairs of correlation $\rho$, and the resulting
constants of the Gram matrix,
$\mu_I=\mu_*'^2(1-\kappa_t)$, $\mu_B=\mu_*'^2\kappa_t$ and $\mu_1={\mu'}_1^2$, are exactly
those of Theorem~\ref{app:thm:lin_gram_empirical} at general $\sigma^2$. The equations of
Theorem~\ref{app:thm:bias_variance} therefore hold with these constants; equivalently, the
exact reduction to $\sigma^2=1$ stated after Theorem~\ref{app:thm:bias_variance} applies.

The Gaussian Equivalence Principle of
\citet{Gerace_2020, goldt_2021, hu2023, mei2020}
asserts that, in the proportional regime $p \asymp n \asymp d$, all
asymptotic statistics of $\sigma(\vW\vY/\sqrt d)$ coincide with those obtained by replacing
$\vOmega$ by a centered Gaussian matrix with the same covariance.
Combining with the exact decomposition above gives the claimed
distributional equivalence
$\sigma(\vW\vY/\sqrt d) \stackrel{\mathrm{d}}{\sim} \mu'_1 \vW\vY/\sqrt d + \mu'_* \vOmega^{\mathrm{Gauss}}$,
where $\vOmega^{\mathrm{Gauss}}$ has the prescribed covariance and is
otherwise jointly Gaussian.

\end{proof}
\begin{proposition}
    In the overparametrized regime $p\gg n\asymp d\gg 1$, the Gram matrix of the random-features model $\vG_{\mathrm{RF}}=\frac{1}{p}\sum_{i=1}^p\sigma(\frac{\vW_i\vY}{\sqrt{d}})^\top\sigma(\frac{\vW_i\vY}{\sqrt{d}})$ converges to
    \begin{align}
        \mu_I\vI_N+\mu_B\vB_m+\mu_1\frac{\vY^\top\vY}{d}
    \end{align}
    with $\mu_I={\mu'}_*^2(1-\kappa_t),\  \mu_B={\mu'}_*^2\kappa_t,\ \mu_1={\mu'}_1^2$.
\end{proposition}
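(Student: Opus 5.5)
The plan is to write the random-features Gram matrix through the exact pointwise decomposition established in the proof of Proposition~\ref{app:prop:gep}. Writing $\vz_i^{\nu\alpha}=\vW_i\vY^{\nu\alpha}/\sqrt d$, we have $\sigma(\vz_i^{\nu\alpha})=\mu'_1\vz_i^{\nu\alpha}+\mu'_*\vOmega_i^{\nu\alpha}$, so that
\begin{align*}
  \vG_{\mathrm{RF}}={\mu'_1}^2\frac{\vY^\top\vW^\top\vW\vY}{pd}
  +\frac{\mu'_1\mu'_*}{p}\Big(\frac{\vY^\top\vW^\top\vOmega}{\sqrt d}+\frac{\vOmega^\top\vW\vY}{\sqrt d}\Big)
  +\frac{{\mu'_*}^2}{p}\vOmega^\top\vOmega .
\end{align*}
We then take $p\to\infty$ at fixed $\vY$ and treat each of the three pieces in turn.

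\emph{First piece.} Conditionally on $\vY$, the rows $\vW_i$ are i.i.d.\ and satisfy $\mathbb{E}[\vW_i^\top\vW_i]=\vI_d$. By the law of large numbers for the $d\times d$ matrix $\vW^\top\vW/p$, which has operator-norm error $O(\sqrt{d/p})\to0$ when $p\gg d$, this piece converges to $\mu_1\vY^\top\vY/d$ with $\mu_1={\mu'_1}^2$.

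\emph{Cross piece.} Entrywise, it is the average over $i$ of $\vz_i^{\nu\alpha}\vOmega_i^{\mu\beta}$. This has conditional mean $\mathbb{E}[z\,\tilde\Omega(z')]$ for a Gaussian pair of correlation $\vY^{\nu\alpha\top}\vY^{\mu\beta}/d$. By Stein's lemma, together with the orthogonality $\mathbb{E}[z\tilde\Omega(z)]=0$, this mean is exactly zero for every correlation, since $\mathbb{E}[z\sigma(z')]=\rho\mu'_1$ cancels the term $\mu'_1\rho$.

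\emph{Third piece.} The entries converge to the conditional covariance $\mathbb{E}[\tilde\Omega(z)\tilde\Omega(z')]$, which is a function $h(\vY^{\nu\alpha\top}\vY^{\mu\beta}/d)$ of the overlap. By Lemma~\ref{app:lem:overlaps}, this equals $1$ on the diagonal, $\kappa_t+O(\sqrt{\log d/d})$ within a cluster, and is small across clusters.

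Summing the three limits, ${\mu'_*}^2\vOmega^\top\vOmega/p$ becomes ${\mu'_*}^2\big[(1-\kappa_t)\vI_N+\kappa_t\vB_m\big]+{\mu'_*}^2\vH_{\mathrm{cross}}$. This gives $\mu_I={\mu'_*}^2(1-\kappa_t)$ and $\mu_B={\mu'_*}^2\kappa_t$. In fact, the limit of $\vG_{\mathrm{RF}}$ as $p\to\infty$ is exactly the inner-product kernel matrix $f(\vY^\top\vY/d)$ with $f(u)=\mathbb{E}[\sigma(z)\sigma(z')]$, so one may alternatively invoke Theorem~\ref{app:thm:lin_gram_empirical} directly and only check that the constants match. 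With $\mu'_0=0$ we have $f(0)=0$, $f'(0)={\mu'_1}^2$, and $f(\rho)-\rho f'(0)={\mu'_*}^2\mathbb{E}[\tilde\Omega\tilde\Omega']$, which yields exactly the stated $\mu_I$, $\mu_B$, $\mu_1$. Here $\mu_0=\tfrac12 f''(0)\Tr(\vSigma_t^2)/d^2$ is $O(1/d)$ and gives only a rank-one outlier.

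\emph{Main obstacle.} The delicate step is controlling the finite-$p$ fluctuations in operator norm rather than entrywise. The matrix $\vOmega^\top\vOmega/p-\mathbb{E}$ is $N\times N$ with entries of size $p^{-1/2}$, and the uniform bound needed is $O(\sqrt{N/p})$. I would obtain it by a matrix Bernstein inequality over the $p$ independent rank-one terms $\sigma(\vz_i)^\top\sigma(\vz_i)$, conditionally on $\vY$, after truncating $\sigma$ to a polynomial-growth level. The same bound handles the cross piece, so the conclusion requires $p\gg n$, matching the overparametrized hypothesis. Finally, the cross-cluster part of $h$ is handled by the Taylor and El Karoui operator-norm argument already carried out in Step~3 of Theorem~\ref{app:thm:lin_gram_empirical}.
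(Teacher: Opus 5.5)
Your proposal is correct and follows essentially the paper's route. You use the same split $\sigma(z)=\mu'_1 z+\mu'_*\tilde\Omega(z)$ into linear, cross and $\vOmega\vOmega$ pieces with the same covariance computation, while rightly noting that the Gaussian equivalence is superfluous once $p\to\infty$ (only second moments survive), and you add the operator-norm control and the rank-one $\mu_0\bm{1}_N\bm{1}_N^\top$ outlier that the paper's entrywise argument ignores.

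One small caveat: ``exactly zero'' for the cross mean and ``exactly $f(\vY^\top\vY/d)$'' for the limit hold in general only for points normalized to $\lVert\vY^{\nu\alpha}\rVert^2/d=1$. Otherwise the random-features kernel also depends on the norms, which adds vanishing corrections and, unless $\mathbb{E}[\sigma''(z)]=0$, a second rank-one term of order one; like the $\mu_0$ outlier, this leaves the bulks unchanged.
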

\begin{proof}
    Replace the random features by their Gaussian equivalent and the sum over hidden units by an expectation over $\vW$ and $\vOmega$.
    \begin{align}
        \vG_{\mathrm{RF}}^{\nu\alpha,\nu'\alpha'}&=\mathbb{E}_{\vW,\vOmega}\left[\left(\mu'_1\frac{\vW^\top \vY^{\nu\alpha}}{\sqrt{d}}+\mu'_*\vOmega^{\nu\alpha}\right)\left(\mu'_1\frac{\vW^\top \vY^{\nu'\alpha'}}{\sqrt{d}}+\mu'_*\vOmega^{\nu'\alpha'}\right)\right]\\
&={\mu'}_1^2\frac{\vY^{\nu\alpha} \cdot\vY^{\nu'\alpha'}}{d}+{\mu'}_*^2\mathbb{E}[\vOmega^{\nu\alpha} \vOmega^{\nu'\alpha'}]\\
&={\mu'}_1^2\frac{\vY^{\nu\alpha} \cdot\vY^{\nu'\alpha'}}{d}+{\mu'}_*^2\left(\delta^{\nu\nu'}\delta^{\alpha \alpha'}+\kappa_t \delta^{\nu\nu'}(1-\delta^{\alpha \alpha'})\right).
    \end{align}
    Hence
    \begin{align}
        \vG_{\mathrm{RF}}={\mu'}_1^2 \frac{\vY^\top\vY}{d}+{\mu'}_*^2(1-\kappa_t)\vI_N+{\mu'}_*^2\kappa_t\vB_m.
    \end{align}
\end{proof}
The infinite-width limit of the Gram matrix of a random-features network is thus consistent with the linear equivalent of the Gram matrix of the associated kernel.

The same Gaussian Equivalence Principle also accounts for the finite-width, infinite-$m$
limit studied by \citet{bonnaire2025diffusionmodelsdontmemorize}. These authors work with
the Hessian of the loss at initialization,
\begin{align}
    \vU^{ij}=\frac{1}{n}\sum_{\nu=1}^n\mathbb{E}_{\vxi}\left[\sigma\left(\frac{(\vW(e^{-t}\vx^\nu+\sqrt{\Delta_t}\vxi))_i}{\sqrt{d}}\right)\sigma\left(\frac{(\vW(e^{-t}\vx^\nu+\sqrt{\Delta_t}\vxi))_j}{\sqrt{d}} \right)\right],
\end{align}
a $p\times p$ matrix in feature space whose counterpart at finite $m$ is
\begin{align}
    \vU^{ij}(m)=\frac{1}{nm}\sum_{\nu=1}^n \sum_{\alpha=1}^m\sigma\left(\frac{(\vW\vY^{\nu\alpha})_i}{\sqrt{d}}\right)\sigma\left(\frac{(\vW\vY^{\nu\alpha})_j}{\sqrt{d}} \right),
\end{align}
so that $\vU=\lim_{m\to\infty}\vU(m)$. Replacing the features by their Gaussian equivalent,
\begin{align}
     \vU^{ij}(m)=\frac{1}{nm}\sum_{\nu=1}^n \sum_{\alpha=1}^m\left(\mu'_1\tfrac{(\vW\vY)_i^{\nu\alpha}}{\sqrt{d}}
       + \mu'_*\,\vOmega_i^{\nu\alpha}\right)\left(\mu'_1\,\tfrac{(\vW\vY)_j^{\nu\alpha}}{\sqrt{d}}
       + \mu'_*\,\vOmega_j^{\nu\alpha}\right),
\end{align}
and the three groups of terms average separately. In the Gaussian equivalent $\vOmega$ is
centered and uncorrelated with $\vW\vY/\sqrt d$, so the two cross terms vanish,
\begin{align}
    \frac{\mu'_1\mu'_*}{nm}\sum_{\nu,\alpha}\tfrac{(\vW\vY)_i^{\nu\alpha}}{\sqrt{d}}\,\vOmega_j^{\nu\alpha}\;\longrightarrow\;0 .
\end{align}
The $\vOmega$--$\vOmega$ term pairs every copy with itself and never with another, so the
cross-copy covariance $\kappa_t$ is never probed and
\begin{align}
    \frac{{\mu'}_*^2}{nm}\sum_{\nu,\alpha}\vOmega_i^{\nu\alpha}\vOmega_j^{\nu\alpha}\;\longrightarrow\;{\mu'}_*^2\,\delta_{ij}.
\end{align}
The average of the linear part over the copies becomes an expectation over
$\vxi\sim\mathcal{N}(0,\vI_d)$ at fixed $\vW$ and fixed data, which gives
\begin{align}
    \frac{{\mu'}_1^2}{nmd}\sum_{\nu,\alpha}(\vW\vY)_i^{\nu\alpha}(\vW\vY)_j^{\nu\alpha}
    \;\longrightarrow\;{\mu'}_1^2\left[\Delta_t\frac{(\vW\vW^\top)_{ij}}{d}+e^{-2t}\frac{(\vW\hat{\vSigma}\vW^\top)_{ij}}{d}\right],
\end{align}
with $\hat{\vSigma}=\frac1n\sum_\nu\vx^\nu(\vx^\nu)^\top$ the empirical covariance of the
clean samples. Collecting the three pieces, and using $\mu_1={\mu'}_1^2$ together with
${\mu'}_*^2=\mu_I+\mu_B$ from the proposition above,
\begin{align}
\label{eq:app:hessian_minf}
    \vU \;=\; \mu_1\,\frac{\vW\hat{\vSigma}_t\vW^\top}{d}\;+\;(\mu_I+\mu_B)\,\vI_p,
    \qquad \hat{\vSigma}_t=e^{-2t}\hat{\vSigma}+\Delta_t\vI_d .
\end{align}
Equation \eqref{eq:app:hessian_minf} is the feature-space mirror of the linear equivalent
of Theorem~\ref{app:thm:lin_gram_empirical}, and it reproduces the two-group spectrum from which \citet{bonnaire2025diffusionmodelsdontmemorize} read off the two timescales:
$\vW\hat{\vSigma}_t\vW^\top/d$ has rank $d$, so $\vU$ carries $d$ large eigenvalues, the
fast directions, above a plateau of $p-d$ eigenvalues pinned exactly at $\mu_I+\mu_B$, the
slow ones.

The comparison with the finite-$m$ Gram matrix is then transparent. Since $\kappa_t$ drops
out, $\mu_I$ and $\mu_B$ enter \eqref{eq:app:hessian_minf} only through their sum: the
$m=\infty$ Hessian has exactly the structure of our own Gram matrix at $m=1$, where
$\vB_m=\vI_N$ and $\vG_{\mathrm{lin}}=(\mu_I+\mu_B)\vI_N+\mu_1\vY^\top\vY/d$. What $m>1$ does is
resolve that single degenerate plateau into the three objects of
Theorem~\ref{app:thm:Spectrum_gram_linear}: the atom at $\mu_I$, the atom at
$\bar\mu=\mu_I+m\mu_B$ and the memorization bulk at $\mu_I+\mu_B(m-1)\Delta_t/\lambda_t$,
each of which depends on $\mu_I$ and $\mu_B$ separately and therefore on the cross-copy
correlation $\kappa_t$. Averaging over the copies before forming the matrix, as the
$m=\infty$ limit does, removes precisely the structure that the memorization bulk lives on.

\section{Experimental details}
\label{app:sect:numerical}

\subsection{General comments} \label{app:subsec:general}

\paragraph*{Code.} A reference implementation of all the numerical experiments will be made available upon publication of the paper.

\paragraph*{Generation.} Once the velocity field $\hat{\vv}(\vx_s, s)$ is estimated (either by a U-Net or by the spectrally truncated estimator $\hat{\vv}_r$) on a uniform grid of $S$ time slices $\{s_k\}_{k=0}^{S-1}\subset[0,1]$, we generate samples by integrating the rectified-flow ODE
\begin{align}\label{eq:rf_ode}
    \frac{\dd \tilde{\vx}_s}{\dd s} = \hat{\vv}(\tilde{\vx}_s, s),
    \qquad \tilde{\vx}_1 \sim \mathcal{N}(0,\vI_d),
\end{align}
backward with an Euler scheme on the same $S$ slices used at training. For the truncation experiment, at each timestep, the cross-kernel $K(\tilde{\vx}_{s_k},\vY_{s_k})$ between the current iterate and the noised training anchors is evaluated through the same closed-form NTK recursion and the velocity is computed with a single matrix-vector product against the precomputed coefficients $\valpha_r$ of the truncated estimator $\hat{\vv}_r$ of \eqref{eq:app_truncated_estimator}, i.e.\ $K(\tilde{\vx}_{s_k},\vY_{s_k})\,\valpha_r(s_k)$.

\paragraph*{Details on $f_\mathrm{mem}$.} For each generated sample $\tilde{\vx}_0$ with the backward dynamics, let $d_{\mathrm{NN}_1}$ and $d_{\mathrm{NN}_2}$ denote the cosine distances to its first and second nearest neighbors in the training set $ \{\vx^{\nu}\}_{\nu=1}^{n}$. We declare
$\tilde{\vx}_0$ memorized whenever $\frac{d_{\mathrm{NN}_1}(\tilde{\vx}_0)}{d_{\mathrm{NN}_2}(\tilde{\vx}_0)} < \tfrac{1}{3}$, and we define $f_{\mathrm{mem}}\in[0,1]$ as the fraction of generated samples that meet this condition among $2048$ generated samples. We emphasize that $f_\mathrm{mem}$ is evaluated on backward trajectories starting from training noises $\tilde{\vx}_1 \in \{\vxi^{\mu\alpha}\}$ and can therefore be viewed as a long-time version of the U-turn protocol from \citet{Behjoo_2025, sclocchi_2024}. A model that has learned the correct population velocity field should produce an image that resembles those of the data manifold but is distinct from its associated clean image, therefore producing $f_\mathrm{mem} = 0$, while a model in the memorization regime should fall back onto the clean training image and lead to $f_\mathrm{mem}=1$.

\subsection{Closed-form CNTK computation and top-$r$ eigendecomposition} \label{app:subsec:cntk}

\paragraph*{Kernel recursion.}
We follow the depth-$D$ recursion of \citet{Arora2019}, described in Sect.~\ref{app:ntk:cnn} and implemented in the authors' \href{https://github.com/ruosongwang/CNTK/tree/master}{\texttt{CNTK} repository}, except we use a $1\times 1$ readout which is equivalent to the flatten+linear output up to a $1/P$ normalization with $P=L^2$ which is a global multiplicative constant that is absorbed in the kernel-regression predictors.

\paragraph*{Top-$r$ eigenvalues.} For $N=nm < 10^4$, we materialize the empirical Gram matrix $\vG \in \mathbb{R}^{N\times N}$ and call a standard Lanczos solver (\texttt{ARPACK} package) to find its eigenvalues. For $N > 10^4$, we use the randomized SVD method from \citet{Halko2011} with a matrix-free linear operator that evaluates matrix-vector products in chunks without ever forming $\vG$ explicitly. We use an oversampling of $p=10$ columns and a second-order ($q=2$) power iteration, i.e.\ $(q+2)(r+p) \approx 4r$ matrix-vector products in total to compute the top-$r$ eigenvalues.

\paragraph*{Spectrally truncated estimator.} Writing the resulting decomposition as
$\vG\approx\vU_r\vLambda_r\vU_r\tran$, with $\vU_r$ collecting in column the top-$r$
eigenvectors of $\vG$ and $\vLambda_r$ the corresponding descending-ordered eigenvalues,
the spectrally truncated kernel regression estimator of the velocity field used in
Sect.~\ref{sect:Numerical} is
\begin{align}\label{eq:app_truncated_estimator}
    \hat{\vv}_r(\vx, s) = K(\vx,\vY)\,\valpha_r
      = K(\vx,\vY)\,\vU_r\vLambda_r^{-1}\vU_r\tran\vV ,
\end{align}
where $\vV\in\mathbb{R}^{N\times d}$ stacks the training target velocities.

\paragraph*{Truncation rank and training time.} Truncating at rank $r$ is equivalent to
fixing a training time. The gradient-flow filter of \eqref{eq:estimation_score_DSM}
activates the modes above $\lambda_c\sim nm/(d\tau)$ and leaves the others untouched, so
discarding every eigenvalue below $\lambda_r$ is the hard-threshold counterpart of stopping
at $\tau\sim nm/(d\lambda_r)$. This is what makes the truncation rank of
Sect.~\ref{sect:Numerical} a proxy for the training time, and the generalization to
memorization transition it produces the same one that the theory of
Sect.~\ref{sect:Analytical} describes as a function of $\tau$. The three rules (truncation, gradient flow, ridge) differ in the shape
of the filter, not in the order in which they activate the modes; the paragraph
\emph{Early stopping versus ridging} of Sect.~\ref{app:subsec:krr_bv} quantifies how far
the ridge and the flow depart from one another.

\begin{figure}
    \centering
    \includegraphics[width=0.75\linewidth]{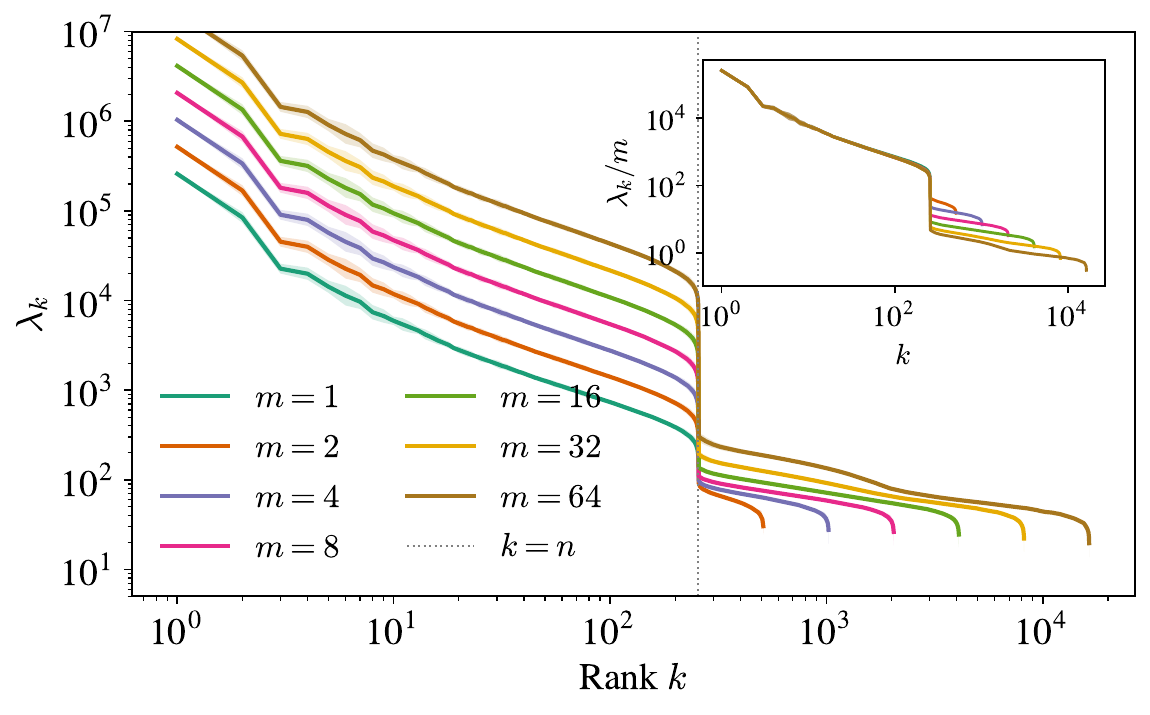}
    \caption{Ordered eigenvalues for several $m$ at fixed $n=256$ and their rescaling by $m$ in the inset. Spectra obtained at $s=0.05$ and averaged over 10 realizations of the training set.}
    \label{fig:cntk_m_scaling}
\end{figure}

\paragraph*{Scaling with $m$ of the CNTK.} Figure~\ref{fig:cntk_m_scaling} displays the evolution of the CNTK spectrum for several values of $m\in\{1,2,4,8,16,32,64\}$ at fixed $n=256$. The entire spectrum shifts with $m$: the first bulk linearly in $m$, in agreement with the theoretical prediction, and the second bulk sub-linearly, as emphasized by the inset. The figure also shows the absence of the memorization bulk at $m=1$, where the standard single-bulk spectrum is recovered.

\subsection{U-Net training and empirical NTK on a finite-width network} \label{app:subsec:unet}

At the end of Sect.~\ref{sect:Numerical}, we compute the spectrum of the time-dependent empirical NTK (eNTK) $K(\vtheta; \vx,\vx')=\sum_{i,k}\partial_{\theta_i}\vv^{(k)}_{\vtheta}(\vx)\,
\partial_{\theta_i}\vv^{(k)}_{\vtheta}(\vx')$ of a U-Net architecture.

\paragraph*{U-Net architecture and training.} The model follows the implementation of \citet{bonnaire2025diffusionmodelsdontmemorize} with three residual blocks, base width $W=32$ and channel multipliers $\{1, 2, 4\}$, trained on CelebA with $n=256$ images and $m=8$ noise realizations per data point. The model is trained on the rectified-flow objective with Adam at a learning rate of $10^{-4}$ for a total of $\tau=60\,000$ steps.

\paragraph*{Empirical NTK estimator.} Materializing the full Jacobian $\partial_{\vtheta}\vv_{\vtheta}(\vx)\in\mathbb{R}^{d\times|\vtheta|}$ is intractable, so we use a Hutchinson-style projection estimator of the empirical NTK matrix
\begin{equation}
    K_{\mathrm{eNTK}} \approx \frac{1}{n_{\mathrm{proj}}}\sum_{r=1}^{n_{\mathrm{proj}}}
    \langle\nabla_{\vtheta}(\vw_r\tran \vv_{\vtheta}(\vx)),\nabla_{\vtheta}(\vw_r\tran \vv_{\vtheta}(\vx'))\rangle,
\end{equation}
with $\vw_r\sim\mathcal{N}(0,\vI_{d})$ and $n_{\mathrm{proj}}=16$. Each scalar $\vw_{r}^{\top}\vv_{\vtheta}(\vx)$ is differentiated with a single reverse-mode pass so that the full Jacobian is never formed. The resulting gradients are batched over data points and contracted into the kernel in chunks.

\paragraph{Regularized rectified-flow training.}
We train the velocity network $\vv_{\vtheta}$ on the training set
$\{(\vx^\nu,\vxi^{\nu\alpha})\}$ with a spectrum-aware Tikhonov
penalty
\begin{equation}
  \mathcal{L}(\vtheta) = \frac{1}{nmd}\sum_{\nu=1}^n \sum_{\alpha=1}^m\mathbb{E}_{s}\,\big\|
    \vv_{\vtheta}(\vY_{s}^{\nu\alpha},s)-(\vxi^{\nu\alpha}-\vx^{\nu})\big\|_{2}^{2} + \frac{\gamma_{s, \tau}}{nmd^{2}}\|\vtheta\|_{2}^{2},
  \label{eq:reg-loss}
\end{equation}
where $\vY_{s}^{\nu\alpha}=(1-s)\vx^{\nu}+s\vxi^{\nu\alpha}$, and $\gamma_{s, \tau}$ is the regularization parameter, chosen as the $n$-th eigenvalue of the empirical NTK Gram
matrix $\vG_\tau(s) \in \mathbb{R}^{nm\times nm}$ evaluated on the full training set at flow-matching time $s$.
The additional $1/d$ factor in the prefactor of the regularization converts the traced-eNTK eigenvalues we measure into those of the full NTK under the output-isotropy assumption \eqref{eq:ntk_decomposable}. In the linearization around $\vtheta_0$,
the stationarity of Eq.~\eqref{eq:reg-loss} is equivalent to a kernel-ridge problem $(\vG(s)+\gamma\vI)\valpha=\vV$. A regularization of amplitude $\gamma_{s,\tau}=\lambda_n$ then damps the contributions of the eigenmodes of $\vG$ below the $n$-th. As the eNTK drifts with training time in the feature-learning regime, the parameter $\gamma_{s,\tau}$ has to be tracked. At step $0$ we eigendecompose $\vG(s)$ on the current parameters $\vtheta_0$ for each $s$ in the discrete grid of size $S$, and refresh every $\Delta\tau=1000$ training steps by repeating the same procedure on $\vtheta(\tau)$.
In Fig.~\ref{fig:regularization}, we display the evolution of the train and test errors corresponding to the training runs of Fig.~\ref{fig:time_evolution_NTK}. Without regularization (left panel), the test loss increases sharply at large training time, together with a steep rise in memorization, whereas with the regularization (right panel) the network stays close to the configuration of optimal test loss.

\begin{figure}
    \centering
    \includegraphics[width=0.49\linewidth]{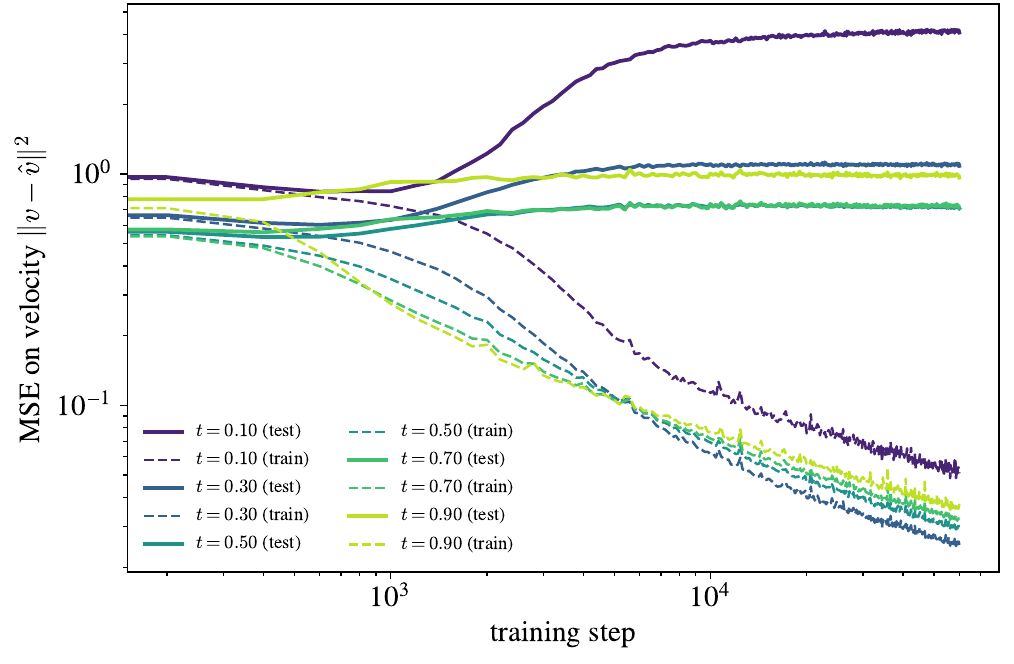}
    \includegraphics[width=0.49\linewidth]{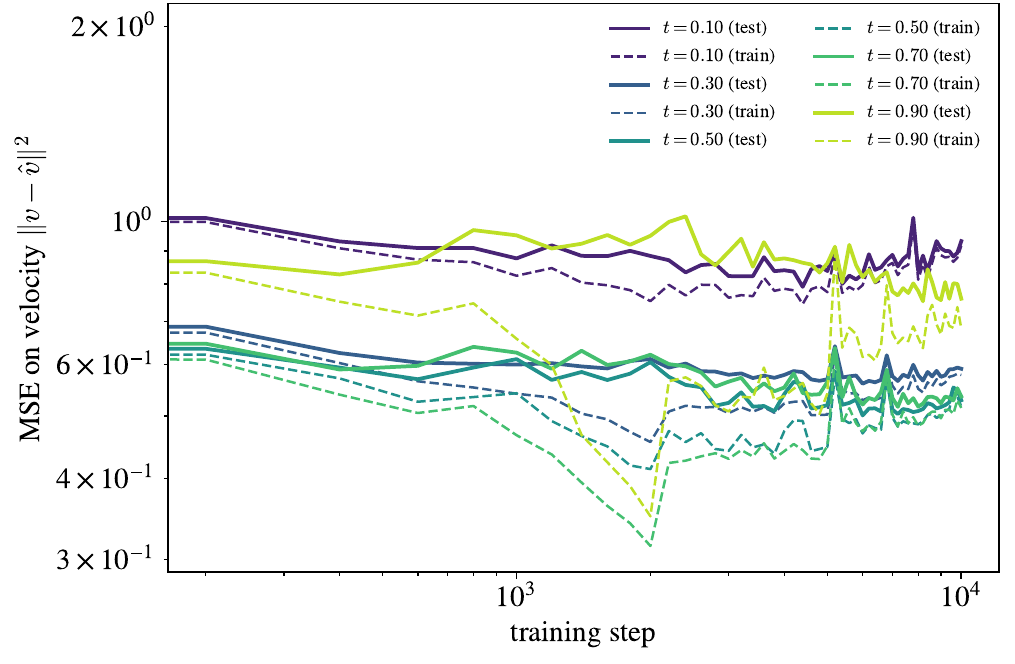}
    \caption{Training and test losses along training for the \emph{(Left)} unregularized and \emph{(Right)} regularized dynamics of Fig.~\ref{fig:time_evolution_NTK}.}
    \label{fig:regularization}
\end{figure}

\subsection{Computational resources}
All experiments were run on NVIDIA H100 GPUs: a single GPU for $n<2048$ and four GPUs in parallel for $n\geq 2048$. As an example, the truncation experiment (kernel regression and generation) for $n=2048$ takes about 4 hours on four GPUs.

\section{LLM usage}

We describe here the precise role played by large
language models (LLMs) in the preparation of this work.
\paragraph*{Code.} LLMs were used to write and debug parts of the code, both for the
numerical experiments and for the scripts producing the figures of this paper.

\paragraph*{Writing and presentation.} LLMs were used for copy-editing throughout:
correcting grammar, finding typographical errors, and harmonizing notation, style and
cross-referencing across sections.

\paragraph*{Proof assistant.} LLMs were used as a proof assistant: to check derivations step by step, to look for errors in the proofs, and to write the code performing the numerical verifications of the analytical results.

\paragraph*{Responsibility.} Every LLM-assisted derivation was checked independently and thoroughly by
the authors, and every suggested edit was reviewed before inclusion. The authors take
full responsibility for the content, the originality and the scientific integrity of this
work, including any remaining errors.

\end{document}